\newtheorem{lemma}{Lemma}
\newtheorem{definition}{Definition}
\newtheorem{theorem}{Theorem}
\newenvironment{proof}{{\noindent\it Proof.}\quad}{\hfill $\square$ \par}
\newenvironment{proof_sketch}{{\noindent\it Proof Sketch.}\quad}{\hfill $\square$ \par}
\title{Efficient Resource Allocation with Fairness Constraints in Restless Multi-Armed Bandits}
\author[1]{\href{mailto:<dexunli.2019@phdcs.smu.edu.sg>?Subject=Your UAI 2022 paper}{Dexun Li}{}}
\author[1]{Pradeep Varakantham}
\affil[1]{%
    School of Computing and Information Systems\\
    Singapore Management University\\
    Singapore
}
\begin{document}
\maketitle

\begin{abstract}
  Restless Multi-Armed Bandits (RMAB) is an apt model to represent decision-making problems in public health interventions (e.g., tuberculosis, maternal, and child care), anti-poaching planning, sensor monitoring, personalized recommendations and many more. Existing research in RMAB has contributed mechanisms and theoretical results to a wide variety of settings, where the focus is on maximizing expected value. In this paper, we are interested in ensuring that RMAB decision making is also fair to different arms while maximizing expected value. In the context of public health settings, this would ensure that different people and/or communities are fairly represented while making public health intervention decisions. To achieve this goal, we formally define the fairness constraints in RMAB and provide planning and learning methods to solve RMAB in a fair manner. We demonstrate key theoretical properties of fair RMAB and experimentally demonstrate that our proposed methods handle fairness constraints without sacrificing significantly on solution quality. 
\end{abstract}

\section{Introduction}\label{sec:intro}
Picking the right time and manner of limited interventions is a problem of great practical importance in tuberculosis~\citep{mate2020collapsing}, maternal and child care~\citep{biswas2021learn,mate2021risk}, anti-poaching operations~\citep{qian2016restless},  cancer detection~\citep{lee2019optimal}, and many others. All these problems are characterized by multiple arms (i.e., patients, pregnant mothers, regions of a forest) whose state evolves in an uncertain manner (e.g.,  medication usage in the case of tuberculosis, engagement patterns of mothers on calls related to good practices in pregnancy) and threads moving to "bad" states have to be steered to "good" outcomes through interventions. The key challenge is that the number of interventions is limited due to a limited set of resources (e.g., public health workers, patrol officers in anti-poaching operations). Restless Multi-Armed Bandits (RMAB), a generalization of Multi-Armed Bandits (MAB) that allows non-active bandits to also undergo the Markovian state transition, has become an ideal model to represent the aforementioned problems of interest as it models uncertainty in arm transitions (to capture uncertain state evolution), actions (to represent interventions) and budget constraint (to represent limited resources). 

Existing work~\citep{mate2020collapsing,biswas2021learn,mate2021efficient} has focused on developing theoretical insights and practically efficient methods to solve RMAB. At each decision epoch, RMAB methods identify arms that provide the biggest improvement with an intervention. Such an approach though technically optimal can result in certain arms (or type of arms) getting starved for interventions.

In the case of interventions with regards to public health, RMAB algorithms focus interventions on the top beneficiaries who will improve the objective (public health outcomes) the most. This can result in certain beneficiaries never talking to public health workers and thereby moving to bad states (and potentially also impacting other beneficiaries in the same community) from where improvements can be minor even with intervention and hence never getting picked by RMAB algorithms. 
As shown in Fig.~\ref{fig:case}, when using the Threshold Whittle index approach proposed by~\citet{mate2020collapsing}, the arm activation probability is lopsided, with 30\% of arms getting activated more than 50 times and 50\% of the arms are never activated.
Such starvation of interventions can result in arms moving to a bad state from where interventions cannot provide big improvements and therefore there is further starvation of interventions for those arms. Such starvation can happen to entire regions or communities, resulting in lack of fair support for beneficiaries in those regions/communities. To avoid such cycles between bad outcomes, there is a need for RMAB algorithms to consider fairness in addition to maximizing expected reward when picking arms. 
Risk sensitive RMAB~\citep{mate2021risk} considers an objective that targets to reduce such starvation, however, they \emph{do not guarantee} that arms (or types of arms) are picked a minimum number of times. 

\begin{figure}[ht]
\centering
    \includegraphics[width=0.99\linewidth]{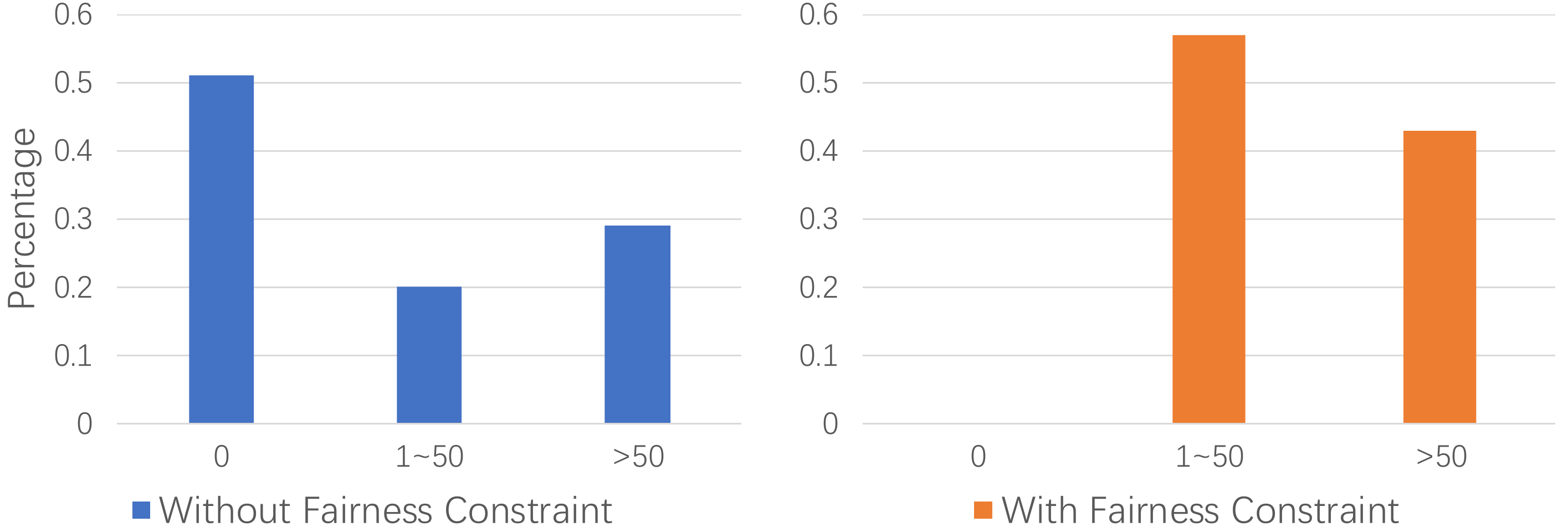}
    \caption{The x-axis is the number of times activated, and the y-axis is the percentage of each frequency range. We consider the RMAB given in Section~\ref{sec:prob}, with $k=10$, $N=100$, $T=1000$ and $L=50$, $\eta=2$. Left: the result of using the Whittle index algorithm without considering fairness constraints. Right: the result of  when considering fairness constraints. As can be noted, without fairness constraints in place, almost 50\% of the arms never get activated. }
    \label{fig:case}
\end{figure}

Recent work in Multi-Armed Bandits (MAB) has presented different notions of fairness. For example, 
\citet{li2019combinatorial} study a Combinatorial Sleeping MAB model with Fairness constraints, called CSMAB-F. The fairness constraints ensure a minimum selection fraction for each arm.
\citet{patil2020achieving} introduce similar fairness constraints in the stochastic multi-armed bandit problem, where they use a pre-specified vector to denote the guaranteed number of pulls.
\citet{joseph2016fairness} define fairness as saying that a worse arm should not be picked compared to a better arm, despite the uncertainty on payoffs.
\citet{chen2020fair} define the fairness constraint as a minimum rate that is required when allocating a task or resource to a user.
The above fairness definitions are relevant and we generalize from these to propose a fairness notion for RMAB. Unfortunately, approaches developed for fair MAB cannot be utilized for RMAB, due to uncertain state transitions with passive actions as well.

\paragraph{Contributions:} To the best of our knowledge, this is the first paper to consider fairness constraints in RMAB. Here are the key contributions:
\begin{itemize}
    \item We propose a fairness constraint wherein for any arm (or more generally, for a type of arm), we require that the number of decision epochs since the arm (or the type of arm) was activated last time is upper bounded. This will ensure that every arm (or type of arm) gets activated a minimum number of times, thus generalizing on the fairness notions in MAB described earlier. 
    \item We provide a modification to the Whittle index algorithm that is scalable and optimal while being able to handle both finite and infinite horizon cases. 
    We also provide a model-free learning method to solve the problem when the transition probabilities are not known beforehand.
    \item Experiment results on the generated dataset show that our proposed approaches can achieve good performance while still satisfying the fairness constraint.
\end{itemize}

\section{Problem Description}
\label{sec:prob}

In this section, we formally introduce the RMAB problem.
There are $N$ independent arms, each of which evolves according to an associated Markov Decision Process (MDP). An MDP is characterized by a tuple $\{\mathcal{S}, \mathcal{A}, \mathcal{P}, r\}$, where $\mathcal{S}$ represents the state space, $\mathcal{A}$ represents the action space, $\mathcal{P}$ represents the transition function, and $r$ is the state-dependent reward function.
Specifically, each arm has a binary-state space: $1$ ("good") and $0$ ("bad"), with action-dependent transition matrix $\mathcal{P}$ that is potentially different for each arm. Let $a_t^i\in \{ 0,1 \}$ denote the action taken at time step $t$ for arm $i$, and $a_t^i=1 (a_t^i=0)$ indicates an active (passive) action for arm $i$. Due to limited resources, at each decision epoch, the decision-maker can activate (or intervene on) at most $k$ out of $N$ arms and receive reward accrued from all arms determined by their states. $\sum_{i=1}^N a_{t}^i = k $ describes this limited resource constraint. Figure~\ref{fig:multi_channel} provides an example of an arm in RMAB. 

\begin{figure}[ht]
\centering
    \includegraphics[width=0.95\linewidth]{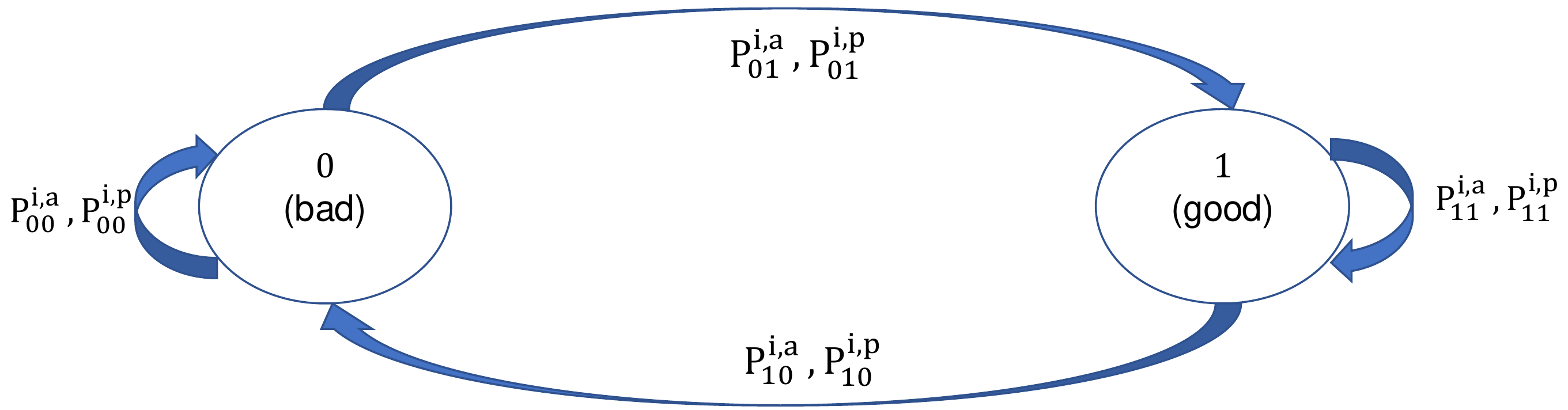}
    \caption{$a$ and $p$ denote the active and passive actions on arm $i$ respectively.  $P_{s,s^\prime}^{i,a}$ and $P_{s,s^\prime}^{i,p}$ are the transition probabilities from state $s$ to state $s^\prime$ under action $a$ and $p$ respectively for arm $i$.}
    \label{fig:multi_channel}
\end{figure}

The state of arm $i$ evolves according to the transition matrix $P_{s,s^\prime}^{a,i}$ for the active action and $P_{s,s^\prime}^{p,i}$ for the passive action. We follow the setting in~\citet{mate2020collapsing}, when the arm $i$ is activated, the latent state of arm $i$ will be fully observed by the decision-maker. The states of passive arms are unobserved by the decision-maker.

When considering such partially observable problem, it is sufficient to let the MDP state be the belief state: the probability that the arm is in the "good" state. We need to keep track of the belief state on the current state of the unobserved arm. This can be derived from the decision-maker's partial information which is encompassed by the last observed state and the number of decision time steps since the last activation of the arm. Let $\omega^i_s(u)$ denote the belief state, i.e., the probability that the state of arm $i$ is $1$ when it was activated $u$ time steps ago with the observed state $s$. 
The belief state in next time step can be obtained by solving the following recursive equations:
\begin{equation}\label{eq:1}
\resizebox{\linewidth}{!}{$
    \displaystyle
\begin{aligned}
  \omega^i_s(u+1) &= 
    \begin{cases}
      \omega^i_s(u) P_{1,1}^{p,i}+(1-\omega^i_s(u))P_{0,1}^{p,i}  &  \text{ passive}\\
      P_{s^\prime, 1}^{i,a}&  \text{ active}\\
    \end{cases}
\end{aligned}
$}
\end{equation}
Where $s^\prime$ is the new state observed for arm $i$ when the active action was taken. The belief state can be calculated in closed form with the given transition probabilities. We let $\omega=\omega^i_s(u+1)$ for ease of explanation when there is no ambiguity.

A policy $\pi$ maps the belief state vector $\Omega_t=\{\omega^1_t,\cdots,\omega^N_t \}$ at each time step $t$ for all arms to the action vector, $a_t=\{ 0,1\}^N$. Here $\omega^i_t$ is the belief state for arm $i$ at time step $t$. We want to design an optimal policy to maximize the cumulative long-term reward over all the arms. One widely used performance measure is the expected discounted reward over the horizon $T$ : $$\mathbb{E}_{\pi}[\sum_{t=1}^T  \beta^{t-1}R_t(\Omega_t,\pi(\Omega_t))|\Omega_0]$$ 
Here $R_t(\Omega_t,\pi(\Omega_t))$ is the reward obtained in slot $t$ under action $a_t=\pi(\Omega_t)$ determined by policy $\pi$, $\beta$ is the discount factor.
As we discussed in the introduction, in addition to maximizing the cumulative reward, ensuring fairness among the arms is also a key design concern for many real-world applications. In order to model the fairness requirement, we introduce constraints that ensure that any arm (or kind of arms) is activated at least $\eta$ times during any decision interval of length $L$.
The overall optimization problem corresponding to the problem at hand is thus given by: 
\begin{equation}
\resizebox{\linewidth}{!}{$
    \displaystyle
    \begin{aligned}
    \underset{\pi}{\text{maximize  }} &\mathbb{E}_{\pi}[\sum_{t=1}^T  \beta^{t-1}R_t(\Omega_t,\pi(\Omega_t))|\Omega_0]\\
    \text{subject to } &\sum_{i}^N a_t^i=k,\forall t\in \{ 1,\dots,T \}\\
    &\sum_{t=u}^{u+L}a_{t}^i\geq \eta \quad \forall u\in \{ 1,\dots,T-L \}, \forall i \in \{ 1, \dots, N \}.
    \end{aligned}
    $}
\end{equation}
$\eta$ is the minimum number of times an arm should be activated in a decision period of length $L$. The strength of fairness constraints is thus governed by the combination of $L$ and $\eta$. Obviously, this requires $k\times L>N\times(\eta-1)$ as the fairness constraint should meet the resource constraint. This fairness problem can be formulated at the level of regions/communities by also summing over all the arms, $i$ in a region in the second constraint, i.e.,
$$\sum_{i \in r} \sum_{t = u}^{u+L} a_t^i \geq \eta$$
Our approaches with a simple modification are also applicable to this fairness constraint at the level of regions/communities.

\section{Background: Whittle Index}

In this section, we describe the Whittle Index algorithm~\citep{Whittle1988restless} to solve RMAB. This algorithm at every time step, computes index values (Whittle Index values) for every arm and then activates the arms that have the top "$k$" index values. Whittle index quantifies how appealing it is to activate a certain arm. This algorithm provides optimal solutions if the underlying RMAB satisfies the indexability property, defined in Definition~\ref{def:Indexability}. 

Formally\footnote{Since we will only be talking about one arm at a time step, we will abuse the notation by not indexing belief, action and value function with arm id or time index.}, the Whittle index of an arm in a belief state $\omega$ (i.e., the probability of good state 1) is the minimum subsidy $\lambda$ such that it is optimal to make the arm passive in that belief state. Let $V_{\lambda, T}(\omega)$ denote the value function for the belief state $\omega$ over a horizon $T$. Then it could be written as
\begin{equation}
     V_{\lambda,T}(\omega) = \max \{ V_{\lambda,T}(\omega; a=0),V_{\lambda,T}(\omega; a=1)\},
\end{equation}
where $V_{\lambda,T}(\omega; a=0)$ and $V_{\lambda,T}(\omega; a=1)$ denote the value function when taking passive and active actions respectively at the first decision epoch followed by optimal policy in the future time steps. Because the expected immediate reward is $\omega$ and subsidy for a passive action is $\lambda$, we have the value function for passive action as:
\begin{align}\label{eq:a=0}
            V_{\lambda,T}(\omega,a=0)&=\lambda+ \omega+ \beta V_{\lambda,T-1}(\tau^1(\omega)),
\end{align}
where $\tau^1(\omega)$ is the $1$-step belief state update of $\omega$ when the passive arm is unobserved for another $1$ consecutive slot (see the update rule in Eq.~\ref{eq:1}). Note that $\omega$ is also the expected reward associated with that belief state. For an active action, the immediate reward is $\omega$ and there is no subsidy. However, the actual state will be known and then evolve according to the transition matrix for the next step:
\begin{align}
\label{eq:a=1}
    V_{\lambda,T}(\omega,a=1)=\omega+ & \beta( \omega V_{\lambda,T-1}(P_{1,1}^a)+\nonumber\\
    & (1-\omega)V_{\lambda,T-1}(P_{0,1}^a)).
\end{align}

\begin{definition}\label{def:Indexability}
An arm is indexable if the passive set under the subsidy $\lambda$ given as $\mathcal{P}_\lambda=\{\omega: V_{\lambda,T}(\omega,a=0)\geq V_{\lambda,T}(\omega,a=1) \}$
monotonically increases from $\emptyset$ to the entire state space as $\lambda$ increases from $-\infty$ to $\infty$. The RMAB is indexable if every arm is indexable.
\end{definition}

Intuitively, this means that if an arm takes passive action with subsidy $\lambda$, it will also take passive action if $\lambda^\prime > \lambda$.  Given the \textit{indexability},  $W_T(\omega)$ is the least subsidy, $\lambda$ that makes it equally desirable to take active and passive actions.
\begin{equation}\label{eq:Whittle_index_eq}
    W_T(\omega) = \underset{\lambda}{\inf}\{ \lambda: V_{\lambda,T}(\omega;a=1)\leq V_{\lambda,T}(\omega;a=0) \}
\end{equation}

\begin{definition}
A policy is a threshold policy if there exists a threshold $\lambda_{th}$ such that the action is passive $a=0$ if $\lambda>\lambda_{th}$ and $a=1$ otherwise.
\end{definition}
Existing efficient methods for solving RMABs derive these threshold policies.

\section{Fairness in RMAB}

The key advantage of a Whittle index based approach is scalability without sacrificing solution quality.  In this section, we provide Whittle index based approaches to handle fairness constraints under known and unknown transition models, with both infinite and finite horizon settings. We specifically consider partially observable settings\footnote{We also provide a discussion about fully observable  setting in the appendix}. 

\subsection{Infinite Horizon}

When we need to consider the partial observability of the state of the RMAB problem, it is sufficient to let the MDP state be the belief state: the probability that the arm is in the "good" state~\citep{kaelbling1998planning}. As a result, the partially observable RMAB has a large number of belief states~\citep{mate2020collapsing}. 

Recall that the definition of the Whittle index $W_T(\omega)$ of belief state $\omega$ is the smallest $\lambda$ s.t. it is optimal to make the arm passive in the current state. We can compute the Whittle index value for each arm, and then rank the index value of all $N$ arms and select top $k$ arms at each time step to activate. With fairness constraints, the change to the approach is minimal and intuitive. \textbf{\textit{The optimal policy is to choose the arms with the top "k" index values until a fairness constraint is violated for an arm. In that time step, we replace the last arm in top-$k$ with the arm for which fairness constraint is violated.}} We show that this simple change works across the board for the infinite and finite horizon, fully and partially observable settings. We provide the detailed algorithm in Algorithm~\ref{al:alg1} and also provide sufficient conditions under which the Algorithm~\ref{al:alg1} is optimal.

We now provide the expression for $\lambda$. $V_{\lambda, \infty}(\omega)$ denotes the value that can be accrued from a single-armed bandit process with subsidy $\lambda$  over infinite time horizon ($T\rightarrow \infty$) if the belief state is $\omega$. Therefore, we have:
\begin{equation}\label{eq:value_update}
\resizebox{\linewidth}{!}{$
    \displaystyle
\begin{aligned}
  V_{\lambda,\infty}(\omega) &= \max
    \begin{cases}
      \lambda+\omega +\beta  V_{\lambda,\infty}(\tau^1(\omega)) & \text{passive}\\
       \omega+\beta\left(\omega V_{\lambda,\infty}(P_{1,1}^a)+(1-\omega) V_{\lambda,\infty}(P_{0,1}^a)\right) & \text{active}
    \end{cases}
\end{aligned}
$}
\end{equation}

For any belief state $\omega$, the $u$-steps belief update $\tau^u(\omega)$ will converge to $\omega^{\ast}$ as $u\rightarrow \infty$, where $\omega^{\ast}=\frac{P_{0,1}^p}{1+P_{0,1}^p-P_{1,1}^p}$. It should be noted that this convergence can happen in two ways depending on the state transition patterns:
\begin{itemize}
\item{Case 1}: Positively correlated channel $(P_{1,1}^{p}\geq P_{0,1}^{p})$.

The belief update process is shown in Figure~\ref{fig:belief_update}.
We can see that for the positively correlated case, they have a monotonous belief update process. 

\begin{figure}[ht]
    \centering
    \includegraphics[width=0.95\linewidth]{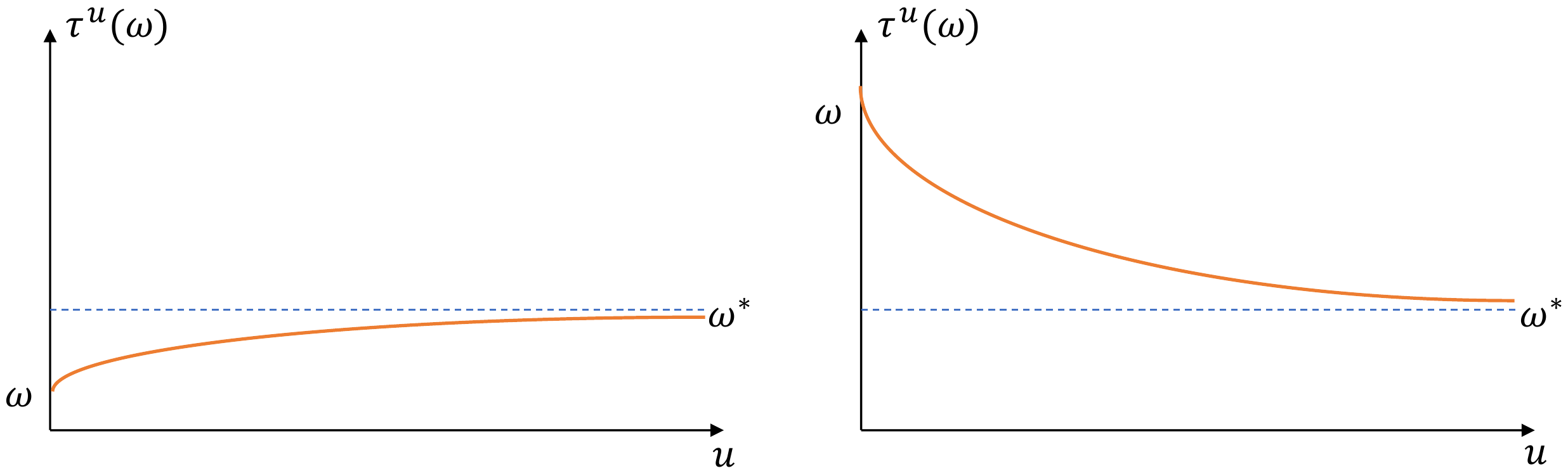}
    \caption{The $u$-step belief update of an unobserved arm $(P_{1,1}^p\geq P_{0,1}^p)$}
    \label{fig:belief_update}
\end{figure}

We first consider the \emph{non-increasing belief process} as indicated in the right graph. Formally, for $\forall u\in\mathbb{N}^+$, we have $\omega(u)\geq\omega(u+1)$ if the initial belief state $\omega$ is above the convergence value. Similarly, for the \emph{increasing belief process} shown in the left graph, we have the initial belief state $\omega<\omega^\ast$.

\item{Case 2}: Negatively correlated channel $(P_{1,1}^{p}< P_{0,1}^{p})$.
\begin{figure}[ht]
    \centering
    \includegraphics[width=0.95\linewidth]{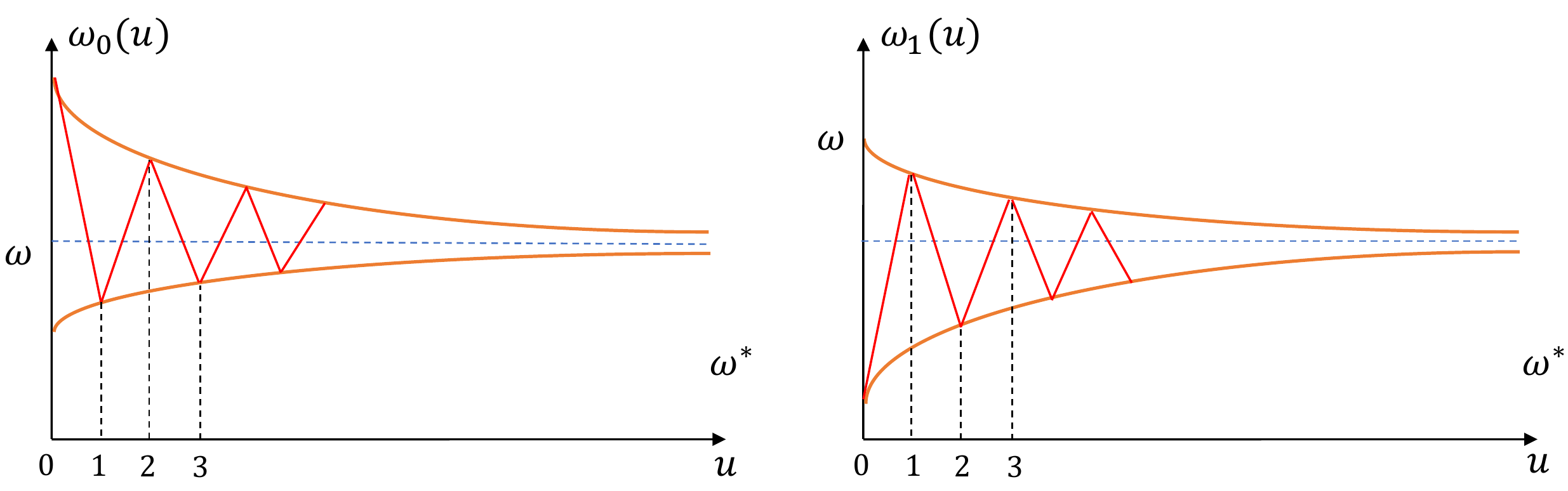}
    \caption{The $u$-step belief update of an unobserved arm $(P_{1,1}^p < P_{0,1}^p)$}
    \label{fig:belief_update2}
\end{figure}

The belief state converges to $\omega^{\ast}$ from the opposite direction as shown in Figure~\ref{fig:belief_update2}. This case has similar properties and is less common in the real world because it is more likely to remain in a good state than to move from a bad state to a good state. Therefore, we omit the lengthy discussion.
\end{itemize}

The belief state transition patterns are of particular importance because in proving optimality of Algorithm~\ref{al:alg1}, the belief evolution pattern for the arm (whose fairness constraint will be violated) plays a crucial role.

\begin{algorithm}[ht]
\caption{Fair Whittle Thresholding (FaWT)}
\label{al:alg1}
\SetAlgoLined
\KwIn{Transition matrix $\mathcal{P}$, fairness constraint, $\eta$ and $L$, set of belief states $\{\omega^1,\dots,\omega^N\}$, $k$}
\For{each arm i in $1$ to $N$}{
            Compute the corresponding Whittle index $TW(\omega^i)$ under the infinite horizon using the \textit{Forward and Reverse Threshold} policy\;
            \If{the activation frequency $\eta$ for arm $i$ will not be satisfied at the end of the period of length $L$}{
                Add arm $i$ to the action set $\phi$\;
                $k = k - 1$\;
            }
            
            \If{finite horizon} {
                Compute the the index value $W_1(\omega^i)$\;

                Compute the Whittle index $W_T(\omega^i)$ using Equation~\ref{eq:finite_Whittle}\;
            }

}
Add arms with top k highest $TW(\cdot)$ (for infinite horizon case) or $W_T(\cdot)$ (for finite horizon case) values to the action set $\phi$
Decrease the residual time horizon by $T=T-1$\;
\KwOut{Action set $\phi$}

\end{algorithm}

\begin{theorem}\label{thm:FC_infinite}
For infinite time horizon ($T\rightarrow\infty$) RMAB with Fairness Constraints governed by parameters $\eta$ and $L$, Algorithm~\ref{al:alg1} ( i.e., activating arm $i$ at the end of the time period when its fairness constraint is violated) is optimal:
\begin{enumerate}[leftmargin=*]
    \item For $\omega^i \leq \omega^*$ (\emph{increasing belief process}), if
    \begin{align}
(P_{1,1}^{i,p}-P_{0,1}^{i,p})\left(1+\frac{\beta \Delta_3}{1-\beta} \right)&{\left(1-\beta(P_{1,1}^{i,a}-P_{0,1}^{i,a})\right)} \nonumber\\
& \leq (P_{1,1}^{i,a} - P_{0,1}^{i,a})
\end{align}
 $\Delta_3 = \min \{(P_{1,1}^{i,p}-P_{0,1}^{i,p}),(P_{1,1}^{i,a}-P_{0,1}^{i,a})\}$. 

\item For $\omega^i \geq \omega^\ast$ (\emph{non-increasing belief process}), if:
\begin{align}
(P_{1,1}^{i,p}-&P_{0,1}^{i,p})(1-\beta)\Delta_1 \geq \nonumber\\
&(P_{1,1}^{i,a} - P_{0,1}^{i,a})\left(1-\beta(P_{1,1}^{i,a} - P_{0,1}^{i,a})\right)
\end{align}
 $\Delta_1 = \min \{ 1, 1+\beta(P_{1,1}^{i,p}-P_{0,1}^{i,p})-\beta(P_{1,1}^{i,a}-P_{0,1}^{i,a}) \}$

\end{enumerate}
\end{theorem}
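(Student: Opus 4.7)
The plan is to prove Theorem~\ref{thm:FC_infinite} by an exchange argument on the schedule of the forced activation for arm $i$. The only place where Algorithm~\ref{al:alg1} can depart from a pure Whittle-index top-$k$ rule is at the end of the $L$-step window, where arm $i$ is activated to prevent a fairness violation. Since all remaining slots are still filled by the top-$k$ rule, the contribution of the other arms is unaffected when we move arm $i$'s forced activation earlier within the window, so optimality reduces to a per-arm comparison for arm $i$: I need to show that activating one step later along its passive belief trajectory $\tau^{u}(\omega^{i})$ is weakly preferred to activating now, and then iterate.

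First I would make that single-arm comparison precise via the infinite-horizon recursion~\ref{eq:value_update}. Using the closed form $\omega(u) = \omega^{\ast} + (P_{1,1}^{p}-P_{0,1}^{p})^{u}(\omega^{i}-\omega^{\ast})$ for the passive trajectory, the difference $D(\omega)$ between the discounted total return of ``activate one step later'' and ``activate now'' can be written as a function of $\omega$, the two transition gaps $P_{1,1}^{p}-P_{0,1}^{p}$ and $P_{1,1}^{a}-P_{0,1}^{a}$, and the discount factor $\beta$. A telescoping application of this one-step comparison along the trajectory then gives that postponing the forced activation all the way to the last feasible time step inside the window is weakly optimal.

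Next I would evaluate $D(\omega)$ using the threshold structure of the optimal single-arm policy guaranteed by indexability. On the optimal branch, $V_{\lambda,\infty}$ admits a geometric-series unrolling in $\beta(P_{1,1}^{p}-P_{0,1}^{p})$ whose tail terminates at the two active-side fixed points $V_{\lambda,\infty}(P_{1,1}^{a})$ and $V_{\lambda,\infty}(P_{0,1}^{a})$. After collecting terms, the sign of $D(\omega)$ reduces to a comparison between a multiple of $P_{1,1}^{p}-P_{0,1}^{p}$ and a multiple of $P_{1,1}^{a}-P_{0,1}^{a}$. For the non-increasing belief process ($\omega^{i}\geq\omega^{\ast}$) the trajectory drifts downward toward $\omega^{\ast}$, which tilts the inequality in one direction and gives the second condition of the theorem, with $\Delta_{1}$ emerging as a uniform lower bound on the passive-continuation slope along the trajectory. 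For the increasing case ($\omega^{i}\leq\omega^{\ast}$) the symmetric argument yields the first condition with $\Delta_{3}$ playing the analogous role.

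The hardest step will be the uniform bounding that produces $\Delta_{1}$ and $\Delta_{3}$. Each is a minimum of two expressions, reflecting that the worst case over $\omega$ on the trajectory is controlled either by the endpoint $\omega^{\ast}$ or by the one-step passive-active coupling $1+\beta(P_{1,1}^{p}-P_{0,1}^{p})-\beta(P_{1,1}^{a}-P_{0,1}^{a})$. The algebra in the geometric-series expansion is routine; the real work lies in verifying that the case-split inside each $\min$ is tight and that $D(\omega)\geq 0$ holds uniformly along the entire belief trajectory rather than only at its endpoints. A final sanity check is that, once arm $i$ is pinned to the end of its window, the remaining slots are allocated optimally by the top-$k$ Whittle rule, which is the standard indexability consequence and so does not need to be re-derived.
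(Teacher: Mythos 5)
Your proposal follows essentially the same route as the paper's proof: reduce to a single-arm deferral comparison for arm $i$ (moving the forced activation cannot increase the other arms' value under the top-$k$ Whittle rule), show the activate-versus-passive gap is monotone along the passive belief trajectory so that postponing the forced activation to the end of the window is optimal, and obtain the stated conditions from geometric-series bounds on value-function differences and slopes, whose passive/active branch minima are exactly what produce $\Delta_1$ and $\Delta_3$. The paper formalizes your one-step difference $D(\omega)$ as the requirement $\frac{\partial\left(V_{\lambda,\infty}(\omega,a=1)-V_{\lambda,\infty}(\omega,a=0)\right)}{\partial \omega}\cdot\frac{\partial \omega}{\partial t}\geq 0$ via the chain rule and its Lemmas~\ref{lem:boundary}--\ref{lem:partial_value_bound_upper_infinite}, but this is the same argument.
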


\begin{proof_sketch}
Consider an arm $i$ that  has not been activated for $L-1$ time slots. In such a case, Algorithm~\ref{al:alg1} will select arm $i$ to activate in the next time step $t=L$. Define the intervention effect of activating arm $i$ as $$V_{\lambda,\infty}(\omega,a=1)-V_{\lambda,\infty}(\omega,a=0)$$
Following standard practice and for notational convenience, we do not index the intervention effect and value functions with $i$. 
Due to independent evolution of arms, moving active action of arm $i$ does not result in a greater value function for other arms according to the Whittle index algorithm, thus it suffices to only consider arm $i$. Here is the proof flow:\\
\noindent (1) Algorithm~\ref{al:alg1} optimality requires that the intervention effect at time step $t=L-1$ is smaller than intervention effect at $t=L$. Optimality can be established by requiring the partial derivative of the intervention effect w.r.t. time step $t$ is greater than 0.\\
(2) However, computing this partial derivative $\frac{\partial(V_{\lambda,\infty}(\omega,a=1)-V_{\lambda,\infty}(\omega,a=0)) }{\partial t}$ is difficult because value function expression is complex. We use chain rule to get:
$$ \frac{\partial(V_{\lambda,\infty}(\omega,a=1)-V_{\lambda,\infty}(\omega,a=0)) }{\partial \omega} \cdot \frac{\partial(\omega)}{\partial(t)}$$
(3) The sign of second term, $\frac{\partial \omega}{\partial t}$ is based on the belief state transition pattern described before this theorem. We then need to consider the sign of the first term, $\frac{\partial(V_{\lambda,\infty}(\omega,a=1)-V_{\lambda,\infty}(\omega,a=0)) }{\partial \omega}$.\\
(4) We can compute this by deriving the bound on $V_{\lambda,\infty}(\omega_1)-V_{\lambda,\infty}(\omega_2), \forall \omega_1, \omega_2$ as well as bounds on $\frac{\partial V_{\lambda,\infty}(\omega)}{\partial \omega}$. Detailed proof in appendix.\end{proof_sketch}

\begin{figure}
    \centering
    \includegraphics[width=0.95\linewidth]{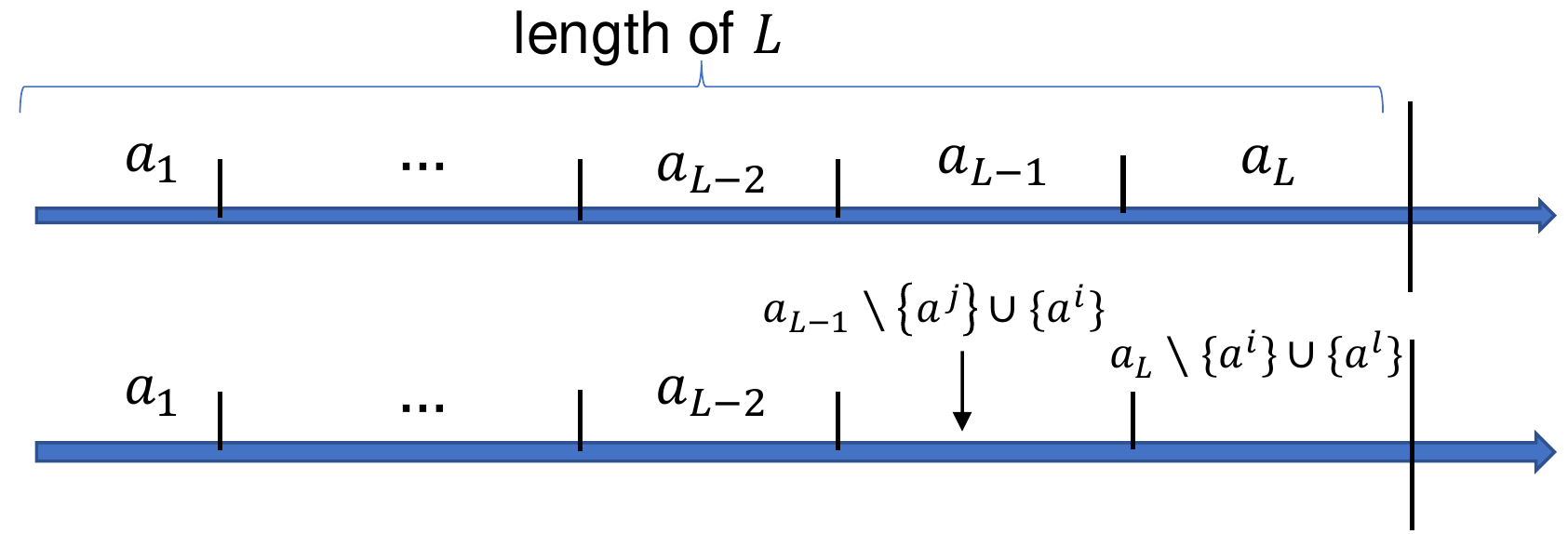}
    \caption{The action vector for RMAB is $a_t$ at time step $t$. Then we move the action $a^i$ that satisfies fairness constraint to earlier slot and replace $k$-th ranked action $a^j$. Action $a^l$ is then added according to the index value at the end.}
    \label{fig:fully_infinite}
\end{figure}

\subsection{Finite Horizon} 
In this part, we demonstrate that the mechanism developed for handling fairness in the infinite horizon setting can also be applied to the finite horizon setting. In showing this, we address two key challenges: 
\begin{enumerate}
\item  Computing the Whittle index under the finite horizon setting in a scalable manner. 
\item  Showing that Whittle index value reduces as residual horizon decreases. This will assist in showing that it is optimal to activate the fairness violating arm at the absolute last step where a violation will happen and not earlier; 
\end{enumerate}

It is costly to compute the index under the finite horizon setting -- $O(|\mathcal{S}|^kT)$ time and space complexity~\citep{hu2017asymptotically}. Therefore, we 
take advantage of the fact that the index value has an upper and lower bound, and it will converge to the upper bound as the time horizon $T\rightarrow \infty$. Specifically, we use an appropriate functional form to approximate the index value. To do this, we first show gradual Index decay ($\lambda_T > \lambda_{T-1} > \lambda_0$) by improving on the Index decay ($\lambda_T > \lambda_0$) introduced in ~\citep{mate2021efficient}. 
\begin{theorem}\label{thm:decay}
For a finite horizon $T$, the Whittle index $\lambda_T$ is the value that satisfies the equation $V_{\lambda_T,T}(\omega,a=0)= V_{\lambda_T,T}(\omega,a=1)$ for the belief state $\omega$. Assuming indexability holds, the Whittle index will decay as the value of horizon $T$ decreases: $\forall T>1: \lambda_{T+1}>\lambda_{T}>\lambda_0 = 0$.
\end{theorem}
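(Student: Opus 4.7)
The plan is to induct on the horizon $T$. Define the intervention benefit at subsidy $\lambda$ by $g_T(\lambda) = V_{\lambda,T}(\omega,a=1) - V_{\lambda,T}(\omega,a=0)$. By indexability (Definition~\ref{def:Indexability}), $g_T$ is monotonically non-increasing in $\lambda$, and the Whittle index $\lambda_T$ is its unique zero. So proving $\lambda_{T+1} > \lambda_T$ reduces to showing $g_{T+1}(\lambda_T) > 0$, since in that case $\lambda$ would need to be raised above $\lambda_T$ to restore indifference at horizon $T+1$. I will show $\lambda_T > 0$ as a by-product of the same induction.

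For the base case, I would substitute $\lambda=0$ into Eqs.~(\ref{eq:a=0})--(\ref{eq:a=1}) at $T=2$ and use $V_{0,1}(\omega')=\omega'$ to obtain
\begin{align*}
g_2(0) = \beta\{\omega(P_{1,1}^{a}-P_{1,1}^{p}) + (1-\omega)(P_{0,1}^{a}-P_{0,1}^{p})\},
\end{align*}
which is strictly positive under the standing assumption that active intervention dominates passive transition into the ``good'' state. Hence $\lambda_2 > 0 = \lambda_0$, giving the anchor of both $\lambda_T > 0$ and $\lambda_T > \lambda_{T-1}$.

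For the inductive step, fix $T\geq 2$ and assume $\lambda_T > \lambda_{T-1}$. I would expand $g_{T+1}(\lambda_T)$ using Eqs.~(\ref{eq:a=0})--(\ref{eq:a=1}) and then eliminate $\lambda_T$ together with the immediate reward $\omega$ by substituting the identity obtained from $g_T(\lambda_T)=0$. Collecting terms yields
\begin{align*}
g_{T+1}(\lambda_T) = \beta\big\{\omega\,\Phi_T(P_{1,1}^{a}) + (1-\omega)\,\Phi_T(P_{0,1}^{a}) - \Phi_T(\tau^1(\omega))\big\},
\end{align*}
where $\Phi_T(\omega') := V_{\lambda_T,T}(\omega') - V_{\lambda_T,T-1}(\omega')$ is the non-negative \emph{horizon-gain} function (an extra period of play cannot hurt).

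It remains to show this bracket is positive. Note that $\tau^1(\omega) = \omega P_{1,1}^{p} + (1-\omega)P_{0,1}^{p}$ is a deterministic convex combination, whereas the first two terms form a Bernoulli expectation of $\Phi_T$ over the post-active-intervention beliefs $\{P_{1,1}^{a},P_{0,1}^{a}\}$. My plan is to (i) establish by a secondary induction on $T$ that $\Phi_T$ is both convex and non-decreasing in its belief argument---a property that descends through the Bellman recursion because passive continuation is affine in belief and the active branch is itself a convex combination over post-transition beliefs---and (ii) apply Jensen's inequality to the active terms and combine it with the dominance $\omega P_{1,1}^{a} + (1-\omega)P_{0,1}^{a} \geq \tau^1(\omega)$ (the same active-over-passive assumption used in the base case) together with monotonicity of $\Phi_T$ to dominate the passive term. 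I expect the main obstacle to be step (i): preserving convexity of $\Phi_T$ across the $\max$ operator in the Bellman equation requires a careful case analysis depending on whether the optimal action flips as $\omega'$ varies, which is precisely where the threshold-policy structure emphasized in Section~3 (and reused in Theorem~\ref{thm:FC_infinite}) becomes essential.
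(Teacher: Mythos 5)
Your algebraic reduction is correct and is in fact a cleaner route to the same bottleneck the paper hits. Eliminating $\lambda_T$ via $g_T(\lambda_T)=0$ to obtain $g_{T+1}(\lambda_T)=\beta\bigl(\omega\,\Phi_T(P_{1,1}^{a})+(1-\omega)\,\Phi_T(P_{0,1}^{a})-\Phi_T(\tau^1(\omega))\bigr)$ is valid; the base case agrees with the paper's computation $\lambda_1=\beta\bigl(\omega(P_{1,1}^{a}-P_{1,1}^{p})+(1-\omega)(P_{0,1}^{a}-P_{0,1}^{p})\bigr)>0$ up to an indexing convention for the terminal stage; and using indexability to convert $g_{T+1}(\lambda_T)>0$ into $\lambda_{T+1}>\lambda_T$ is legitimate, since indexability is assumed in the statement. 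The genuine gap is your step (i). Convexity and monotonicity of $\Phi_T$ are announced, not proved, and they are not routine: $\Phi_T$ is a \emph{difference} of two optimal value functions, and while each of $V_{\lambda_T,T}$ and $V_{\lambda_T,T-1}$ is convex and non-decreasing in the belief, differences of convex functions need not be convex and differences of monotone functions need not be monotone. The obstruction is exactly the $\max$ you point to: the optimal actions at horizons $T$ and $T-1$ need not coincide at a given belief (the thresholds move with the horizon --- that is the very content of the theorem), and in the disagreement region the recursion gives no control over the shape of $\Phi_T$; there is even a circularity risk, since tracking where the two horizons disagree presupposes knowledge of how the index moves with $T$.

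You should also know that the paper's own proof stalls at precisely this step. It proceeds by contradiction (assume $\lambda_{T+1}\le\lambda_T$, use that the passive value grows faster in $\lambda$ than the active value and that both grow with the horizon), and reduces everything to the claim that $V_{\lambda_T,T}(\cdot)-V_{\lambda_{T-1},T-1}(\cdot)$ is larger at the active update $\omega P_{1,1}^{a}+(1-\omega)P_{0,1}^{a}$ than at the passive update $\tau^1(\omega)$ --- your horizon-gain object, up to which subsidy is attached to the shorter horizon, and reached through an unjustified replacement of $\omega V(P_{1,1}^{a})+(1-\omega)V(P_{0,1}^{a})$ by $V\bigl(\omega P_{1,1}^{a}+(1-\omega)P_{0,1}^{a}\bigr)$. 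At that point the paper calls the claim ``intuitively correct'' and explicitly leaves ``whether a condition is required to make it always true'' to future work. So your strategy is not wrong; it reaches the paper's own unproven lemma more directly and with fewer algebraic lapses, and its monotonicity-in-$\lambda$ input is cleanly justified by the assumed indexability rather than asserted. But as written, both your proposal and the paper establish the theorem only modulo that lemma. To actually close it you would need either a trajectory-level coupling argument for monotonicity of $\Phi_T$ that bypasses the Bellman $\max$, or an explicit sufficient condition on the transition probabilities in the spirit of Theorem~\ref{thm:FC_infinite} and Theorem~\ref{thm:FC_finite}. Finally, note that Jensen plus weak monotonicity yields only $g_{T+1}(\lambda_T)\ge 0$; the strict inequality $\lambda_{T+1}>\lambda_T$ additionally needs $P_{s,1}^{a}>P_{s,1}^{p}$ strictly together with strict monotonicity of $\Phi_T$.
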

\begin{proof_sketch}
 We can calculate $\lambda_0$ and $\lambda_1$ by solving equation $V_{\lambda_0,0}(\omega,a=0)=V_{\lambda_0,0}(\omega,a=1)$ and $V_{\lambda_1,1}(\omega,a=0)=V_{\lambda_1,1}(\omega,a=1)$ according to Eq.~\ref{eq:a=0} and Eq.~\ref{eq:a=1}. We can then derive $\lambda_t>\lambda_{t-1}$ by obtaining $\frac{\partial \lambda_t}{\partial t}>0$ for $\forall t>1$ through induction method. The detailed proof can be found in the appendix.
\end{proof_sketch}

We can easily compute $\lambda_0$, $\lambda_1$, and we have $\forall T>1: \lambda_{T+1}>\lambda_{T}>\lambda_0 = 0$ according to Theorem~\ref{thm:decay}, and $\underset{T\rightarrow \infty}{\lim}\lambda_{T}\rightarrow TW(\omega)$, where $TW(\omega)$ is the Whittle index value for state $\omega$ under infinite horizon. Hence, we can use a sigmoid curve to approximate the index value. One common example of a sigmoid function is the logistic function. This form is also used by~\citet{mate2021efficient}. Specifically, we let
\begin{equation}\label{eq:finite_Whittle}
    {\displaystyle  W_T(\omega)={\frac {A}{1+e^{-k T}}}+C,}
\end{equation}
where ${\displaystyle A}$ and ${\displaystyle \frac{A}{2}+C}$ are the curve's bounds; ${\displaystyle k}$ is the logistic growth rate or steepness of the curve. Recall that the definition of the Whittle index $W_T(\omega)$ of belief state $\omega$ is the smallest $\lambda$ s.t. it is optimal to make the arm passive in the current state. We have $W_0(\omega)=0$ and $W_1(\omega)=\beta(\omega(P_{1,1}^a-P_{1,1}^p)+(1-\omega)(P_{0,1}^a-P_{0,1}^p))$, and $W_{\infty}(\omega)=TW(\omega)$. By solving these three constraints, we can get the three unknown parameters, $$C=-TW(\omega),A=2TW(\omega),$$
\begin{equation*}
\resizebox{\linewidth}{!}{$
    k=-\log (\frac{2TW(\omega)}{\beta(\omega(P_{1,1}^a-P_{1,1}^p)+(1-\omega)(P_{0,1}^a-P_{0,1}^p))+TW(\omega)}-1)
    $}
\end{equation*}

Algorithm~\ref{al:alg1} shows how to use $W_T(\omega)$ in considering fairness constraint under the finite horizon setting. Next, we show that like in the infinite horizon case, value function and Whittle index decay over time in the case of the finite horizon. 

\begin{theorem}\label{thm:FC_finite}
Consider the finite horizon RMAB problem with fairness constraint. Algorithm~\ref{al:alg1} (activating arm $i$ at the end of the time period when its fairness constraint is violated) is optimal:
\begin{enumerate}[leftmargin=*]
    \item When $\omega^i\leq \omega^\ast$ (increasing belief process), if
\begin{align}
    (P_{1,1}^{i,p}-P_{0,1}^{i,p})&\left(\Delta_4\beta \sum_{t=0}^{T-2}[\beta^t]+1\right) \leq \nonumber\\
    &(P_{1,1}^{i,a} - P_{0,1}^{i,a})\sum_{t=0}^{T-2}[\beta^t(P_{1,1}^{i,a}-P_{0,1}^{i,a})^t]
\end{align}
 $\Delta_4= \min \{ (P_{1,1}^{i,p}-P_{0,1}^{i,p}), (P_{1,1}^{i,a}-P_{0,1}^{i,a})\}$, and $T$ is the residual horizon length.
\item When $\omega^i\geq \omega^\ast$ (non-increasing belief process), if
\begin{align}
    (P_{1,1}^{i,p}-P_{0,1}^{i,p})&\left(\Delta_2\beta\sum_{t=0}^{T-2}[\beta^t(P_{1,1}^{i,a}-P_{0,1}^{i,a})^t]+1\right) \geq \nonumber\\ 
    &(P_{1,1}^{i,a} - P_{0,1}^{i,a})\sum_{t=0}^{T-2}\beta^t
\end{align}
 $\Delta_2= \min \{ (P_{1,1}^{i,p}-P_{0,1}^{i,p}), (P_{1,1}^{i,a}-P_{0,1}^{i,a})\}$.
\end{enumerate}
\end{theorem}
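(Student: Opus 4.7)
The plan is to mirror the structure of the proof of Theorem~\ref{thm:FC_infinite}, adapting each infinite-horizon ingredient to its finite-horizon counterpart. Consider arm $i$ that has been passive for $L-1$ consecutive slots, so that Algorithm~\ref{al:alg1} forces its activation at slot $t=L$. I would define the intervention effect $\Delta V_T(\omega) = V_{\lambda,T}(\omega,a=1)-V_{\lambda,T}(\omega,a=0)$ and show that it is monotone in the slot at which the forced activation is scheduled, so that delaying the activation to the last admissible slot is (weakly) better than activating earlier. Since the arms evolve independently, rearranging arm $i$'s active slot does not affect the Whittle-based selections for the other arms, so it suffices to argue within a single-armed MDP of residual horizon $T$.

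The first step is to replace the chain-rule argument from the infinite case by its finite-horizon analogue: writing $\frac{\partial \Delta V_T(\omega)}{\partial t} = \frac{\partial \Delta V_T(\omega)}{\partial \omega}\cdot\frac{\partial \omega}{\partial t}$, the sign of $\partial\omega/\partial t$ is determined by the belief-evolution regime (positive in the increasing case $\omega^i \le \omega^{\ast}$, negative in the non-increasing case $\omega^i \ge \omega^{\ast}$). I would then unroll the Bellman equations \eqref{eq:a=0} and \eqref{eq:a=1} exactly $T-1$ times to bound $\partial V_{\lambda,T}/\partial\omega$. Each passive unrolling contributes a factor $\beta$ weighted by the passive slope $(P_{1,1}^{i,p}-P_{0,1}^{i,p})$, while each active unrolling contributes $\beta(P_{1,1}^{i,a}-P_{0,1}^{i,a})$. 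Summing over the residual horizon yields the two finite geometric series $\sum_{t=0}^{T-2}\beta^t$ and $\sum_{t=0}^{T-2}\beta^t(P_{1,1}^{i,a}-P_{0,1}^{i,a})^t$, which replace the $\frac{1}{1-\beta}$ and $\frac{1}{1-\beta(P_{1,1}^{i,a}-P_{0,1}^{i,a})}$ terms that appeared in Theorem~\ref{thm:FC_infinite}. The two inequalities stated in the theorem are exactly the balance conditions required for $\partial \Delta V_T/\partial\omega$ to have the correct sign, and the constants $\Delta_4$ and $\Delta_2$ arise as minima because the derivative of the passive continuation is bounded by the smaller of the two transition gaps.

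The second step is a backward induction on $T$. The base case $T=1$ follows by direct computation of $\Delta V_1(\omega)$ from \eqref{eq:a=0} and \eqref{eq:a=1}. For the inductive step I would use Theorem~\ref{thm:decay}, which guarantees $\lambda_0 < \lambda_1 < \cdots < \lambda_T$, together with the monotone belief-update rule \eqref{eq:1}, to show that the sign pattern established at horizon $T-1$ is preserved when one additional slot is appended. The sigmoid approximation \eqref{eq:finite_Whittle} enters only through the identification of the forced-activation slot and not in the monotonicity argument itself.

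The hardest part, I expect, will be controlling the non-stationarity of the belief trajectory: unlike the infinite-horizon proof, which could pivot around the fixed point $\omega^{\ast}$, here the belief traverses several distinct values $\omega,\tau^1(\omega),\dots,\tau^{L-1}(\omega)$ before the forced activation, and each of these produces a slightly different slope in the Bellman recursion. I would handle this by using the worst-case slope over the trajectory, bounded uniformly by $\Delta_4$ or $\Delta_2$ depending on the regime, and by invoking $W_T(\omega)\le TW(\omega)$ (a consequence of Theorem~\ref{thm:decay}) whenever a monotone upper bound on the finite-horizon index is required. The remaining algebra then reduces to verifying that the accumulated geometric sums match the stated inequalities, with the full calculation deferred to the appendix in parallel with the infinite-horizon proof.
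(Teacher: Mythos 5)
Your proposal follows essentially the same route as the paper's appendix proof: reduce to the single fairness-violating arm, apply the chain rule to the intervention effect $V_{\lambda,T}(\omega,a=1)-V_{\lambda,T}(\omega,a=0)$, read the sign of $\partial\omega/\partial t$ from the belief regime, and bound $\partial V_{\lambda,T}(\omega)/\partial\omega$ by the finite geometric sums $\sum_{t=0}^{T-2}\beta^t$ and $\sum_{t=0}^{T-2}\beta^t(P_{1,1}^{i,a}-P_{0,1}^{i,a})^t$ obtained by unrolling/inducting on the Bellman recursion (the paper's boundary and derivative-bound lemmas), with $\Delta_2,\Delta_4$ arising exactly as the minima you describe. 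The only cosmetic difference is your appeal to Theorem~\ref{thm:decay} in the inductive step; the paper's optimality argument does not need it (its induction lives inside the value-difference lemma), but this does not change the substance of the argument.
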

\begin{proof_sketch}
The proof is similar to the infinite horizon case (detailed in Appendix).
\end{proof_sketch}

\subsection{Uncertainty in Transition Matrix}
In most real-world applications~\citep{biswas2021learn}, there may not be adequate information about all the state transitions. In such cases, we don't know how likely a transition is and thus, we won't be able to use the Whittle index approach directly. We provide a mechanism to apply the Thompson sampling based learning mechanism for solving RMAB problems without prior knowledge and where it is feasible to get learning experiences. Thompson sampling~\citep{thompson1933likelihood} is an algorithm for online decision problems, and can be applied in MDP~\citep{gopalan2015thompson} as well as Partially Observable MDP~\citep{meshram2016optimal}. 
In Thompson sampling, we initially assume that arm has a prior Beta distribution in the transition probability according to the prior knowledge (if there is no prior knowledge available, we assume a prior $Beta(1,1)$ as this is the uniform distribution on $(0,1)$). We choose Beta distribution because it is a convenient and useful prior option for Bernoulli rewards~\citep{agrawal2012analysis}.  

In our algorithm, referred to as FaWT-U and provided in \ref{al:alg2},  at each time step, we sample the posterior distribution over the parameters, and then use the Whittle index algorithm to select the arm with the highest index value to play if the fairness constraint is not violated. We can utilize our observations to update our posterior distribution, because playing the selected arms will reveal their current state.  Then, the algorithm takes samples from the posterior distribution and repeats the procedure again.

\begin{algorithm}[th]
\caption{Fair Whittle Thresholding with Uncertainty in transition matrix(FaWT-U)}
\label{al:alg2}
\SetAlgoLined
\KwIn{Posterior $Beta$ distribution over the transition matrix $\mathcal{P}$, fairness constraint, $\eta$ and $L$, set of belief states $\{\omega^1,\dots,\omega^N\}$, budget $k$}
\For{each arm i in $1$ to $N$}{
            Sample the  transition probability parameters independently from posterior\;
            Compute Whittle indices based on the transition matrix and belief state\;
}
            \If{the activation frequency $\eta$ for arm $i$ is not satisfied at the end of the period of length $L$}{
                Add arm $i$ to the action set $\phi$\;
                $k = k - 1$\;
            }
Add the arms with top $k$ index value into $\phi$\;
Play the selected arms and receive the observations\;
Update the posterior distribution\;
\KwOut{Action set $\phi$ and updated posterior distribution over parameters}
\end{algorithm}
We employ the sampled transition probabilities and belief states $\{\omega^1,\dots,\omega^N \}$, as well as the residual time horizon $T$ as the input to the Whittle index computation (Line 3 in Algorithm~\ref{al:alg2}).

\subsection{Unknown Transition Matrix}

We now tackle the second challenge mentioned, in which the transition matrix is completely unknown. In this case, we can take advantage of the model-free learning method to avoid directly using the whittling index policy.

Q-Learning is most commonly used to solve the sequential decision-making problem, which was first introduced by~\citet{watkins1992q} as an early breakthrough in reinforcement learning. It is widely studied for social good~\citep{nahum2012q,li2021claim}, and it has also been extensively used in RMAB problems~\citep{fu2019towards,avrachenkov2020Whittle,biswas2021learn} to estimate the expected Q-value, $Q^\ast(s,a,l)$, of taking action $a\in\{0,1\}$ after $l\in\{1,\dots,L\}$ time slots since last observation $s\in\{0,1\}$. The off-policy TD control algorithm is defined as
\begin{equation}\label{eq:update_Q}
\resizebox{\linewidth}{!}{$
\begin{aligned}
        Q^{t+1}&(s_t,a_t,l_t) \leftarrow Q^t(s_t,a_t,l_t)+\\
        &\alpha_t(s_t,a_t,l_t) \left[R_{t+1}+\gamma \underset{a}{\max}\left(Q^t(s_{t+1}, a,l_{t+1}) - Q^t(s_t,a_t,l_t)\right)\right]
\end{aligned}
$}    
\end{equation}
Where $\gamma$ is the discount rate, $\alpha_t(s_t,a_t,l_t)\in [0,1]$ is the learning rate parameter, i.e., a small $\alpha_t(s_t,a_t,l_t)$ will result in a slow learning process and no update when $\alpha_t(s_t,a_t,l_t)=0$. While a large $\alpha_t(s_t,a_t,l_t)$ may cause the estimated Q-value to rely heavily on the most recent return, when $\alpha_t(s_t,a_t,l_t)=1$, the Q-value will always be the most recent return.

We now describe how to use the Whittle index-based Q-Learning mechanism to solve the RMAB problem with fairness constraints.
We build on the work by \citet{biswas2021learn} for fully observable settings. In addition to considering fairness constraints, our model can be viewed as an extension to the partially observable setting.  Due to fairness constraints, $l$ can be a maximum of $L$ time steps.  Therefore, belief space is also limited. We are able to use the Q-Learning based approach to effectively compute the Whittle index value and this approach is summarized in Algorithm~\ref{al:alg3},

\begin{algorithm}[t]
\caption{Fair Whittle Thresholding based Q-Learning(FaWT-Q)}
\label{al:alg3}
\SetAlgoLined
\KwIn{parameter $\epsilon$ and $k$, and $\alpha_t(s_t,a_t,l_t)$, initial observed state set $\{s \}^N$, }
\For{each arm i in $1$ to $N$}{
    Initialize the $Q_i(s,a,l)\leftarrow 0$ for each state $s\in\{0,1\}$, and each action $a\in\{0,1\}$ and time length $l\in\{1,\dots,L\}$\; 
    For each $s\in\{0,1\}$ and $l\in \{1,\dots,L\}$\, initialize the Whittle index value set $\lambda_i(s,l)\leftarrow 0$\;
}
\For{t from $1$ to $T$}{
    \For{arm i in 1 to N}{
    \If{the fairness constraint is violated}{
                Add arm $i$ to the action set $\phi$\;
                $k=k-1$\;
            }
    }
    With prob $\epsilon$ add random $k$ arms to $\phi$ and with prob $1-\epsilon$ add arms with top $k$ $\lambda_i(s,l)$ value \;
    Activate the selected arms and receive rewards and observations\;
    \For{each arm i in $1$ to $N$}{
        Update the $Q_i^{t+1}(s,a,l)$ according to Eq.~\ref{eq:update_Q}\;
        \If{$i\in\phi$}{
            Set $l=1$ and update $s_i$ according to the received observation\;
        }
        \Else{
            Set $l=l+1$\;
        }
        Update the new Q-Learning based Whittle index by $\lambda_i^{t+1}(s,l)=Q_i(s,a=1,l)-Q_i(s,a=0,l)$
    }
}
\KwOut{Action set $\phi$}
\end{algorithm}



One typical form of $\alpha_t(s_t,a_t,l_t)$ could be $1/z(s_t, a_t,l_t)$, where $z(s_t, a_t,l_t)=\left(\sum_{u=0}^t \mathbb{I}\{s_u=s,a_u=a,l_u=l\}\right)+1$ for each initial observed state $s\in\{ 0,1\}$, action $a\in\{0,1 \}$ and time length since last activation $l\in\{1,\dots, L \}$ at the time slot $u$ from the beginning. With such mild form of $\alpha_t(s_t,a_t,l_t)$, we now are able to build the theoretical support for the Q-Learning based Whittle index approach.
\begin{theorem}\label{thm:q_learning}
Selecting the highest-ranking arms according to the $Q_i^{\ast}(s,a=1,l)-Q_i^{\ast}(s,a=0,l)$ till the budget constraint is met is equivalent to maximizing $\left\{ \sum_{i=1}^N Q_i^\ast(s,a,l)\right\}$ over all possible action set $\{0,1\}^N$ such that $\sum_{i=1}^N a_i=k$.
\end{theorem}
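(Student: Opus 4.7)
The plan is to reduce the constrained maximization to a transparent top-$k$ selection problem by additively separating the portion of the objective that depends on the action vector. To make the notation precise, for each arm $i$ let $s_i$ and $l_i$ denote its (fixed) last observed state and time-since-last-activation, and let $a_i\in\{0,1\}$ be the decision variable. The first step is the identity
\begin{equation*}
\sum_{i=1}^{N} Q_i^{\ast}(s_i, a_i, l_i) \;=\; \sum_{i=1}^{N} Q_i^{\ast}(s_i, 0, l_i) \;+\; \sum_{i=1}^{N} a_i\bigl[Q_i^{\ast}(s_i, 1, l_i) - Q_i^{\ast}(s_i, 0, l_i)\bigr].
\end{equation*}
The first sum on the right is a constant in the decision variables, so maximizing the left-hand side over feasible action vectors is equivalent to maximizing $\sum_{i=1}^N a_i\,\Delta_i$, where $\Delta_i := Q_i^{\ast}(s_i, 1, l_i) - Q_i^{\ast}(s_i, 0, l_i)$ is exactly the Q-learning-based Whittle index used in Algorithm~\ref{al:alg3}.

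The second step is a one-line exchange argument applied to the reduced program $\max \sum_i a_i \Delta_i$ subject to $a_i \in \{0,1\}$ and $\sum_i a_i = k$. Suppose some optimal $a^{\ast}$ has $a^{\ast}_j = 1$ and $a^{\ast}_i = 0$ with $\Delta_i > \Delta_j$. Swapping these two coordinates preserves the budget constraint and strictly increases the objective by $\Delta_i - \Delta_j > 0$, contradicting optimality. Hence any optimum activates the $k$ arms with the largest values of $\Delta_i$, which is precisely the greedy top-$k$ rule stated in the theorem.

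There is no substantive obstacle here since the argument is a rearrangement identity followed by a standard exchange step on a linear knapsack with unit weights. The only care points worth flagging in the write-up are: (i) stating explicitly that the $s_i$ and $l_i$ are treated as fixed per arm when forming the per-arm decomposition, so that the passive-action sum is genuinely a constant; and (ii) handling ties in the $\Delta_i$ ordering by noting that every top-$k$ choice attains the same maximal objective, so the greedy rule and the set of optimal budget-feasible action vectors coincide up to tie-breaking, which is exactly the sense of equivalence claimed.
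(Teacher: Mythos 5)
Your proof is correct and takes essentially the same route as the paper's: both rest on the additive separation of $\sum_{i=1}^N Q_i^{\ast}(s_i,a_i,l_i)$ into a constant passive part plus $\sum_i a_i\bigl[Q_i^{\ast}(s_i,1,l_i)-Q_i^{\ast}(s_i,0,l_i)\bigr]$, followed by top-$k$ optimality of the differences --- the paper does this by directly comparing the top-$k$ set against an arbitrary $k$-set and rearranging sums (adding the passive Q-values of the untouched arms to both sides), while you make the decomposition explicit and justify the top-$k$ step with a swap argument. The substance is identical; your write-up is merely a cleaner packaging of the same idea, with the added care about ties that the paper omits.
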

\begin{proof_sketch}
A proof based on work by~\citep{biswas2021learn} is given in Appendix. 
\end{proof_sketch}

\begin{theorem}\label{thm:Q_2}
{Stability and convergence}: The proposed approach converges to the optimal with probability $1$ under the following conditions:\\
\noindent {1.} The state space and action space are finite;\\
\noindent {2.} $\sum_{t=1}^{\infty}\alpha_t(s_t,a_t,l_t)=\infty \quad \sum_{t=1}^{\infty}\alpha_t^2 (\omega_i(t)) <\infty$
\end{theorem}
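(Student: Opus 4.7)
The plan is to reduce Theorem~\ref{thm:Q_2} to the classical almost-sure convergence result for asynchronous tabular Q-learning (in the style of Watkins and Dayan, and Tsitsiklis), applied separately to the per-arm learner in Algorithm~\ref{al:alg3}, and then combine the pointwise convergence of the Q-values with Theorem~\ref{thm:q_learning} to obtain optimality of the overall greedy-plus-fairness selection rule.

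First, I would observe that each per-arm Q-table $Q_i(s,a,l)$ is indexed by $s\in\{0,1\}$, $a\in\{0,1\}$, and $l\in\{1,\dots,L\}$. The fairness constraint forces $l\le L$ (the arm is activated no later than the step where the fairness window would be violated, resetting $l$ to $1$), so the effective state-action space for each learner is finite, verifying condition~1 in an operationally useful form. Next, I would write the Bellman operator $\mathcal{T}_i$ for each arm whose unique fixed point is $Q_i^*$, and note that $\mathcal{T}_i$ is a $\gamma$-contraction in the sup-norm; this is standard from $\gamma<1$ together with the nonexpansiveness of the $\max$ operator and convex combinations of transitions. The update rule in Eq.~\ref{eq:update_Q} is then an asynchronous stochastic approximation to the fixed point of $\mathcal{T}_i$.

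I would then invoke the standard asynchronous Q-learning convergence theorem, which requires (i) a contraction (just established), (ii) each $(s,a,l)$ triple being visited infinitely often, and (iii) the Robbins--Monro step-size conditions, which are exactly condition~2 in the theorem statement. Condition (ii) is the main obstacle and is handled by the $\epsilon$-greedy exploration in Algorithm~\ref{al:alg3}: at each time step, with probability $\epsilon>0$ a random subset of $k$ arms is activated, so every arm is activated infinitely often, and conditional on activation the next observed state is $0$ or $1$ with strictly positive probability under the assumed transition matrix, so every $(s,a,l)$ triple is visited infinitely often almost surely by a Borel--Cantelli argument. Under the typical choice $\alpha_t(s,a,l)=1/z(s,a,l)$ mentioned after Algorithm~\ref{al:alg3}, condition~2 is immediate since $z(s,a,l)\to\infty$ along the visited indices.

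Combining these ingredients, the standard theorem yields $Q_i^t\to Q_i^*$ almost surely for each arm $i$. Consequently, the Q-learning based index $\lambda_i^t(s,l)=Q_i^t(s,a{=}1,l)-Q_i^t(s,a{=}0,l)$ converges to its optimal value $\lambda_i^*(s,l)$. At this point I would apply Theorem~\ref{thm:q_learning} to conclude that the top-$k$ selection rule (first pre-populating $\phi$ with arms whose fairness constraint would otherwise be violated, then greedily filling the remaining budget by largest $\lambda_i^*$) is equivalent to maximizing $\sum_i Q_i^*$ over all feasible joint actions satisfying the budget and fairness constraints, giving convergence of Algorithm~\ref{al:alg3} to the optimal fair policy with probability~$1$. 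The principal technical difficulty is establishing the infinite-visitation property (ii) in the presence of both the budget and the fairness constraints; the $\epsilon$-greedy mechanism combined with the finiteness of $L$ is precisely what makes this step go through cleanly.
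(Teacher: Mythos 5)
Your proposal is correct and follows essentially the same route as the paper's proof: both reduce the claim to the classical asynchronous Q-learning convergence theorem (Watkins--Dayan style), verify finiteness of the per-arm state-action space via the fairness bound $l\le L$, use $\epsilon$-greedy exploration to argue infinite visitation, check the Robbins--Monro step-size conditions, and then conclude optimality of the selection rule by combining the converged indices $Q_i^{\ast}(s,a{=}1,l)-Q_i^{\ast}(s,a{=}0,l)$ with Theorem~\ref{thm:q_learning}. Your write-up is in fact more explicit than the paper's (the contraction and Borel--Cantelli details are only implicit there), but the underlying argument is identical.
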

\begin{proof_sketch}
The key to the convergence is contingent on a particular sequence of episodes observed in the real process~\citep{watkins1992q}.
Detailed proof is given in Appendix. 
\end{proof_sketch}

\begin{figure*}
  \centering
    \includegraphics[width=0.92\textwidth]{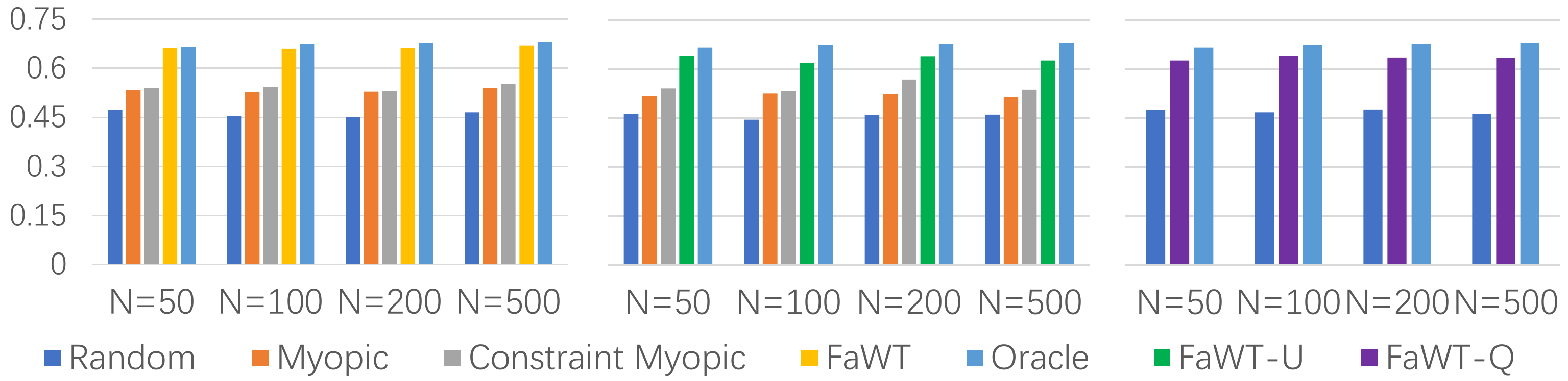}
  \caption{Comparison of performance of our approach and baseline approaches}
  \label{fig:performance}
\end{figure*}

\begin{figure*}[ht]
  \centering
    \includegraphics[width=0.92\textwidth]{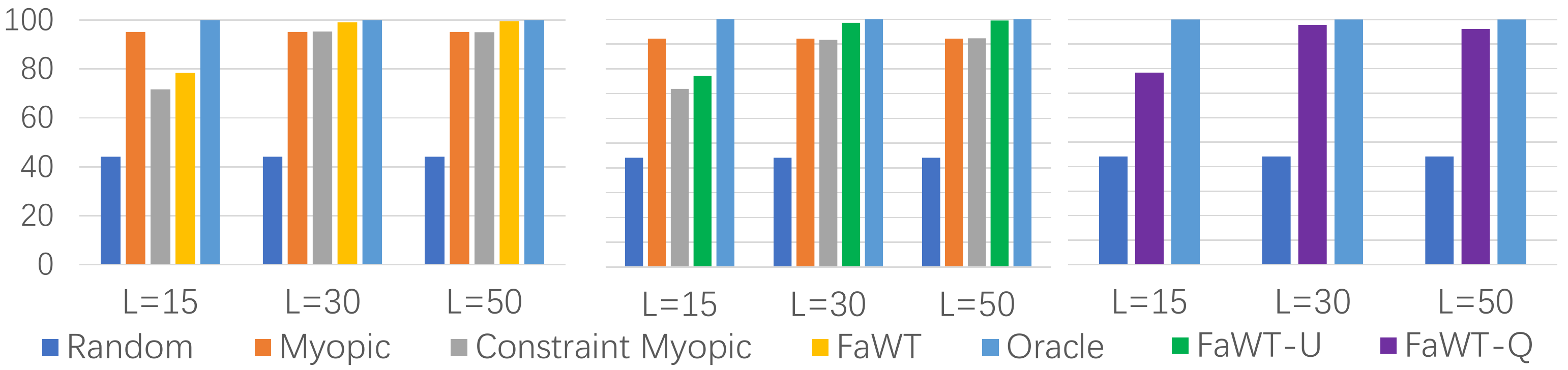}
  \caption{Intervention benefit ratio of our approach and baseline approaches without penalty for the violation of the fairness constraint. We set $N=100$, $k=10$, $T=1000$, $\eta=2$ and $L=\{ 15,30,50\}$.}
  \label{fig:benifit}
\end{figure*}

\section{Experiment}
To the best of our knowledge, we are the first to explore  fairness constraints in RMAB, hence the goal of the experiment section is to evaluate the performance of our approach in comparison to existing baselines:\\
\noindent \textbf{Random}: At each round, decision-maker randomly select $k$ arms to activate.\\
\noindent \textbf{Myopic}: Select $k$ arms that maximize the expected reward at the immediate next round. A myopic policy ignores the impact of present actions on future rewards and instead focuses entirely on predicted immediate returns. Formally, this could be described as choosing the $k$ arms with the largest gap $\Delta \omega_t = (\omega_{t+1}|a_t=1)-(\omega_{t+1}|a_t=0)$ at time $t$.\\
\noindent \textbf{Constraint Myopic:} It is the same as the Myopic when there is no conflict with fairness constraints, but if the fairness constraint is violated, it will choose the arm that satisfies the fairness constraint to play.\\
\noindent \textbf{Oracle}: Algorithm by~\citet{qian2016restless} under the assumption that the states of all arms are fully observable and the transition probabilities are known without considering fairness constraints.\\
To demonstrate the performance of our proposed methods, we test our algorithms on synthetic domains~\citep{mate2020collapsing} and provide numerical results averaged over 50 runs.

\paragraph{Average reward value with penalty:} In Figure~\ref{fig:performance}, we show the average reward $\bar{R}$ at each time step received by an arm over the time interval $T=1000$ for $N=50,100,200,500$ and $k=10\% \times N$ with the fairness constraint $L=20$, and $\eta=2$. We will receive a reward of $1$ if the state of an arm is $s=1$, and no reward otherwise.
We impose a small penalty of $-0.01$ if the fairness constraint of an arm is not satisfied. 
The graph on the left shows the performance of FaWT method when assuming the transition matrix is known. The middle graph is the average reward obtained using the FaWT-U approach when the transition model is not fully available. The right graph illustrates the result of FaWT-Q method when the transition model is unknown. As shown in the figure, our approaches consistently outperform the Random and Myopic baselines, and in addition to satisfying the fairness constraints, they have a near-optimal performance with a small difference gap when compared to the Oracle baseline. Note that the Myopic approach may fail in some cases(shown in~\cite{mate2020collapsing}), it performs worse than the Random approach.

\paragraph{No penalty for the violation of the fairness constraint:}

We also investigate the intervention benefit ratio defined as $\frac{\bar{R}_{\text{method}}-\bar{R}_{\text{No intervention}}}{\bar{R}_{\text{Oracle}}-\bar{R}_{\text{No intervention}}}\times 100\%$, where $\bar{R
}_{\text{No intervention}}$ denotes the average reward without any intervention involved. Here, we do not employ penalties when the fairness constraint is not satisfied, as we want to evaluate the benefit provided by interventions with our fair policy and policies of other approaches. 
We provide the intervention benefit ratio for different values of $L$ for all approaches in Figure~\ref{fig:benifit}. Again, the left graph shows the result of FaWT approach, the middle graph is the result of FaWT-U approach, and the right graph shows the result of FaWT-Q method. Our proposed approaches can achieve a better intervention benefit ratio compared with the baseline when $L$ is 30 and above. However, for L = 15, where there is a strict fairness constraint (i.e., $\frac{k\times L}{(\eta-1)\times N }$ is close to 1), it has a significant impact on solution quality. The performances of all our approaches improve when the fairness constraint's strength decreases ($L$ increases). Overall, our proposed methods can handle various levels of fairness constraint strength without sacrificing significantly on solution quality.


We also provide the additional experiment result that studies the influence of intervention level and fairness constraint's strength in the Appendix.

\section{Conclusion}
In this paper, we initiate the study of fairness constraints in Restless Multi-Arm Bandit problems. We define a fairness metric that encapsulates and generalizes existing fairness definitions employed for Multi-Arm Bandit problems. Contrary to expectations, we are able to provide minor modifications to the existing algorithm for RMAB problems in order to handle fairness. We provide theoretical results on how our methods provide the best way to handle fairness without sacrificing solution quality. This is demonstrated empirically as well on benchmark problems from the literature.

\begin{acknowledgements}

This research/project is supported by the National Research Foundation, Singapore under its AI Singapore Programme (AISG Award No: AISG2-RP-2020-017).
\end{acknowledgements}

\bibliography{li_493}

\clearpage

\appendix
\section{More information}
\subsection{Whittle Index Policy}
\label{app:Whittle}

We take advantage of the Fast Whittle Index Computation algorithm introduce in~\citet{mate2020collapsing}. They derived a closed form for computing the Whittle index for both average reward and discounted reward criterion, where the objective could also be written as $\bar{R}_{\lambda}^{\pi}=\mathbb{E} \sum_{\omega} f^{\pi}(\omega) R_{a^{\pi}}(\omega)$, where $f^{\pi}(\omega)$ is defined as the fraction of time spent in each belief state $\omega$ induced by policy $\pi$ and $f^{\pi}(\omega)\in [0,1]$. Their proposed Whittle index computation algorithm can achieve a 3-order-of-magnitude speedup compared to~\citet{qian2016restless}. In the two-states setting ($s\in \{0,1\}$), they use a tuple $(B_0^{\omega_{th}}, B_1^{\omega_{th}})$ to denote the belief threshold, where $\omega_{th}\in [0,1]$, and $B_0^{\omega_{th}},B_1^{\omega_{th}} \in 1,\dots,L$ are the index of the first belief state in each chain where it is optimal to act (i.e., the belief is less than or equal to $\omega_{th}$). The length is at most $L$ long due to our fairness constraints. This is defined as the forward threshold policy, and \citet{mate2020collapsing} used the Markov chain structure to derive the occupancy frequencies for each belief state $\omega_s(t)$, which is as follows,
\begin{equation}
  f^{(B_0^{\omega_{th}},B_1^{\omega_{th}})}(\omega_s(t)) =
    \begin{cases}
      a & \text{if $s=0$, $t\leq B_0$}\\
      b & \text{if $s=1$, $t\leq B_1$}\\
      0 & \text{otherwise}
    \end{cases}       
\end{equation}
\begin{equation}
\resizebox{\linewidth}{!}{$
    a=\left(\frac{B_1 \omega_0(B_0)}{1-\omega_1(B_1)} + B_0\right)^{-1},b=\left(\frac{B_1\omega_0(B_0)}{1-\omega_1(B_1)}+B_0\right)^{-1}\frac{\omega_0(B_0)}{1-\omega_1(B_1)}
    $}
\end{equation}
These occupancy frequencies do not depend on the subsidy $\lambda$. For the forward threshold policy $(B_0^{\omega_{th}}, B_1^{\omega_{th}})$, they use the $R_{\lambda}^{B_0^{\omega_{th}}, B_1^{\omega_{th}}}$ to denote the average reward, then can decompose the average reward into the contribution of the state reward and the subsidy $\lambda$
\begin{equation}
\resizebox{\linewidth}{!}{$
\begin{aligned}
            R_{\lambda}^{(B_0^{\omega_{th}}, B_1^{\omega_{th}})} &= \sum_{\omega \in \mathscr{B}} \omega f^{(B_0^{\omega_{th}},B_1^{\omega_{th}})}(\omega)\\
            &+\omega \left(1-f^{(B_0^{\omega_{th}},B_1^{\omega_{th}})}(\omega_1(B_1)) - f^{(B_0^{\omega_{th}},B_1^{\omega_{th}})}(\omega_0(B_0))\right)
\end{aligned}
$}
\end{equation}
Given the definition of the Whittle index $\lambda$, this could be interpreted to two corresponding threshold policies being equally optimal.
More specifically, for a belief state $\omega_0(B_0)$, the two adjacent threshold polices $\{ (B_0^{\omega_{th}},B_1^{\omega_{th}}),(B_0^{\omega_{th}}+1,B_1^{\omega_{th}}) \}$ would be optimal to be active and passive respectively. 
recall that the Whittle index is the smallest $\lambda$ for which the active and the passive actions are both optimal. Thus the subsidy which makes the average reward of those two adjacent polices equal in value must be the Whittle index for the belief state $\omega_0(B_0)$. Formally, this could be calculated through $R_{(\lambda}^{B_0^{\omega_{th}}, B_1^{\omega_{th}})}=R_{(\lambda}^{B_0^{\omega_{th}}, B_1^{\omega_{th}}+1)}$. Similarly, we can obtain the Whittle index for the belief state $\omega_1(B_1)$ through $R_{\lambda}^{(B_0^{\omega_{th}}, B_1^{\omega_{th}})}=R_{\lambda}^{(B_0^{\omega_{th}}, B_1^{\omega_{th}}+1)}$. These computations are repeated for every belief states to find the minimum subsidy value while $B_s^{\omega_{th}}\leq L$. The main idea of their approach is shown in Fig~\ref{fig:forward_reverse_policy}.

\begin{figure}[ht]
    \centering
    \includegraphics[width=0.9\linewidth]{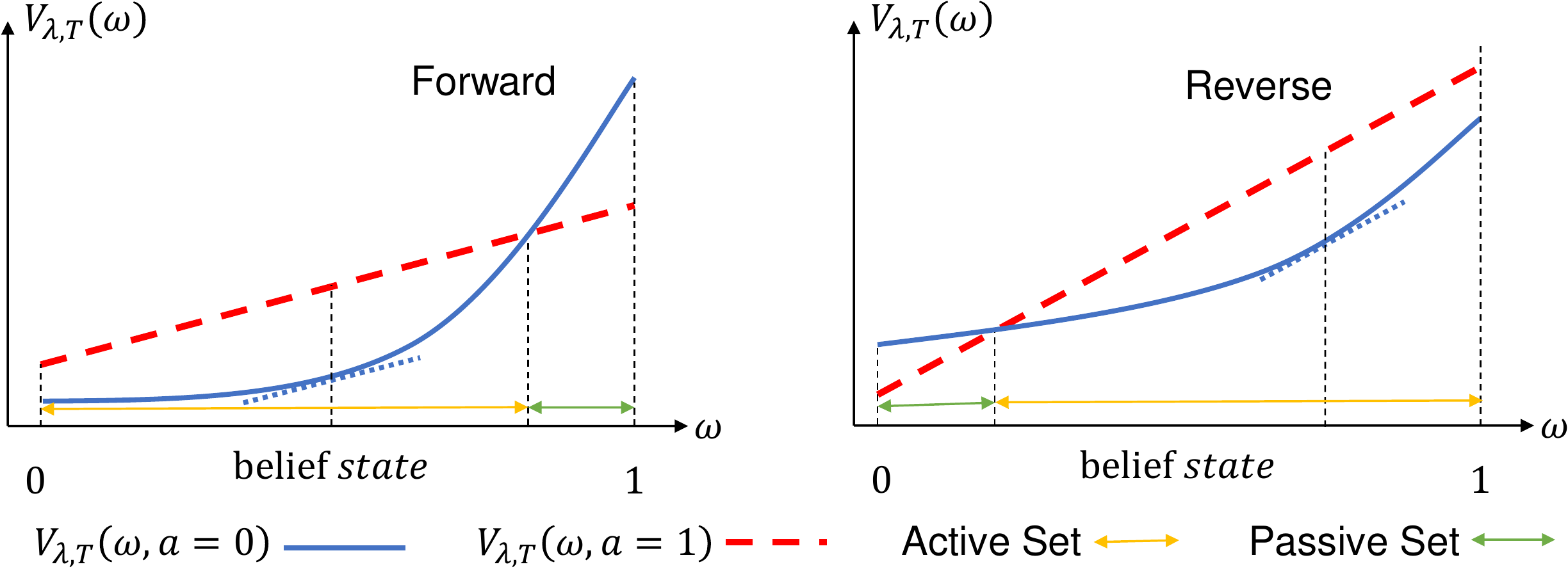}
    \caption{The forward and reverse policy}
    \label{fig:forward_reverse_policy}
\end{figure}

\subsection{Discussion of Fairness choice}
It is natural to ask what makes the proposed notion of fairness in this paper the right one? Our proposed fairness constraint is driven by the flaw of SOTA that a substantial number of arms are never selected, such starvation of intervention results in a huge demand for fairness requirement in the real-world. Furthermore, while our proposed algorithm can still be used, our notion of fairness can also be extended to fairness on the group/type of arms (i.e., check if the group fairness requirement is violated, and if so, select the arm with the highest index value in that group/type).
One of the most widely used fairness notion is to define a minimum rate that is required when allocating resources to users, and our fairness constraint can be viewed as a variant of this~\citep{chen2020fair,li2019combinatorial,patil2020achieving}. Another form of fairness constraint is to require that the algorithm never prefers a worse action over a better one based on the expected immediate reward~\citep{joseph2016fairness}. This can be seen as a variant of the Myopic algorithm in our baselines, which fails to satisfy our fairness constraint and performs worse compared to our method. Meanwhile, our Whittle index-based method can naturally satisfy such form of fairness constraint in the most time based on the long-term reward rather than the immediate reward. There might be many measures of fairness and it may be impossible to satisfy multiple types of fairness simultaneously (COMPAS Case Study). However, our proposed fairness constraint is one of most appropriate form in the real world. Namely, in the field of medical interventions, we can meet the requirement that everyone will receive medical treatment without sacrificing a significant overall performance, while SOTA/Myopic will only favor certain beneficiaries.

\section{Fully Observable Setting}
In order to explain the key idea, we initially consider a fully observable environment and then extend to a partially observable setting. Algorithm~\ref{al:alg1} provides the algorithm at each iteration for selecting the arms to activate (action set) in fully observable case (for both finite and infinite hoirzon cases).

\subsection{Infinite Horizon, Fully Observable}

We first provide the expression for $\lambda$ without the fairness constraints. 
We assume that $V_{\lambda, \infty}(s)$ denotes the value function which can be accrued from a single-armed bandit process with subsidy $\lambda$  over infinite horizon if the observed state is $s$.

Therefore, for the state 0, we have:\begin{equation}
\resizebox{\linewidth}{!}{$
    \displaystyle
\begin{aligned}
  V_{\lambda,\infty}(0) &= \max
    \begin{cases}
      \lambda +\beta (P_{0,1}^p V_{\lambda,\infty}(1)+P_{0,0}^p V_{\lambda,\infty}(0)) & \text{passive}\\
       \beta(P_{0,1}^a V_{\lambda,\infty}(1)+P_{0,0}^a V_{\lambda,\infty}(0)) & \text{active}
    \end{cases}\\
    &= \max
    \begin{cases}
      \lambda  +\beta (P_{0,1}^p (V_{\lambda,\infty}(1)- V_{\lambda,\infty}(0))+ V_{\lambda,\infty}(0))
    \\
       \beta(P_{0,1}^a (V_{\lambda,\infty}(1)-V_{\lambda,\infty}(0))+V_{\lambda,\infty}(0))
    \end{cases}
\end{aligned}
$}
\end{equation}
Similarly for state 1, we have
\begin{equation} \label{eq:mdp2}
\resizebox{\linewidth}{!}{$
    \displaystyle
  V_{\lambda,\infty}(1) = \max
    \begin{cases}
      \lambda +  1 +\beta (P_{1,1}^p (V_{\lambda,\infty}(1)- V_{\lambda,\infty}(0))+ V_{\lambda,\infty}(0)) & \text{passive}\\
       1 + \beta(P_{1,1}^a (V_{\lambda,\infty}(1)-V_{\lambda,\infty}(0))+V_{\lambda,\infty}(0)) & \text{active}
    \end{cases}     
    $}
\end{equation}
Recall that the definition of the Whittle index $W_T(s)$ of state $s$ is the smallest $\lambda$ s.t. it is optimal to make the arm passive in the current state.
Therefore, we have $\lambda = \beta \left((P_{0,1}^a - P_{0,1}^p)(V_{\lambda,\infty}(1)-V_{\lambda,\infty}(0))\right)$ for $s=0$ and, 
$\lambda = \beta \left((P_{1,1}^a - P_{1,1}^p)(V_{\lambda,\infty}(1)-V_{\lambda,\infty}(0))\right)$ for $s=1$. 

As a result, the Whittle index based approach would rank the index value of all $N$ arms and select top $k$ arms at each time step to activate. With fairness constraints, the change to the approach is minimal and intuitive. The optimal policy is still to choose the arms with the top "k" index values until a fairness constraint is violated for an arm. In that time step, we replace the last arm in top-k with the arm for which fairness constraint is violated. We show that this simple change works across the board for infinite and finite horizon, fully and partially observable settings.

\begin{theorem}\label{thm:FC_infinite_fully}
Algorithm~\ref{al:alg1} 
is optimal for RMAB with Fairness Constraints governed by parameters $\eta$ and $L$ under certain conditions.
\end{theorem}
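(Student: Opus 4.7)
The plan is to mirror the argument used for Theorem~\ref{thm:FC_infinite} but adapted to the fully observable two-state MDP of Equation~\ref{eq:mdp2}. Fix an arm $i$ whose fairness clock is about to expire, i.e.\ it has not been activated for $L-1$ consecutive slots, and suppose that some alternative policy activates it at an earlier slot $t < L$ while respecting both the budget $k$ and the fairness window. Since arms evolve independently, swapping a forced activation across time only affects arm $i$ and the arm it displaces in the top-$k$ index ranking; by the definition of the Whittle index, the displaced arm's value cannot increase by being activated earlier than its own threshold dictates. Therefore it suffices to show that, for arm $i$ alone, the expected discounted reward of activating it at slot $L$ is at least that of activating it at any earlier slot in the window.

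The next step is to quantify the \emph{intervention effect} $\Delta V_\infty(s) := V_{\lambda,\infty}(s,a=1) - V_{\lambda,\infty}(s,a=0)$ for $s\in\{0,1\}$, using the Bellman expressions for state $0$ and Equation~\ref{eq:mdp2} for state $1$. Substituting the closed forms gives $\Delta V_\infty(s)$ as an affine function of $V_{\lambda,\infty}(1)-V_{\lambda,\infty}(0)$ with coefficients that depend on $P^{a}_{s,1}-P^{p}_{s,1}$. I would then track how the \emph{state distribution} of arm $i$ at slot $t$ evolves under the passive kernel $\mathcal{P}^{p,i}$ starting from its last observed state, producing a scalar $\omega_s(t)\in[0,1]$ that plays exactly the role of the belief state in the partially observable argument. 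The expected intervention effect at slot $t$ is then $\omega_s(t)\Delta V_\infty(1) + (1-\omega_s(t))\Delta V_\infty(0)$, and optimality of deferring activation reduces to showing this quantity is non-decreasing in $t$ throughout the fairness window.

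Following the flow of the Proof Sketch of Theorem~\ref{thm:FC_infinite}, I would apply the chain rule: the sign of $\partial/\partial t$ of the intervention effect factors into (i) $\partial \omega_s(t)/\partial t$, whose sign is dictated by whether the channel is positively or negatively correlated (mirroring the two cases of Figure~\ref{fig:belief_update} and Figure~\ref{fig:belief_update2}), and (ii) $\partial/\partial \omega$ of the intervention effect, which I would bound by first deriving $|V_{\lambda,\infty}(1)-V_{\lambda,\infty}(0)|\le 1/(1-\beta(P^{a}_{1,1}-P^{a}_{0,1}))$ from Equation~\ref{eq:mdp2} and then comparing passive and active slopes. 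Matching the two signs yields the two regimes: an \emph{increasing} condition of the form $(P^{i,p}_{1,1}-P^{i,p}_{0,1})(1+\beta\Delta/(1-\beta))(1-\beta(P^{i,a}_{1,1}-P^{i,a}_{0,1}))\le P^{i,a}_{1,1}-P^{i,a}_{0,1}$ and a symmetric \emph{non-increasing} condition, recovering exactly the hypotheses stated in Theorem~\ref{thm:FC_infinite} with belief states replaced by observed-state occupancy probabilities.

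The main obstacle, as in the partially observable case, is the second factor $\partial\Delta V_\infty/\partial \omega$: the value functions are fixed points of a $\max$ operator, so differentiability is only piecewise. My plan is to circumvent this by working with discrete differences across two adjacent values of $t$ and using the forward/reverse threshold policy characterization from Section~\ref{app:Whittle} to replace the $\max$ by the appropriate linear branch inside the fairness window. Once this piecewise linearity is established, the remaining inequalities are algebraic manipulations parallel to those of Theorem~\ref{thm:FC_infinite}, and they identify precisely the ``certain conditions'' referenced in the statement; the full calculation is relegated to the appendix.
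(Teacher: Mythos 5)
Your proposal takes essentially the same route as the paper: the paper's own proof of this theorem simply declares the fully observable case a special case of the partially observable one and defers to the detailed proof of Theorem~\ref{thm:FC_infinite}, which is precisely the argument you reconstruct (occupancy probabilities playing the role of belief states, the chain-rule split into $\partial\omega/\partial t$ and $\partial(\Delta V)/\partial\omega$, value-difference bounds as in Lemma~\ref{lem:boundary} and Lemma~\ref{lem:infinite_boundary}, and the displaced arm dismissed via the index ranking). The paper's appendix additionally records a complementary ``general condition'' obtained by explicitly computing both arms' value functions under the swapped policies (Eqs.~\ref{eq:fully_1}--\ref{eq:fully_4}), which you replace by the same single-arm reduction the paper itself uses for Theorem~\ref{thm:FC_infinite}; this is a presentational difference, not a different proof.
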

\begin{proof_sketch}
This can be viewed as a special case of the partially observable setting. Please refer to the detailed proof for the partially observable case.
\end{proof_sketch}

\begin{figure}
    \centering
    \includegraphics[width=0.98\linewidth]{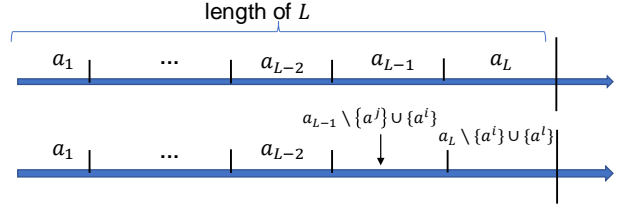}
    \caption{The action is $a_t$ at time step $t$. Then we move the action $a^i$ that satisfies the fairness constraint to one slot earlier to replace the $k-th$ action $a^j$, and add the action $a^l$ according to the index value at the end of the time interval.}
\end{figure}

\subsection{Finite Horizon, Fully Observable} 
Existing literature~\cite{glazebrook2006some, villar2016indexability} on RMAB deals with situations where the time horizon is infinite. In this part, we will demonstrate how the methodology discussed for infinite horizon setting to deal with fairness can also be applied to the finite horizon setting. There are two key challenges:  (i) computing the Whittle index under the finite horizon setting; (ii) to show that Whittle index value reduces as residual life time decreases;

It is costly to compute the index under the finite horizon setting ($O(|S|^kT)$ time and space complexity~\cite{hu2017asymptotically}; instead, we can take advantage of the fact that the index value will converge as $T\rightarrow \infty$ and $\lambda_0=0$. We can use a sigmoid curve to approximate the index value. One common example of a sigmoid function is the logistic function. This form is also used by~\citet{mate2021efficient}. Specifically, we let
\begin{equation}\label{eq:finite_Whittle_fully}
    {\displaystyle  W_T(s)={\frac {A}{1+e^{-k T}}}+C,}
\end{equation}
where ${\displaystyle A}$ and ${\displaystyle \frac{A}{2}+C}$ are the curve's bounds; ${\displaystyle k}$ is the logistic growth rate or steepness of the curve. We have $W_0(s)=0$ and $W_1(s)=\beta(s(P_{1,1}^a-P_{1,1}^p)+(1-s)(P_{0,1}^a-P_{0,1}^p))$, and let $W_{\infty}(s)=TW(s)$, where $TW(s)$ is the Whittle index value for state $s$ under infinite horizon. By solving these three constraints, we can get the three unknown parameters, $$C=-TW(s),A=2TW(s),$$
\begin{equation*}
\resizebox{\linewidth}{!}{$
    k=-\log (\frac{2TW(s)}{\beta(s(P_{1,1}^a-P_{1,1}^p)+(1-s)(P_{0,1}^a-P_{0,1}^p))+TW(s)}-1)
    $}
\end{equation*}

Algorithm~\ref{al:alg1} shows how to use $W_T(s)$ in considering fairness constraint under the finite horizon setting. As for optimality of Algorithm~\ref{al:alg1} in case of finite horizon, we have to show that value function and Whittle index decays over time in case of finite horizon, i.e.:
$$V_{\lambda_{T},T}(s)>V_{\lambda_{T-1},T-1}(s), \forall s\in \{0,1\}$$
$$\forall T>1: \lambda_T>\lambda_{T-1}>\lambda_0 = 0$$
This is to make the same argument as in case of infinite horizon, where we show that activating the fairness ensuring arm one step earlier results in lower value.

We show a lemma first, which can lead to the proof of the Whittle index decay.
\begin{lemma}\label{lem:inc}
Intuitively, we can have $V_{\lambda_{T},T}(s)>V_{\lambda_{T-1},T-1}(s)$, for $\forall s\in \{0,1\}$.
\end{lemma}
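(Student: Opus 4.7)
The plan is to split the desired strict inequality by interposing the intermediate quantity $V_{\lambda_{T-1},T}(s)$, writing
\begin{equation*}
V_{\lambda_T,T}(s) \;\ge\; V_{\lambda_{T-1},T}(s) \;>\; V_{\lambda_{T-1},T-1}(s),
\end{equation*}
so that the lemma reduces to two monotonicity facts for the single-arm value function: non-decreasingness in the subsidy $\lambda$ at fixed horizon, and non-decreasingness in the horizon $T$ at fixed subsidy. The first, combined with the Index decay of Theorem~\ref{thm:decay} ($\lambda_T>\lambda_{T-1}$), yields the weak inequality; the second, together with the strict positivity $\lambda_{T-1}>\lambda_0=0$ also supplied by Theorem~\ref{thm:decay}, will yield the strict one.

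For the monotonicity in $\lambda$, I would appeal directly to the fully observable Bellman equations (the counterparts of Eqs.~\ref{eq:a=0} and~\ref{eq:a=1}): the subsidy enters only additively and only in the passive branch, so for any fixed Markov policy $\pi$ the rolled-out return is an affine non-decreasing function of $\lambda$. Taking the pointwise maximum over $\pi$ preserves monotonicity, which immediately gives $V_{\lambda_T,T}(s)\ge V_{\lambda_{T-1},T}(s)$. No assumption beyond the basic Bellman structure is needed here.

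For the monotonicity in $T$ at fixed subsidy $\lambda=\lambda_{T-1}$, I would run an induction on $T$ using that the Bellman optimality operator is monotone in its operand. The base case $V_{\lambda,1}(s)\ge V_{\lambda,0}(s)$ follows because the agent can elect the passive action in the extra epoch, contributing the non-negative quantity $\lambda+s$ plus a non-negative discounted continuation term. The inductive step then propagates the inequality through one more application of the Bellman operator, giving $V_{\lambda,T}(s)\ge V_{\lambda,T-1}(s)$ for all $s\in\{0,1\}$.

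The main obstacle I anticipate is promoting the resulting weak inequality into a strict one. The $\lambda$-direction step is only weak in general, so I plan to carry strictness through the $T$-direction step: because $\lambda_{T-1}>0$ by Theorem~\ref{thm:decay}, extending the horizon by one epoch at this subsidy strictly improves the optimal return, since any optimal policy at horizon $T$ can at least dominate the horizon-$(T-1)$ rollout by appending a passive action that accrues $\lambda_{T-1}+s>0$ before the continuation. A small but important technical point will be to verify that this tail-padding argument is legitimate in the discounted setting and is not wiped out by worse transitions in later stages; this should follow from the indexability assumption together with the explicit form of the Bellman recursion, and will complete the chain and hence the lemma.
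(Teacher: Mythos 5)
There is a genuine gap here: circularity. Your chain $V_{\lambda_T,T}(s) \ge V_{\lambda_{T-1},T}(s) > V_{\lambda_{T-1},T-1}(s)$ leans on Theorem~\ref{thm:decay} at both hinges --- you need $\lambda_T > \lambda_{T-1}$ to order the subsidies in the first (weak) inequality, and you need $\lambda_{T-1} > 0$ to make the horizon-extension step strict. But in this paper the index-decay results are proved \emph{from} Lemma~\ref{lem:inc}: the proof of Theorem~\ref{thm:decay_fully} invokes the lemma to get $\frac{\partial V_{\lambda_T,T}(s)}{\partial T} > 0$, and the appendix proof of Theorem~\ref{thm:decay} likewise cites the lemma to assert $\frac{\partial f_1^p(\omega,\lambda,t)}{\partial t}>0$ and $\frac{\partial f_2^a(\omega,\lambda,t)}{\partial t}>0$. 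So the lemma must be established without appealing to index decay, and your argument cannot be repaired simply by citing that theorem; as written it proves nothing within the paper's logical order.

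The paper's own proof sidesteps any comparison of subsidies across horizons. It takes the optimal policy $\pi$ for horizon $T-1$, builds a horizon-$T$ policy $\pi'$ that plays $\pi$ for the first $T-1$ slots and then takes one more action in the final slot, and observes that this contributes one extra discounted term: $V_{\lambda_{T},T;\pi^{\prime}}(s) = V_{\lambda_{T-1},T-1;\pi}(s) + \beta^{T} V_{\lambda_{1},1}(s^\prime)$, whence $V_{\lambda_T,T}(s) \ge V_{\lambda_T,T;\pi'}(s) > V_{\lambda_{T-1},T-1}(s)$ once the tail term is positive. The only positivity fact needed is $\lambda_1 > 0$, which is available by direct computation, $\lambda_1 = \beta\left(s(P_{1,1}^a-P_{1,1}^p)+(1-s)(P_{0,1}^a-P_{0,1}^p)\right) > 0$, with no induction on the index sequence. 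Note that your second ingredient (horizon monotonicity at a fixed subsidy, via padding with an extra action) is essentially this argument already; the repair is to drop the $\lambda$-comparison step entirely, run the padding argument directly, and draw strictness from the explicitly computed $\lambda_1 > 0$ (the value of the appended final slot) rather than from $\lambda_{T-1} > 0$.
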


\begin{proof}\label{proof:fully_obs2}
For state $s\in \{0,1 \}$, we can always find a policy that ensures $V_{\lambda_{T},T}(s)>V_{\lambda_{T-1},T-1}(s)$. For example, we assume the optimal policy for the state $s$ with the residual time horizon $T-1$ is $\pi$, we can always find a policy $\pi^{\prime}$: keep the same policy for the first $T-1$ time slot for the same state $s$ with the residual time horizon $T$, and then pick the action for the last slot $T$ according to the observed state $s^{\prime}$. Because the reward is either $0$ or $1$, we could have 
\begin{equation}
\resizebox{\linewidth}{!}{$
V_{\lambda_{T},T;\pi^{\prime}}(s) = V_{\lambda_{T-1},T-1;\pi}(s) + \beta^{T} V_{\lambda_{1},1}(s^\prime)>V_{\lambda_{T-1},T-1;\pi}(s).
$}
\end{equation}
\end{proof}
Note this lemma is also suitable for the partially observable case.
Then we present the Theorem and corresponding proof.

\begin{theorem}\label{thm:decay_fully}
For a finite horizon $T$, the Whittle index $\lambda_T$ is the value that satisfies the equation $V_{\lambda_T,T}(s,a=0)= V_{\lambda_T,T}(s,a=1)$ for the observed state $s$. Assuming indexability holds, the Whittle index will decay as the value of horizon $T$ decreases: $\forall T>1: \lambda_T>\lambda_{T-1}>\lambda_0 = 0$.
\end{theorem}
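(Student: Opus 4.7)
The plan is to prove the strict monotonicity $\lambda_{T+1} > \lambda_T > \lambda_0 = 0$ by induction on $T$, translating the indifference condition into a scalar equation in $\lambda$ and exploiting indexability.

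First I would handle the base case by direct computation. Plugging $T=0$ into the Bellman equations (analogues of Eq.~\ref{eq:a=0} and Eq.~\ref{eq:a=1} in the fully observable setting) gives $V_{\lambda,0}(s,a=0) = \lambda + s$ and $V_{\lambda,0}(s,a=1) = s$, so $\lambda_0 = 0$. For $T=1$, setting the passive and active value functions equal and using $P_{s,0}^{\cdot} = 1 - P_{s,1}^{\cdot}$ yields $\lambda_1 = \beta(P_{s,1}^a - P_{s,1}^p)$, which is strictly positive under the standard assumption that the active action weakly increases the probability of transitioning to the good state. This gives $\lambda_1 > \lambda_0$.

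For the inductive step, I would define
\begin{equation}
f_T(\lambda) \;:=\; V_{\lambda,T}(s,a=0) - V_{\lambda,T}(s,a=1) \;=\; \lambda - \beta\bigl(P_{s,1}^a - P_{s,1}^p\bigr)\,\Delta_{T-1}(\lambda),
\end{equation}
where $\Delta_{T-1}(\lambda) := V_{\lambda,T-1}(1) - V_{\lambda,T-1}(0)$. The Whittle index satisfies $f_T(\lambda_T) = 0$, and indexability (Definition~\ref{def:Indexability}) guarantees that $f_T(\cdot)$ is strictly increasing in $\lambda$. Thus, to conclude $\lambda_{T+1} > \lambda_T$ it suffices to show that $f_{T+1}(\lambda_T) < 0 = f_T(\lambda_T)$, which by the display above is equivalent to
\begin{equation}
\Delta_T(\lambda_T) \;>\; \Delta_{T-1}(\lambda_T).
\end{equation}
So the core task reduces to proving that the value gap $\Delta_T(\lambda)$ between starting in the good vs.\ bad state is monotonically increasing in $T$.

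The main obstacle is exactly this monotonicity of $\Delta_T(\lambda)$, which I expect to be the technical heart of the argument. My plan is to establish it by a coupling/induction argument: using Lemma~\ref{lem:inc} (which already tells us $V_{\lambda_T,T}(s) > V_{\lambda_{T-1},T-1}(s)$) together with the Bellman recursion, I would write $\Delta_T(\lambda) - \Delta_{T-1}(\lambda)$ as a weighted combination of earlier value differences, with coefficients proportional to the channel gap $P_{1,1}^{\cdot} - P_{0,1}^{\cdot}$. Under the positively correlated channel assumption ($P_{1,1}^p \geq P_{0,1}^p$ and $P_{1,1}^a \geq P_{0,1}^a$) these coefficients are nonnegative, so induction on $T$ propagates $\Delta_T \geq \Delta_{T-1}$. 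Combining this with the base case gives $\lambda_{T+1} > \lambda_T > \lambda_0 = 0$.

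A secondary subtlety I anticipate is that $\lambda_T$ appears inside $\Delta_{T-1}(\cdot)$, so strict comparisons must first be made at a fixed $\lambda$ and only then transferred to the indexing sequence via indexability; I would be careful to separate these two steps. If the channel is negatively correlated ($P_{1,1}^p < P_{0,1}^p$), the signs in the coupling step flip and the argument must be restated in terms of the absolute gaps, paralleling the Case 2 discussion surrounding Fig.~\ref{fig:belief_update2}; the structure of the induction, however, is unchanged.
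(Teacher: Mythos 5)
Your proposal follows the same skeleton as the paper's proof: compute $\lambda_0=0$ and $\lambda_1=\beta(P_{s,1}^a-P_{s,1}^p)>0$ directly from the indifference condition, then induct on the horizon through the closed-form indifference equation. In fact your identity $f_T(\lambda)=\lambda-\beta(P_{s,1}^a-P_{s,1}^p)\Delta_{T-1}(\lambda)$ is algebraically identical to the paper's closed form $\lambda_T=\beta\bigl[(P_{s,0}^a-P_{s,0}^p)V_{T-1}(0)+(P_{s,1}^a-P_{s,1}^p)V_{T-1}(1)\bigr]$, since $P_{s,0}^a-P_{s,0}^p=-(P_{s,1}^a-P_{s,1}^p)$. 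Where you genuinely improve on the paper is in recognizing what this reduction actually requires: monotonicity in $T$ of the \emph{gap} $\Delta_T(\lambda)=V_{\lambda,T}(1)-V_{\lambda,T}(0)$, not of the two value functions separately. The paper's proof invokes Lemma~\ref{lem:inc} (each $V_{\lambda,T}(s)$ increases in $T$) and immediately concludes $\partial\lambda_T/\partial T>0$; but because the coefficient $P_{s,0}^a-P_{s,0}^p$ is negative, positivity of the two individual derivatives does not sign that combination --- what is needed is exactly your $\Delta_T>\Delta_{T-1}$. Your care in separating the fixed-$\lambda$ comparison from the transfer to the index sequence via indexability is also cleaner than the paper, which silently evaluates continuation values at $\lambda_{T-1}$.

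However, your plan for the crux --- the coupling/induction argument for $\Delta_T(\lambda)\geq\Delta_{T-1}(\lambda)$ --- has a gap. Writing $G_T(s)=V_{\lambda,T+1}(s)-V_{\lambda,T}(s)$, the claim $\Delta_{T+1}\geq\Delta_T$ is $G_T(1)\geq G_T(0)$. Sandwiching each $G_T(s)$ between $\beta\sum_{s'}P_{s,s'}^{a}G_{T-1}(s')$ evaluated at the optimal actions for horizons $T$ and $T+1$, the best lower bound one gets is $G_T(1)-G_T(0)\geq\beta\bigl(P_{1,1}^{b}-P_{0,1}^{a}\bigr)\bigl[G_{T-1}(1)-G_{T-1}(0)\bigr]$, where $b$ is the optimal action at state $1$ (horizon $T$) and $a$ the optimal action at state $0$ (horizon $T+1$). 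When $b\neq a$ --- say $b$ passive and $a$ active --- the coefficient $P_{1,1}^p-P_{0,1}^a$ is \emph{not} signed by within-action positive correlation ($P_{1,1}^p\geq P_{0,1}^p$, $P_{1,1}^a\geq P_{0,1}^a$) together with active dominance ($P_{s,1}^a\geq P_{s,1}^p$): take $P_{1,1}^p=0.6$, $P_{0,1}^p=0.3$, $P_{0,1}^a=0.9$, $P_{1,1}^a=0.95$, which satisfies all of these yet gives $P_{1,1}^p-P_{0,1}^a=-0.3$. So the induction does not propagate as stated; you need either an additional assumption (e.g.\ $P_{1,1}^p\geq P_{0,1}^a$) or a finer argument ruling out the problematic action switches. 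To be fair, this is precisely the hole in the paper's own proof as well --- your write-up is not weaker than the paper's, and it makes the missing step visible rather than hiding it, but neither argument actually closes it.
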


\begin{proof_sketch}
Because the state $s$ is fully observable, We can easily calculate $\lambda_0$ and $\lambda_1$ by solving equation $V_{\lambda,T}^p(s)=V_{\lambda,T}^a(s)$. We then could derive $\lambda_T>\lambda_{T-1}$ by obtaining $\frac{\partial \lambda_T}{\partial T}>0$ for $s\in \{ 0,1\}$. Detailed proof in appendix.
\end{proof_sketch}
\begin{proof}
Consider the discount reward criterion with discount factor of $\beta\in [0,1]$ (where $\beta =1 $ corresponds to the average criterion). 
\begin{equation}
\resizebox{\linewidth}{!}{$
\begin{aligned}
        &s + \lambda +\beta (s P_{1,1}^p + (1-s)P_{0,1}^p)= s+\beta (s P_{1,1}^a+(1-s)P_{0,1}^a)\\
        \rightarrow &\lambda =\beta(s(P_{1,1}^a-P_{1,1}^p)+(1-s)(P_{0,1}^a-P_{0,1}^p)) 
\end{aligned}
$}
\end{equation}
Because $P_{1,1}^a-P_{1,1}^p>0$ and $P_{0,1}^a-P_{0,1}^p>0$, we have $\lambda>0$. Now we show $\lambda_T>\lambda_{T-1}$. Equivalently, this can be expressed as $\frac{\partial \lambda_T}{\partial T}>0$. Because the state is fully observable, we first get the close form of $\lambda_T$.
\begin{itemize}[leftmargin=*]
    \item{\verb|Case 1:|} The state $s=0$, 
    \begin{equation}
    \resizebox{1\linewidth}{!}{$
    \begin{aligned}
        &0 + \lambda_T +\beta (P_{0,0}^p V_{\lambda_{T-1},T-1}(0)+P_{0,1}^p V_{\lambda_{T-1},T-1}(1))\\
        = &0+\beta (P_{0,0}^a V_{\lambda_{T-1},T-1}(0)+P_{0,1}^a V_{\lambda_{T-1},T-1}(1))\\
        \rightarrow &\lambda_T =\beta(V_{\lambda_{T-1},T-1}(0)(P_{0,0}^a-P_{0,0}^p)+V_{\lambda_{T-1},T-1}(1)(P_{0,1}^a-P_{0,1}^p)).
\end{aligned}
$}
\end{equation}
Intuitively, we can have $V_{\lambda_{T},T}(0)>V_{\lambda_{T-1},T-1}(0)$ (see Lemma~\ref{lem:inc}), and  $V_{\lambda_{T},T}(1)>V_{\lambda_{T-1},T-1}(1)$, we obtain $\frac{\partial V_{\lambda_{T},T}(0)}{\partial T}>0$ and $\frac{\partial V_{\lambda_{T},T}(1)}{\partial T}>0$. Hence, we can get
\begin{equation}
\resizebox{\linewidth}{!}{$
\frac{\partial \lambda_T}{\partial T}
= \beta((P_{0,0}^a-P_{0,0}^p)\frac{\partial V_{\lambda_{T-1},T-1}(0)}{\partial T}+(P_{0,1}^a-P_{0,1}^p)\frac{\partial V_{\lambda_{T-1},T-1}(1)}{\partial T}) >0
$}
\end{equation}
    \item{\verb|Case 2:|} The state $s=1$,
\begin{equation}
\resizebox{\linewidth}{!}{$
    \begin{aligned}
        &1 + \lambda_T +\beta (P_{1,0}^p V_{\lambda_{T-1},T-1}(0)+P_{1,1}^p V_{\lambda_{T-1},T-1}(1))\\
        = &1+\beta (P_{1,0}^a V_{\lambda_{T-1},T-1}(0)+P_{1,1}^a V_{\lambda_{T-1},T-1}(1))\\
        \rightarrow &\lambda_T =\beta(V_{\lambda_{T-1},T-1}(0)(P_{1,0}^a-P_{1,0}^p)+V_{\lambda_{T-1},T-1}(1)(P_{1,1}^a-P_{1,1}^p)).
    \end{aligned}
    $}
\end{equation}
    Similarly, we have
\begin{equation}
\resizebox{\linewidth}{!}{$
\frac{\partial \lambda_T}{\partial T}
= \beta((P_{1,0}^a-P_{1,0}^p)\frac{\partial V_{\lambda_{T-1},T-1}(0)}{\partial T}+(P_{1,1}^a-P_{1,1}^p)\frac{\partial V_{\lambda_{T-1},T-1}(1)}{\partial T}) >0
$}
\end{equation}    
\end{itemize}
Thus $\forall T>1: \lambda_T>\lambda_{T-1}>\lambda_0 = 0$.
\end{proof}

The proof of optimality for Algorithm~\ref{al:alg1} in case of finite horizon is similar to the infinite horizon setting.

Most of the time, the states $\{s_1,\dots,s_N\}\in\{ 0,1\}^N$ are not observable until the action is taken. The decision-maker, on the other hand, can infer the current states based on the observation history by keeping track of the belief states $\omega$.

\section{Proofs for the Theorem~\ref{thm:FC_infinite} and Theorem~\ref{thm:FC_finite}}
\label{app:proof}

\subsection{Boundary Lemma}
\begin{lemma}\label{lem:boundary}
For the finite time horizon $T$,  $V_{\lambda_T,T}(\omega_1)-V_{\lambda_T,T}(\omega_2)$ is bounded, where we have 
\begin{equation}
\resizebox{\linewidth}{!}{$
\begin{aligned}
(\omega_1-\omega_2)\sum_{t=0}^{T-1}\beta^t(P_{1,1}^a-P_{0,1}^a)^t
\leq V_{\lambda_T,T}(\omega_1)-V_{\lambda_T,T}(\omega_2) \leq
(\omega_1-\omega_2)\sum_{t=0}^{T-1}\beta^t
\end{aligned}
$}
\end{equation}
\end{lemma}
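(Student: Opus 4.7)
The plan is to prove both inequalities by induction on the horizon $T$, driven by the Bellman recursion~\eqref{eq:a=0}--\eqref{eq:a=1}; throughout I take $\omega_1 \geq \omega_2$ without loss of generality. For the base case $T=1$, the convention $V_{\lambda,0}\equiv 0$ gives $V_{\lambda_1,1}(\omega,a=0)=\lambda_1+\omega$ and $V_{\lambda_1,1}(\omega,a=1)=\omega$, and Theorem~\ref{thm:decay} supplies $\lambda_1\geq 0$, so $V_{\lambda_1,1}(\omega)=\lambda_1+\omega$ and $V_{\lambda_1,1}(\omega_1)-V_{\lambda_1,1}(\omega_2)=\omega_1-\omega_2$, which coincides with both bounds (each sum collapses to its $t=0$ term).

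For the inductive step, I would bound the passive and active Q-value differences separately at horizon $T$ and then lift the two-sided bound past the outer $\max$. Using $\tau^1(\omega_1)-\tau^1(\omega_2)=(\omega_1-\omega_2)(P_{1,1}^p-P_{0,1}^p)$, the passive Q-value difference equals $(\omega_1-\omega_2)+\beta\bigl[V_{T-1}(\tau^1(\omega_1))-V_{T-1}(\tau^1(\omega_2))\bigr]$, and the active Q-value difference equals $(\omega_1-\omega_2)\bigl(1+\beta[V_{T-1}(P_{1,1}^a)-V_{T-1}(P_{0,1}^a)]\bigr)$. Substituting the inductive hypothesis into the bracketed $V_{T-1}$ differences and using $P_{1,1}^p-P_{0,1}^p\leq 1$ and $P_{1,1}^a-P_{0,1}^a\leq 1$, both branch-wise differences telescope into $(\omega_1-\omega_2)\sum_{t=0}^{T-1}\beta^t$, giving the upper bound; for the lower bound, the active branch telescopes cleanly into $(\omega_1-\omega_2)\sum_{t=0}^{T-1}\beta^t(P_{1,1}^a-P_{0,1}^a)^t$. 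To transfer the bounds through $V_T=\max\{V_{\text{pas},T},V_{\text{act},T}\}$, I would invoke the elementary sandwich $\min\{a-c,b-d\}\leq\max\{a,b\}-\max\{c,d\}\leq\max\{a-c,b-d\}$, verified by a short case analysis on which of $a,b$ and $c,d$ attains the maximum.

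The hard step is closing the inductive lower bound on the passive branch: substituting the hypothesis there introduces a coefficient $(P_{1,1}^p-P_{0,1}^p)$ that must dominate $(P_{1,1}^a-P_{0,1}^a)$ for the geometric rate $(P_{1,1}^a-P_{0,1}^a)^t$ to propagate. This holds in the natural RMAB regime where passive evolution is at least as persistent as intervention-driven evolution, which is consistent with the hypotheses invoked in Theorems~\ref{thm:FC_infinite} and~\ref{thm:FC_finite}, where the state-separation rate enters through $\Delta=\min\{P_{1,1}^p-P_{0,1}^p,P_{1,1}^a-P_{0,1}^a\}$. Outside this regime, a small adversarial example (e.g.\ fully mixing passive with preserving active) shows the stated rate cannot survive, and the passive-branch induction should instead be closed with $\Delta^t$ in place of $(P_{1,1}^a-P_{0,1}^a)^t$ in the lower bound; the rest of the argument then goes through unchanged.
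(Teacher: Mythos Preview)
Your approach is the same as the paper's: induction on the horizon $T$, bounding the passive and active Q-value differences separately and then combining. Two points of comparison are worth noting.

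First, you are more explicit than the paper about lifting the branch-wise bounds through the outer $\max$. The paper bounds $V_{\lambda,T+1}(\omega_1,a)-V_{\lambda,T+1}(\omega_2,a)$ for each $a\in\{0,1\}$ and then tacitly concludes the same two-sided bound for $V_{\lambda,T+1}(\omega_1)-V_{\lambda,T+1}(\omega_2)$; your sandwich $\min\{a-c,b-d\}\le \max\{a,b\}-\max\{c,d\}\le \max\{a-c,b-d\}$ is precisely the missing justification, and it is both correct and the right level of generality here.

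Second, the subtlety you flag on the passive-branch lower bound is real, and the paper's own proof glosses over it. When the induction hypothesis is applied to $V_{\lambda,T}(\tau^1(\omega_1))-V_{\lambda,T}(\tau^1(\omega_2))$, the factor $\tau^1(\omega_1)-\tau^1(\omega_2)=(\omega_1-\omega_2)(P_{1,1}^p-P_{0,1}^p)$ appears; the paper's displayed step replaces this by $(\omega_1-\omega_2)$ and then invokes only $0\le P_{1,1}^a-P_{0,1}^a<1$ in its ``Line~$\ast$'' justification, which does not by itself restore the geometric rate $(P_{1,1}^a-P_{0,1}^a)^t$. As you observe, the stated lower bound goes through as written exactly when $P_{1,1}^p-P_{0,1}^p\ge P_{1,1}^a-P_{0,1}^a$, and otherwise the rate that the induction actually delivers on the passive branch is $\Delta^t$ with $\Delta=\min\{P_{1,1}^p-P_{0,1}^p,\,P_{1,1}^a-P_{0,1}^a\}$. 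Your diagnosis and proposed fix are correct; the paper does not address this point.
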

\begin{proof}
We prove the lower bound by induction, and the upper bound can be proven similarly. 

When $T=1$, we start from the definition of the value function $V_{\lambda_T,T}(\omega)$ to have
\begin{itemize}[leftmargin=*]
    \item passive actions:
    \begin{equation}
    \resizebox{\linewidth}{!}{$
    \begin{aligned}
        V_{\lambda_1,1}(\omega_1,a=0)-V_{\lambda_1,1}(\omega_2,a=0) &= \lambda_1 +\omega_1 - \lambda_1 -\omega_2 \\&= \omega_1-\omega_2
    \end{aligned}
    $}
    \end{equation}
    \item active actions:
    \begin{equation}
    V_{\lambda_1,1}(\omega_1,a=1)-V_{\lambda_1,1}(\omega_2,a=1) =  \omega_1-\omega_2
    \end{equation}
\end{itemize}
We get $V_{\lambda_1,1}(\omega_1)-V_{\lambda_1,1}(\omega_2) = \omega_1 -\omega_2$. Now we assume
$V_{\lambda_T,T}(\omega_1)-V_{\lambda_T,T}(\omega_2) \geq(\omega_1-\omega_2)\sum_{t=0}^{T-1}\beta^t(P_{1,1}^a-P_{0,1}^a)^t$
hold for $\forall T>1$, then for time horizon $T+1$, we have
\begin{itemize}[leftmargin=*]
    \item passive actions:
    \begin{equation}
    \resizebox{\linewidth}{!}{$
    \begin{aligned}
        &V_{\lambda_{T+1},T+1}(\omega_1,a=0)-V_{\lambda_{T+1},T+1}(\omega_2,a=0) \\= 
        &\left(\lambda_1 +\omega_1 + \beta V_{\lambda_{T},T}(\omega_1(1))\right)- \left(\lambda_1 +\omega_2+\beta V_{\lambda_{T},T}(\omega_2(1))\right) \\
        =& \omega_1-\omega_2+\beta \left(V_{\lambda_{T},T}(\omega_1(1))-V_{\lambda_{T},T}(\omega_2(1))\right)\\
        \geq& \omega_1 - \omega_2 +\beta (\omega_1 -\omega_2) \sum_{t=0}^{T-1}\beta^t(P_{1,1}^a-P_{0,1}^a)^t\\
        \geq& \omega_1 - \omega_2 +(\omega_1 -\omega_2)\sum_{t=1}^{T}\beta^t(P_{1,1}^a-P_{0,1}^a)^t \text{ Line $\ast$}\\
        =&(\omega_1 -\omega_2)\sum_{t=0}^{T}\beta^t(P_{1,1}^a-P_{0,1}^a)^t
    \end{aligned}
    $}
    \end{equation}
    Line $\ast$ is because $0\leq P_{1,1}^a-P_{0,1}^a<1$.
    \item active actions:
    \begin{equation}
    \resizebox{\linewidth}{!}{$
    \begin{aligned}
        &V_{\lambda_{T+1},T+1}(\omega_1,a=1)-V_{\lambda_{T+1},T+1}(\omega_2,a=1) \\
        = &\left(  \omega_1 +\beta(\omega_1  V_{\lambda_{T},T}(P_{1,1}^a)+(1-\omega_1)V_{\lambda_{T},T}(P_{0,1}^a))   \right) \\ 
        &- \left(  \omega_2 +\beta(\omega_2 V_{\lambda_{T},T}(P_{1,1}^a)+(1-\omega_2)V_{\lambda_{T},T}(P_{0,1}^a))   \right) \\
        =&\omega_1-\omega_2 + \beta\left( (\omega_1-\omega_2)(V_{\lambda_{T},T}(P_{1,1}^a)-V_{\lambda_{T},T}(P_{0,1}^a))   \right)\\
        =& \omega_1-\omega_2+\beta(V_{\lambda_{T},T}(\omega_1(1))-V_{\lambda_{T},T}(\omega_2(1)))\\
        \geq& \omega_1 - \omega_2 +\beta(\omega_1-\omega_2)\sum_{t=0}^{T-1}\beta^t(P_{1,1}^a-P_{0,1}^a)^t\\
        \geq& \omega_1 - \omega_2 +(\omega_1-\omega_2)\sum_{t=1}^{T}\beta^t(P_{1,1}^a-P_{0,1}^a)^t \text{ Line $\ast$}\\
        =&(\omega_1-\omega_2)\sum_{t=0}^{T}\beta^t(P_{1,1}^a-P_{0,1}^a)^t
    \end{aligned}
    $}
    \end{equation}
\end{itemize}
Line $\ast$ is because $0\leq P_{1,1}^a-P_{0,1}^a<1$. Thus we have $V_{\lambda_T,T}(\omega_1)-V_{\lambda_T,T}(\omega_2)\geq 
(\omega_1-\omega_2)\sum_{t=0}^{T-1}\beta^t(P_{1,1}^a-P_{0,1}^a)^t$. Similarly, we can prove the lower bound.
\end{proof}

\begin{lemma}\label{lem:infinite_boundary}
For the infinite residual time horizon $T\rightarrow \infty$, $V_{\lambda_T,T}(\omega_1)-V_{\lambda_T,T}(\omega_2)$is bounded. Specifically, we have
\begin{equation}
\resizebox{\linewidth}{!}{$
\begin{aligned}
\frac{\omega_1 -\omega_2}{1-\beta(P_{1,1}^a-P_{0,1}^a)}
\leq V_{\lambda_T,T}(\omega_1)-V_{\lambda_T,T}(\omega_2) \leq
\frac{\omega_1-\omega_2}{1-\beta}
\end{aligned}
$}
\end{equation}
\end{lemma}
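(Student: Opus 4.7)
The plan is to obtain the infinite-horizon bounds as a direct consequence of Lemma~\ref{lem:boundary} by passing to the limit $T \to \infty$. Since each per-stage reward $\omega \in [0,1]$ and the subsidy $\lambda$ are bounded, the Bellman operator defined by Eq.~\ref{eq:value_update} is a $\beta$-contraction on the space of bounded functions under the sup norm, so $V_{\lambda,T}(\omega) \to V_{\lambda,\infty}(\omega)$ pointwise as $T \to \infty$. This standard fact from discounted MDP theory will let us exchange the limit with the inequality.

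Starting from the finite-horizon bound proved in Lemma~\ref{lem:boundary},
$$(\omega_1-\omega_2)\sum_{t=0}^{T-1}\beta^t(P_{1,1}^a-P_{0,1}^a)^t \;\leq\; V_{\lambda,T}(\omega_1)-V_{\lambda,T}(\omega_2) \;\leq\; (\omega_1-\omega_2)\sum_{t=0}^{T-1}\beta^t,$$
I would take $T \to \infty$ in all three expressions. Both partial sums are geometric series with common ratios in $[0,1)$: we have $\beta \in [0,1)$ by assumption and $P_{1,1}^a - P_{0,1}^a \in [0,1)$, hence $\beta(P_{1,1}^a-P_{0,1}^a) < 1$. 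The upper partial sum converges to $1/(1-\beta)$ and the lower partial sum converges to $1/\bigl(1-\beta(P_{1,1}^a-P_{0,1}^a)\bigr)$. Combining these limits with the pointwise convergence of $V_{\lambda,T}$ yields the claimed inequalities.

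An alternative self-contained route would mimic the induction used to prove Lemma~\ref{lem:boundary} directly on the infinite-horizon Bellman equation in Eq.~\ref{eq:value_update}: each unfolding of the recursion adds a term of size $\beta^t (\omega_1-\omega_2)$ to the upper bound and $\beta^t(P_{1,1}^a - P_{0,1}^a)^t (\omega_1-\omega_2)$ to the lower bound, and summing the infinite geometric series gives the same result once we invoke contraction to argue the tail vanishes.

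The only step that requires any care is justifying the limit exchange, but this is entirely routine for discounted MDPs with bounded rewards, so I do not anticipate a real obstacle. The proof should be short and essentially mechanical, relying on Lemma~\ref{lem:boundary} and the geometric-series identity.
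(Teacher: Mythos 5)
Your proposal is correct and follows essentially the same route as the paper: the paper likewise obtains the infinite-horizon bounds by treating them as the $T\to\infty$ limit of Lemma~\ref{lem:boundary} and evaluating the two geometric series. If anything, your explicit appeal to the $\beta$-contraction property to justify $V_{\lambda,T}\to V_{\lambda,\infty}$ is more careful than the paper, which passes to the limit without comment.
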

\begin{proof}
This can be viewed as a special case of the finite residual time horizon setting where $T\rightarrow \infty$.
Thus we can easily derive the lower and upper bound according to the formula for the geometric series:
\begin{equation*}
    \underset{T\rightarrow \infty}{\lim}(\omega_1-\omega_2)\sum_{t=0}^{T-1}\beta^t(P_{1,1}^a-P_{0,1}^a)^t = \frac{\omega_1 -\omega_2}{1-\beta(P_{1,1}^a-P_{0,1}^a)}
\end{equation*}
and
\begin{equation*}
    \underset{T\rightarrow \infty}{\lim}(\omega_1-\omega_2)\sum_{t=0}^{T-1}\beta^t = \frac{\omega_1-\omega_2}{1-\beta}
\end{equation*}
\end{proof}

Consider the single-armed bandit process with subsidy $\lambda$ under the infinite time horizon $T\rightarrow \infty$, we have:
\begin{equation}
\resizebox{\linewidth}{!}{$
    \displaystyle
\begin{aligned}
  V_{\lambda,\infty}(\omega) &= \max
    \begin{cases}
      \lambda+\omega +\beta  V_{\lambda,\infty}(\tau^1(\omega)) & \text{passive}\\
       \omega+\beta\left(\omega V_{\lambda,\infty}(P_{1,1}^a)+(1-\omega) V_{\lambda,\infty}(P_{0,1}^a)\right) & \text{active}
    \end{cases}
\end{aligned}
$}
\end{equation}
and we can get 
\begin{equation}\label{eq:infinite_partial}
\resizebox{\linewidth}{!}{$
    \displaystyle
    \begin{aligned}
    \frac{\partial V_{\lambda,\infty}(\omega)}{\partial \omega} &=
    \begin{cases}
       1 +\beta  \frac{ \partial V_{\lambda,\infty}(\tau^1(\omega))}{\partial \tau^1(\omega)} \frac{\partial \tau^1(\omega)}{\partial \omega} & \text{passive}\\
       1+\beta\left( V_{\lambda,\infty}(P_{1,1}^a)- V_{\lambda,\infty}(P_{0,1}^a)\right) & \text{active}
    \end{cases}
\end{aligned}
$}
\end{equation}

Similarly, for the finite residual time horizon $T$ we have:
\begin{equation}\label{eq:finite_partial}
\resizebox{\linewidth}{!}{$
    \displaystyle
    \begin{aligned}
    \frac{\partial V_{\lambda,T}(\omega)}{\partial \omega} &=
    \begin{cases}
       1 +\beta  \frac{ \partial V_{\lambda,T-1}(\tau^1(\omega))}{\partial \tau^1(\omega)} \frac{\partial \tau^1(\omega)}{\partial \omega} & \text{passive}\\
       1+\beta\left( V_{\lambda,T-1}(P_{1,1}^a)- V_{\lambda,T-1}(P_{0,1}^a)\right) & \text{active}
    \end{cases}
\end{aligned}
$}
\end{equation}
Note that for any belief state $\omega$, $\tau^1(\omega)$ is the $1$-step belief state update of $\omega$ when the passive arm is unobserved for another $1$ consecutive slot. According to the Eq.~\ref{eq:1}, we have $\tau^1(\omega) = \omega P_{1,1}^p + (1-\omega)P_{0,1}^p$, thus 
\begin{equation}\label{eq:partial_omega_1}
    0< \frac{\partial \tau^1(\omega)}{\partial \omega} = (P_{1,1}^p-P_{0,1}^p) <1
\end{equation}

\begin{lemma}\label{lem:partial_value_bound}
For the finite residual time horizon $T$, we have $\frac{\partial V_{\lambda_T,T}(\omega)}{\partial \omega} \geq \min \{1 +\beta(P_{1,1}^p-P_{0,1}^p) \sum_{t=0}^{T-2}[\beta^t(P_{1,1}^a-P_{0,1}^a)^t], 1+\beta (P_{1,1}^a-P_{0,1}^a) \sum_{t=0}^{T-2}[\beta^t(P_{1,1}^a-P_{0,1}^a)^t]   \}$
\end{lemma}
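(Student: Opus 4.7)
The plan is to prove this by induction on the residual horizon $T$, leveraging the recursive expression for $\partial V_{\lambda,T}(\omega)/\partial\omega$ given in Eq.~\ref{eq:finite_partial}, together with the already established bound on value differences in Lemma~\ref{lem:boundary}. The base case $T=1$ is immediate: directly from Eq.~\ref{eq:a=0} and Eq.~\ref{eq:a=1} we have $V_{\lambda,1}(\omega,a=0)=\lambda+\omega$ and $V_{\lambda,1}(\omega,a=1)=\omega$, so $\partial V_{\lambda,1}/\partial\omega=1$ in either branch, and the right-hand side of the claim reduces to $\min\{1,1\}=1$ since the sum $\sum_{t=0}^{T-2}$ is empty when $T=1$.

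For the inductive step, assume the bound holds at residual horizon $T-1$ and establish it at $T$. Since $V_{\lambda,T}(\omega)=\max\bigl(V_{\lambda,T}(\omega,a=0),V_{\lambda,T}(\omega,a=1)\bigr)$, the derivative $\partial V_{\lambda_T,T}(\omega)/\partial\omega$ coincides pointwise with the derivative of whichever branch achieves the maximum, so a lower bound for it is obtained by taking the minimum of lower bounds on each branch. For the \emph{active branch}, Eq.~\ref{eq:finite_partial} gives $\partial V_{\lambda,T}(\omega,a=1)/\partial\omega=1+\beta\bigl(V_{\lambda,T-1}(P_{1,1}^{a})-V_{\lambda,T-1}(P_{0,1}^{a})\bigr)$, and applying Lemma~\ref{lem:boundary} with $\omega_1=P_{1,1}^{a},\omega_2=P_{0,1}^{a}$ yields $\ge 1+\beta(P_{1,1}^{a}-P_{0,1}^{a})\sum_{t=0}^{T-2}\beta^{t}(P_{1,1}^{a}-P_{0,1}^{a})^{t}$, which is exactly the second term of the claimed min. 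For the \emph{passive branch}, Eq.~\ref{eq:finite_partial} combined with Eq.~\ref{eq:partial_omega_1} gives $\partial V_{\lambda,T}(\omega,a=0)/\partial\omega=1+\beta(P_{1,1}^{p}-P_{0,1}^{p})\cdot\partial V_{\lambda,T-1}(\tau^{1}(\omega))/\partial\tau^{1}(\omega)$; I plug in the inductive hypothesis for the latter factor and then use the telescoping identity $\sum_{t=0}^{T-2}\beta^{t}(P_{1,1}^{a}-P_{0,1}^{a})^{t}=1+\beta(P_{1,1}^{a}-P_{0,1}^{a})\sum_{t=0}^{T-3}\beta^{t}(P_{1,1}^{a}-P_{0,1}^{a})^{t}$ to align indices and recover the first term of the claimed min. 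Taking the minimum of the two branch bounds delivers the stated inequality at residual horizon $T$, completing the induction.

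The main obstacle will be the passive-branch step. The inductive hypothesis itself is a min of two expressions, and after multiplying by the prefactor $\beta(P_{1,1}^{p}-P_{0,1}^{p})$ one must carefully argue that the resulting quantity still dominates the target first term $1+\beta(P_{1,1}^{p}-P_{0,1}^{p})\sum_{t=0}^{T-2}\beta^{t}(P_{1,1}^{a}-P_{0,1}^{a})^{t}$. This requires a short case analysis comparing the magnitudes of $(P_{1,1}^{p}-P_{0,1}^{p})$ and $(P_{1,1}^{a}-P_{0,1}^{a})$ and invoking the fact that both quantities lie in $[0,1)$, so that any cross terms such as $\beta^{2}(P_{1,1}^{p}-P_{0,1}^{p})(P_{1,1}^{a}-P_{0,1}^{a})$ remain well controlled. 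All other manipulations are routine algebra on geometric-style sums; the induction structure and the two cited facts (Lemma~\ref{lem:boundary} and Eq.~\ref{eq:partial_omega_1}) do the substantive work.
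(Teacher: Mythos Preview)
Your active-branch argument is fine and coincides with the paper's. The passive-branch induction, however, does not close, and the ``short case analysis'' you flag as the main obstacle is in fact a genuine gap. Write $d_p:=P_{1,1}^{p}-P_{0,1}^{p}$, $d_a:=P_{1,1}^{a}-P_{0,1}^{a}$, and $S_n:=\sum_{t=0}^{n}\beta^{t}d_a^{t}$. In the case $d_p<d_a$ the inductive hypothesis at horizon $T-1$ gives only $\partial V_{\lambda,T-1}/\partial\omega\ge 1+\beta d_p S_{T-3}$ (the smaller of the two terms in the min). Plugging this into the passive recursion yields
\[
\frac{\partial V_{\lambda,T}(\omega,a=0)}{\partial\omega}\ \ge\ 1+\beta d_p\bigl(1+\beta d_p S_{T-3}\bigr)
\ =\ 1+\beta d_p+\beta^{2}d_p^{2}S_{T-3},
\]
whereas the target first term is $1+\beta d_p S_{T-2}=1+\beta d_p(1+\beta d_a S_{T-3})=1+\beta d_p+\beta^{2}d_p d_a S_{T-3}$. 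The required inequality $d_p^{2}S_{T-3}\ge d_p d_a S_{T-3}$ is exactly $d_p\ge d_a$, contradicting the case assumption. No amount of ``cross-term control'' via $d_p,d_a\in[0,1)$ rescues this, because the shortfall is first order in $d_a-d_p$; the inductive hypothesis is simply too weak in this regime.

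The paper sidesteps the issue by \emph{not} inducting on the derivative at all. For the passive branch it writes
\[
\frac{\partial V_{\lambda,T-1}(\tau^{1}(\omega))}{\partial\tau^{1}(\omega)}
=\lim_{\delta\to 0}\frac{V_{\lambda,T-1}(\tau^{1}(\omega)+\delta)-V_{\lambda,T-1}(\tau^{1}(\omega))}{\delta}
\]
and applies Lemma~\ref{lem:boundary} directly to this difference quotient, obtaining $\partial V_{\lambda,T-1}/\partial\omega\ge S_{T-2}$. Substituting via Eq.~\ref{eq:partial_omega_1} then gives $\partial V_{\lambda,T}(\omega,a=0)/\partial\omega\ge 1+\beta d_p S_{T-2}$ immediately, with no case split. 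The point is that Lemma~\ref{lem:boundary} already furnishes the stronger lower bound $S_{T-2}$ on the derivative at horizon $T-1$, which is exactly what your induction fails to reproduce when $d_p<d_a$. Replace your inductive step for the passive branch with this difference-quotient application of Lemma~\ref{lem:boundary} and the proof goes through in one line; the induction scaffolding (and the base case) can then be dropped entirely.
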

\begin{proof}
According to Eq.~\ref{eq:finite_partial}, for the passive action, we have:
\begin{equation}\label{eq:finite_partial_lower_1}
\resizebox{\linewidth}{!}{$
    \begin{aligned}
            &1 +\beta  \frac{ \partial V_{\lambda,T-1}(\tau^1(\omega))}{\partial \tau^1(\omega)} \frac{\partial \tau^1(\omega)}{\partial \omega}\\
            =&1 +\beta \underset{\delta\rightarrow 0}{\lim}\frac{V_{\lambda,T-1}(\tau^1(\omega)+\delta)-V_{\lambda,T-1}(\tau^1(\omega))}{\delta} (P_{1,1}^p-P_{0,1}^p)\\
    \end{aligned}
$}
\end{equation}
According to Lemma~\ref{lem:boundary}, let $\omega_1 = \tau^1(\omega)+\delta$ and $\omega_2=\tau^1(\omega)$, then we have $V_{\lambda,T-1}(\tau^1(\omega)+\delta)-V_{\lambda,T-1}(\tau^1(\omega))\geq (\omega_1-\omega_2)\sum_{t=0}^{T-1}\beta^t(P_{1,1}^a-P_{0,1}^a)^t = \delta \sum_{t=0}^{T-1}\beta^t(P_{1,1}^a-P_{0,1}^a)^t$. Thus Eq.~\ref{eq:finite_partial_lower_1} becomes:
\begin{equation}\label{eq:finite_partial_lower_2}
    \begin{aligned}
        &1 +\beta  \frac{ \partial V_{\lambda,T-1}(\tau^1(\omega))}{\partial \tau^1(\omega)} \frac{\partial \tau^1(\omega)}{\partial \omega}\\
        \geq& 1 +\beta(P_{1,1}^p-P_{0,1}^p) \sum_{t=0}^{T-2}[\beta^t(P_{1,1}^a-P_{0,1}^a)^t]\\
    \end{aligned}
\end{equation}
Similarly, for the active action, according to lower bound in Lemma~\ref{lem:boundary}, we have:
\begin{equation}\label{eq:finite_partial_lower_3}
    \begin{aligned}
            &1+\beta\left( V_{\lambda,T-1}(P_{1,1}^a)- V_{\lambda,T-1}(P_{0,1}^a)\right)\\
            \geq & 1+\beta (P_{1,1}^a-P_{0,1}^a) \sum_{t=0}^{T-2}[\beta^t(P_{1,1}^a-P_{0,1}^a)^t]\\
    \end{aligned}
\end{equation}
Therefore, we have $\frac{\partial V_{\lambda_T,T}(\omega)}{\partial \omega} \geq \min \{ (P_{1,1}^p-P_{0,1}^p), (P_{1,1}^a-P_{0,1}^a)\}\cdot \beta \cdot \sum_{t=0}^{T-2}[\beta^t(P_{1,1}^a-P_{0,1}^a)^t]+1$

\end{proof}

\begin{lemma}\label{lem:partial_value_bound_infinite}
For the infinite residual time horizon $T\rightarrow \infty$, we have $\frac{\partial V_{\lambda_T,T}(\omega)}{\partial \omega} \geq \min \{1 +\frac{\beta(P_{1,1}^p-P_{0,1}^p)}{ 1-(\beta(P_{1,1}^a-P_{0,1})}, \frac{1}{1-\beta (P_{1,1}^a-P_{0,1}^a)} \}$
\end{lemma}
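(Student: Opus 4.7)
The plan is to mirror the finite horizon argument in Lemma~\ref{lem:partial_value_bound}, but using the infinite-horizon boundary estimates from Lemma~\ref{lem:infinite_boundary} in place of Lemma~\ref{lem:boundary}. In particular, I will apply Eq.~\ref{eq:infinite_partial} to split into the passive and active cases, bound each one from below separately, and then take the minimum. Since the infinite horizon value function $V_{\lambda,\infty}$ is the fixed point of the Bellman operator, and Lemma~\ref{lem:infinite_boundary} has already established its Lipschitz behavior in $\omega$, the analysis reduces to a direct plug-in.

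For the passive action branch, I start from
\begin{equation*}
1 + \beta \, \frac{\partial V_{\lambda,\infty}(\tau^1(\omega))}{\partial \tau^1(\omega)}\, \frac{\partial \tau^1(\omega)}{\partial \omega}.
\end{equation*}
From Eq.~\ref{eq:partial_omega_1}, $\partial \tau^1(\omega)/\partial \omega = P_{1,1}^p - P_{0,1}^p$, while Lemma~\ref{lem:infinite_boundary} rewritten as a difference quotient gives
\begin{equation*}
\frac{\partial V_{\lambda,\infty}(\tau^1(\omega))}{\partial \tau^1(\omega)} \geq \frac{1}{1-\beta(P_{1,1}^a - P_{0,1}^a)}.
\end{equation*}
Combining these yields a lower bound of $1 + \frac{\beta(P_{1,1}^p - P_{0,1}^p)}{1-\beta(P_{1,1}^a - P_{0,1}^a)}$, matching the first term in the stated minimum.

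For the active action branch, I apply Lemma~\ref{lem:infinite_boundary} with $\omega_1 = P_{1,1}^a$ and $\omega_2 = P_{0,1}^a$ to obtain
\begin{equation*}
V_{\lambda,\infty}(P_{1,1}^a) - V_{\lambda,\infty}(P_{0,1}^a) \geq \frac{P_{1,1}^a - P_{0,1}^a}{1 - \beta(P_{1,1}^a - P_{0,1}^a)},
\end{equation*}
so that $1 + \beta(V_{\lambda,\infty}(P_{1,1}^a) - V_{\lambda,\infty}(P_{0,1}^a))$ simplifies algebraically to $\frac{1}{1 - \beta(P_{1,1}^a - P_{0,1}^a)}$, which is the second term of the minimum. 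Taking the minimum over the two branches, as demanded by the $\max$ in the Bellman equation, yields the claim.

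The only subtlety — and what I expect to be the main obstacle — is justifying that $\partial V_{\lambda,\infty}/\partial \omega$ exists and that the difference-quotient bound from Lemma~\ref{lem:infinite_boundary} transfers to the derivative. This can be handled either by noting that $V_{\lambda,\infty}$ is a uniform limit of $V_{\lambda,T}$ (since the Bellman operator is a $\beta$-contraction) and that Lemma~\ref{lem:partial_value_bound} provides uniform lower bounds on the finite horizon derivatives whose limits match the claimed infinite horizon bound, or, more simply, by interpreting $\partial V_{\lambda,\infty}/\partial \omega$ in the sense of one-sided Dini derivatives, which is enough for the downstream use of this lemma in the proof of Theorem~\ref{thm:FC_infinite}. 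With this caveat addressed, the rest of the argument is a line-by-line analogue of Lemma~\ref{lem:partial_value_bound} in the limit $T \to \infty$.
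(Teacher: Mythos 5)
Your proposal is correct and follows essentially the same route as the paper: the paper's own proof is a one-line remark that the argument of Lemma~\ref{lem:partial_value_bound} carries over with $T \to \infty$, and you carry this out explicitly by splitting into the passive and active branches via Eq.~\ref{eq:infinite_partial} and substituting the infinite-horizon bounds of Lemma~\ref{lem:infinite_boundary} (itself the geometric-series limit of Lemma~\ref{lem:boundary}) exactly where the finite-horizon proof uses Lemma~\ref{lem:boundary}. Your additional care about transferring the difference-quotient bound to the derivative (via uniform limits or Dini derivatives) goes beyond what the paper addresses but is consistent with, and a welcome tightening of, its argument.
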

\begin{proof}
The proof is similar to the proof for Lemma~\ref{lem:partial_value_bound} of the finite setting. We can get the result with assuming $T\rightarrow \infty$.
\end{proof}

\begin{lemma}\label{lem:partial_value_bound_upper_finite}
For the finite residual time horizon $T$, we have $\frac{\partial V_{\lambda_T,T}(\omega)}{\partial \omega} \leq \min \{1 +(P_{1,1}^p-P_{0,1}^p) \sum_{t=1}^{T-1}\beta^t, 1+ (P_{1,1}^a-P_{0,1}^a) \sum_{t=1}^{T-1}\beta^t   \}$
\end{lemma}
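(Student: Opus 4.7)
\begin{proof_sketch}
The plan is to mirror the argument used for Lemma~\ref{lem:partial_value_bound}, but to substitute the \emph{upper} bound from Lemma~\ref{lem:boundary} in place of the lower bound at each step. Starting from Eq.~\ref{eq:finite_partial}, I would treat the passive and active branches separately and derive a branch-specific upper bound on the contribution to $\partial V_{\lambda_T,T}(\omega)/\partial \omega$.

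For the passive branch, the derivative equals $1 + \beta \cdot \frac{\partial V_{\lambda,T-1}(\tau^1(\omega))}{\partial \tau^1(\omega)} \cdot \frac{\partial \tau^1(\omega)}{\partial \omega}$. I would rewrite the inner derivative as a forward-difference limit and apply the upper half of Lemma~\ref{lem:boundary}, namely $V_{\lambda,T-1}(\tau^1(\omega)+\delta)-V_{\lambda,T-1}(\tau^1(\omega)) \leq \delta \sum_{t=0}^{T-2}\beta^t$, to obtain $\frac{\partial V_{\lambda,T-1}(\tau^1(\omega))}{\partial \tau^1(\omega)} \leq \sum_{t=0}^{T-2}\beta^t$. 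Combined with Eq.~\ref{eq:partial_omega_1}, which gives $\partial \tau^1(\omega)/\partial \omega = P_{1,1}^p-P_{0,1}^p$, this upper bounds the passive branch by $1 + \beta(P_{1,1}^p-P_{0,1}^p)\sum_{t=0}^{T-2}\beta^t = 1 + (P_{1,1}^p-P_{0,1}^p)\sum_{t=1}^{T-1}\beta^t$, which is exactly the first term inside the $\min$. For the active branch, I would instantiate Lemma~\ref{lem:boundary} directly with $\omega_1=P_{1,1}^a$ and $\omega_2=P_{0,1}^a$ to bound $V_{\lambda,T-1}(P_{1,1}^a)-V_{\lambda,T-1}(P_{0,1}^a) \leq (P_{1,1}^a-P_{0,1}^a)\sum_{t=0}^{T-2}\beta^t$, giving the second term in the $\min$ after the same reindexing $\beta \sum_{t=0}^{T-2}\beta^t = \sum_{t=1}^{T-1}\beta^t$.

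To deliver an induction rather than a one-step argument, I would set up the base case $T=1$ (where both bounds reduce to $1$, matching $V_{\lambda_1,1}(\omega)=\lambda_1+\omega$ or $\omega+\beta(\cdots)$ with trivial $\omega$-derivative equal to $1$), assume the bound for horizon $T-1$ as the inductive hypothesis so that the inner derivative applied inside the passive branch inherits the $\sum_{t=0}^{T-2}\beta^t$ bound, and then conclude the step for horizon $T$ as above. The arithmetic parallels Eqs.~\eqref{eq:finite_partial_lower_2}--\eqref{eq:finite_partial_lower_3} but with inequalities flipped.

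The main obstacle is the usual subtlety that the value function is a pointwise $\max$ of the two action branches, so its derivative at a differentiable point coincides with that of the optimal branch and is most naturally upper bounded by the \emph{larger} of the two branch-specific bounds. Reconciling this with the $\min$ asserted in the lemma will require either an additional monotonicity observation (e.g.\ exploiting indexability or the structural fact that the recursion for the passive branch ``absorbs'' into the same geometric sum used for the active branch) or showing that on the relevant region the tighter of the two bounds still dominates; I expect this reconciliation to be the key place where care is needed rather than the bookkeeping on the geometric series.
\end{proof_sketch}
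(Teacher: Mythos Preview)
Your proposal is essentially the same as the paper's proof: the paper also mirrors Lemma~\ref{lem:partial_value_bound}, invoking the upper half of Lemma~\ref{lem:boundary} on the passive and active branches of Eq.~\eqref{eq:finite_partial} to obtain the two terms inside the $\min$ (your Eqs.~match the paper's \eqref{eq:finite_partial_upper_2} and \eqref{eq:finite_partial_upper_3} exactly). The paper does not set up a separate induction; it simply appeals to Lemma~\ref{lem:boundary} directly, so your one-step argument is already what the paper does, and your extra inductive scaffolding is unnecessary.

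One remark: the ``main obstacle'' you flag---that a pointwise $\max$ would naively yield the larger of the two branch bounds rather than the $\min$---is a genuine logical gap, but the paper's proof simply asserts the $\min$ at the end without addressing it either. So on this point you are not deviating from the paper; you are just being more careful than it is.
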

\begin{proof}
The proof is similar to the proof of Lemma~\ref{lem:partial_value_bound}. According to Eq.~\ref{eq:finite_partial}, we have:
\begin{itemize}[leftmargin=*]
    \item passive actions:
    \begin{equation}\label{eq:finite_partial_upper_1}
    \resizebox{\linewidth}{!}{$
    \begin{aligned}
            &1 +\beta  \frac{ \partial V_{\lambda,T-1}(\tau^1(\omega))}{\partial \tau^1(\omega)} \frac{\partial \tau^1(\omega)}{\partial \omega}\\
            =&1 +\beta \underset{\delta\rightarrow 0}{\lim}\frac{V_{\lambda,T-1}(\tau^1(\omega)+\delta)-V_{\lambda,T-1}(\tau^1(\omega))}{\delta} (P_{1,1}^p-P_{0,1}^p)\\
    \end{aligned}
$}
\end{equation}
According to Lemma~\ref{lem:boundary}, let $\omega_1 = \tau^1(\omega)+\delta$ and $\omega_2=\tau^1(\omega)$, then we have $V_{\lambda,T-1}(\tau^1(\omega)+\delta)-V_{\lambda,T-1}(\tau^1(\omega))\leq (\omega_1-\omega_2)\sum_{t=0}^{T-1}\beta^t = \delta \sum_{t=0}^{T-1}\beta^t$. Thus Eq.~\ref{eq:finite_partial_upper_1} becomes:
\begin{equation}\label{eq:finite_partial_upper_2}
    \begin{aligned}
        &1 +\beta  \frac{ \partial V_{\lambda,T-1}(\tau^1(\omega))}{\partial \tau^1(\omega)} \frac{\partial \tau^1(\omega)}{\partial \omega}\\
        \leq& 1 +\beta(P_{1,1}^p-P_{0,1}^p) \sum_{t=0}^{T-2}[\beta^t]\\
    \end{aligned}
\end{equation}
    \item active actions, similarly, according to upper bound in Lemma~\ref{lem:boundary}, we have:
\begin{equation}\label{eq:finite_partial_upper_3}
    \begin{aligned}
            &1+\beta\left( V_{\lambda,T-1}(P_{1,1}^a)- V_{\lambda,T-1}(P_{0,1}^a)\right)\\
            \leq & 1+\beta (P_{1,1}^a-P_{0,1}^a) \sum_{t=0}^{T-2}[\beta^t]\\
    \end{aligned}
\end{equation}
\end{itemize}
Therefore, we have $\frac{\partial V_{\lambda_T,T}(\omega)}{\partial \omega} \leq \min \{1 +(P_{1,1}^p-P_{0,1}^p) \sum_{t=1}^{T-1}\beta^t, 1+ (P_{1,1}^a-P_{0,1}^a) \sum_{t=1}^{T-1}\beta^t   \}$

\end{proof}

\begin{lemma}\label{lem:partial_value_bound_upper_infinite}
For the infinite residual time horizon $T\rightarrow \infty$, we have $\frac{\partial V_{\lambda_T,T}(\omega)}{\partial \omega} \leq \min \{ 1+\frac{ \beta (P_{1,1}^p-P_{0,1}^p)}{1-\beta}, 1+\frac{ \beta (P_{1,1}^a-P_{0,1}^a)}{1- \beta}   \}$
\end{lemma}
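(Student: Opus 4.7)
The plan is to mirror the argument of Lemma~\ref{lem:partial_value_bound_upper_finite} using the infinite-horizon analogues of the tools invoked there: Eq.~\ref{eq:infinite_partial} for the expression of $\partial V_{\lambda,\infty}(\omega)/\partial \omega$ and Lemma~\ref{lem:infinite_boundary} for the upper bound on value-function differences. Equivalently, one can view the claim as the $T\to\infty$ limit of Lemma~\ref{lem:partial_value_bound_upper_finite}, since $\sum_{t=0}^{T-2}\beta^{t} \to 1/(1-\beta)$; I will take the direct route to keep the argument self-contained.

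First, I would handle the passive branch of Eq.~\ref{eq:infinite_partial}, which is $1+\beta\,\frac{\partial V_{\lambda,\infty}(\tau^1(\omega))}{\partial \tau^1(\omega)}\cdot\frac{\partial \tau^1(\omega)}{\partial \omega}$. By Eq.~\ref{eq:partial_omega_1} the second factor equals $P_{1,1}^{p}-P_{0,1}^{p}$. Writing the first factor as the limit of the difference quotient $[V_{\lambda,\infty}(\tau^1(\omega)+\delta)-V_{\lambda,\infty}(\tau^1(\omega))]/\delta$ and invoking the upper bound of Lemma~\ref{lem:infinite_boundary} gives the pointwise estimate $\partial V_{\lambda,\infty}/\partial \tau^1(\omega) \leq 1/(1-\beta)$. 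Combining, the passive branch is bounded above by $1+\dfrac{\beta(P_{1,1}^{p}-P_{0,1}^{p})}{1-\beta}$.

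Next I would handle the active branch of Eq.~\ref{eq:infinite_partial}, which is $1+\beta\bigl(V_{\lambda,\infty}(P_{1,1}^{a})-V_{\lambda,\infty}(P_{0,1}^{a})\bigr)$. Applying the upper bound of Lemma~\ref{lem:infinite_boundary} with $\omega_1=P_{1,1}^{a}$ and $\omega_2=P_{0,1}^{a}$ directly yields $V_{\lambda,\infty}(P_{1,1}^{a})-V_{\lambda,\infty}(P_{0,1}^{a}) \leq (P_{1,1}^{a}-P_{0,1}^{a})/(1-\beta)$, so the active branch is at most $1+\dfrac{\beta(P_{1,1}^{a}-P_{0,1}^{a})}{1-\beta}$. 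Taking the minimum of the two branch bounds, in the same fashion as the concluding step of Lemma~\ref{lem:partial_value_bound_upper_finite}, gives the claimed inequality.

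There is essentially no hard step here; the calculation is entirely parallel to the finite-horizon version, with the geometric partial sum replaced by its closed form $\sum_{t=0}^{\infty}\beta^{t} = 1/(1-\beta)$. The only mild subtlety is the implicit regularity assumption that the infinite-horizon value function is differentiable in the belief $\omega$ so that the difference-quotient bound of Lemma~\ref{lem:infinite_boundary} transfers to a bound on the partial derivative; this smoothness is used implicitly throughout the paper (e.g.\ in Theorem~\ref{thm:FC_infinite}), so it can be invoked here without further justification.
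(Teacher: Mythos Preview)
Your proposal is correct and follows exactly the paper's own approach: the paper's proof of this lemma consists solely of the remark that it is the $T\to\infty$ analogue of Lemma~\ref{lem:partial_value_bound_upper_finite}, and you carry this out by replacing the finite geometric sums with their closed forms via Lemma~\ref{lem:infinite_boundary} and Eq.~\ref{eq:infinite_partial}. There is nothing to add.
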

\begin{proof}
The proof is similar to the proof for Lemma~\ref{lem:partial_value_bound_upper_finite} of the finite setting. We can get the result with assuming $T\rightarrow \infty$.
\end{proof}

\subsection{Condition for the optimality of Algorithm~\ref{al:alg1} under finite/infinite horizon}
We now give the proof for the Theorem~\ref{thm:FC_infinite} and Theorem~\ref{thm:FC_finite}.

According to the Eq.~\ref{eq:1}, we can compute the belief gap between $\omega$ and 1-time step belief update $\tau^1(\omega)$:
\begin{equation}
    \Delta \omega = \tau^{1}(\omega) - \omega = (P_{1,1}^p - P_{0,1}^p-1)\omega +P_{0,1}^p
\end{equation}

Remark that we could get a strict condition that depends only on arm A. This is because change from policy $\pi^\ast$ to $\pi$ will only lead to a decrease in the value function for other arms as the optimal actions determined by the Whittle index algorithm will be influenced, henceforth the value will be decreased. Consider the single-arm A, as we discussed earlier, the belief state update process is either monotonically increasing or monotonically decreasing.



\begin{figure}[ht]
    \centering
    \includegraphics[width=0.98\linewidth]{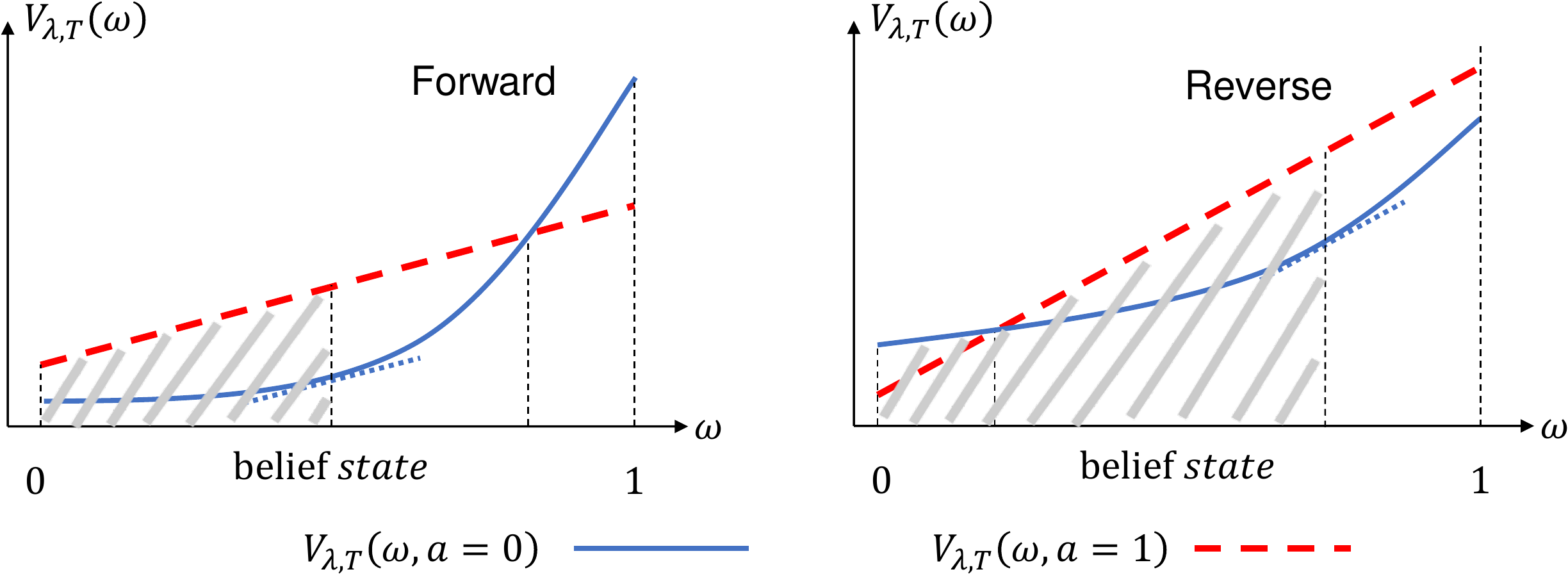}
    \caption{The forward and reverse policy}
    \label{fig:forward_reverse}
\end{figure}
\paragraph{Case 1}: The belief state monotonically increases as the time passed.
Formally, this can be expressed as $\frac{\partial \omega_{t}}{\partial t}>0$, or $\Delta \omega >0$.
We now derive the condition for the optimality of our algorithm for the case 1 under finite time horizon $T$. Consider first (any) period of length $L$, an arm $i$ has not been activated for the past $(L-1)$ time slots. Thus it needs to be pulled at time step $t=L$ according to our algorithm. 
Assume that the residual time horizon is $h$ at time step $t=L$, where we have $h+L=T$.
We move the active action at time step $t=L$ to one slot earlier, then at time step $t=L-1$, the residual time horizon is $h+1$, and assume that belief state is $\omega$ at time step $t=L-1$. We here discuss the finite horizon case because the infinite horizon could be viewed as a special case of the finite horizon setting as $T\rightarrow \infty (h\rightarrow \infty)$.

Because the belief state will increase as the time passed, thus we define the value gap $\Delta V_{\lambda,h}(\omega) =  V_{\lambda,h}(\omega,a=1)-V_{\lambda,h}(\omega,a=0)$ will move from left to the right as the residual time horizon decrease. For the single-arm process, if we can show that the gap difference $\Delta V_{\lambda,h}(\omega)$ increases from left to the right (i.e.,$\Delta V_{\lambda,h}(\omega)$ increases as belief state increases), then this implies that moving the active action that ensuring the fairness at time step $t=L$ to one step earlier (i.e., from right to left) will result in a smaller gap $\Delta V_{\lambda,h}(\omega)$. Thus it is optimal to keep the active action at the end of the period to ensure the fairness constraint. This requires that 
\begin{equation}
    \frac{\partial V_{\lambda,h}(\omega,a=0)}{\partial \omega}\leq \frac{\partial V_{\lambda,h}(\omega,a=1)}{\partial \omega}
\end{equation}

According to the expression for $\lambda$, we have
\begin{equation}\label{eq:deviation_finite}
\resizebox{\linewidth}{!}{$
    \displaystyle
    \begin{aligned}
    \frac{\partial V_{\lambda,h}(\omega)}{\partial \omega} &=
    \begin{cases}
       1 +\beta  \frac{ \partial V_{\lambda,h-1}(\tau^1(\omega))}{\partial \tau^1(\omega)} \frac{\partial \tau^1(\omega)}{\partial \omega} & \text{passive}\\
       1+\beta\left( V_{\lambda,h-1}(P_{1,1}^a)- V_{\lambda,h-1}(P_{0,1}^a)\right) & \text{active}
    \end{cases}
\end{aligned}
$}
\end{equation}

As shown in the gray area of the left Fig.~\ref{fig:forward_reverse}, at time step $t=L-1$, we derive the technical condition for the optimality of our algorithm in the gray area under the infinite residual time horizon:

\begin{equation}
\resizebox{1\linewidth}{!}{$
    \begin{aligned}
        &(P_{1,1}^p-P_{0,1}^p)\left(1+\frac{\beta \Delta_3}{1-\beta} \right){\left(1-\beta(P_{1,1}^a-P_{0,1}^a)\right)} \leq (P_{1,1}^a - P_{0,1}^a)\\
        \rightarrow& (P_{1,1}^p-P_{0,1}^p)\left(1+\frac{\beta \Delta_3}{1-\beta} \right) \leq \frac{P_{1,1}^a - P_{0,1}^a}{1-\beta(P_{1,1}^a-P_{0,1}^a)} \text{ Line 1}\\
        \rightarrow& (P_{1,1}^p-P_{0,1}^p)\left(1+\frac{\beta \Delta_3}{1-\beta} \right) \leq V_{\lambda,h-1}(P_{1,1}^a)- V_{\lambda,h-1}(P_{0,1}^a) \text{ Line 2}\\
        \rightarrow&  (P_{1,1}^p-P_{0,1}^p) \frac{ \partial V_{\lambda,h-1}(\tau^1(\omega))}{\partial \tau^1(\omega)} \leq V_{\lambda,h-1}(P_{1,1}^a)- V_{\lambda,h-1}(P_{0,1}^a) \text{ Line 3}\\      
        \rightarrow&  \frac{\partial \tau^1(\omega)}{\partial \omega} \frac{ \partial V_{\lambda,h-1}(\tau^1(\omega))}{\partial \tau^1(\omega)} \leq \left( V_{\lambda,h-1}(P_{1,1}^a)- V_{\lambda,h-1}(P_{0,1}^a)\right) \text{ Line 4}\\
        \rightarrow&  1 +\beta  \frac{ \partial V_{\lambda,h-1}(\tau^1(\omega))}{\partial \tau^1(\omega)} \frac{\partial \tau^1(\omega)}{\partial \omega}\leq 1+\beta\left( V_{\lambda,h-1}(P_{1,1}^a)- V_{\lambda,h-1}(P_{0,1}^a)\right) \text{ Line 5}\\
        \rightarrow &\frac{\partial V_{\lambda,h}(\omega,a=0)}{\partial \omega}\leq \frac{\partial V_{\lambda,h}(\omega,a=1)}{\partial \omega} \text{ Line 6}
    \end{aligned}
    $}
\end{equation}

Line 1 is obtained via  mathematical transformation. Line 2 is obtained from the lower bound in Lemma~\ref{lem:boundary}. Line 3 is obtained from the Lemma~\ref{lem:partial_value_bound_upper_infinite} when assuming $h\rightarrow \infty$. Line 4 is obtained from Eq.~\ref{eq:partial_omega_1}. Line 5 is obtained from the mathematical transformation. Line 6 is obtained from the Eq.~\ref{eq:deviation_finite}. And $\Delta_3 = \min \{(P_{1,1}^p-P_{0,1}^p),(P_{1,1}^a-P_{0,1}^a)\}$

Similarly, we can derive the technical condition for the finite residual time horizon, which is 

\begin{equation}
\resizebox{1\linewidth}{!}{$
    (P_{1,1}^p-P_{0,1}^p)\left(\Delta_4\beta\sum_{t=0}^{h-2}[\beta^t]+1\right) \leq (P_{1,1}^a - P_{0,1}^a)\sum_{t=0}^{h-2}[\beta^t(P_{1,1}^a-P_{0,1}^a)^t]
    $}
\end{equation}
where $\Delta_4= \min \{ (P_{1,1}^p-P_{0,1}^p), (P_{1,1}^a-P_{0,1}^a)\}$.

\begin{figure}[ht]
    \centering
    \includegraphics[width=0.9\linewidth]{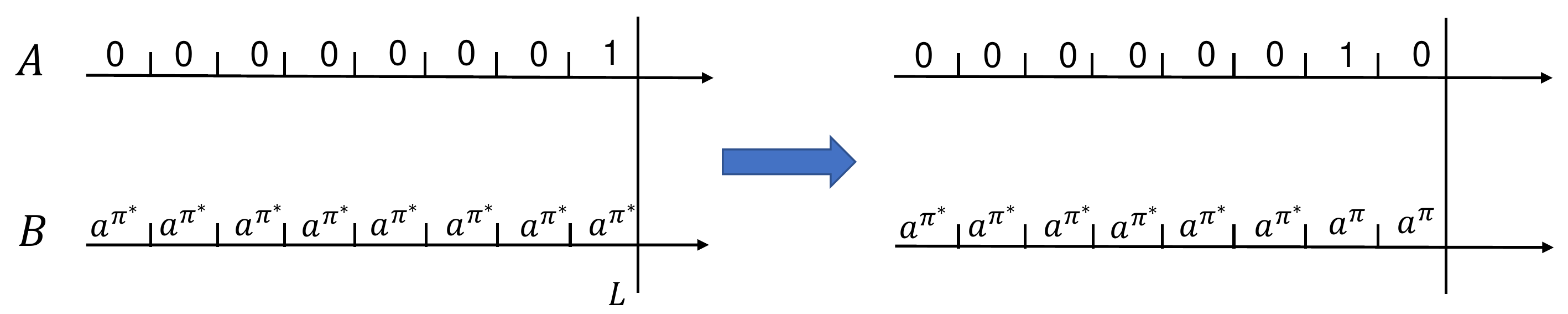}
    \caption{Proof of Theorem~\ref{thm:FC_infinite} and Theorem~\ref{thm:FC_finite}}
    \label{fig:fully_ob_fairness}
\end{figure}

\paragraph{Case 2}: The belief state monotonically decreases as the time passed. Formally, this can be expressed as $\frac{\partial \omega_{t}}{\partial t}<0$, or $\Delta \omega <0$.
Similarly, we can derive the condition for the optimality of our algorithm for the case 2 under finite time horizon $T$. 
Because the belief state will decrease as the time passed, thus we define the value gap $\Delta V_{\lambda,h}(\omega) =  V_{\lambda,h}(\omega,a=1)-V_{\lambda,h}(\omega,a=0)$ will move from right to the left as the residual time horizon decrease. For the single-arm process, if we can show that the gap difference $\Delta V_{\lambda,h}(\omega)$ decreases from right to the left (i.e., as belief state decrease, $\Delta V_{\lambda,h}(\omega)$ decreases), then this implies that moving the active action that ensuring the fairness at time step $t=L$ to one step earlier will result in a larger gap $\Delta V_{\lambda,h}(\omega)$. Thus it is optimal to keep the active action at the end of the period, i.e., at time step $t=L$. this requires that 
\begin{equation}
    \frac{\partial V_{\lambda,h}(\omega,a=0)}{\partial \omega}\geq \frac{\partial V_{\lambda,h}(\omega,a=1)}{\partial \omega}
\end{equation}

As shown in the gray area of the left Fig.~\ref{fig:forward_reverse}, at time step $t=L-1$, we derive the technical condition for the optimality of our algorithm in the gray area under the infinite residual time horizon:

\begin{equation}
\resizebox{1\linewidth}{!}{$
    \begin{aligned}
        &(P_{1,1}^p-P_{0,1}^p)(1-\beta)\Delta_1 \geq (P_{1,1}^a - P_{0,1}^a)\left(1-\beta(P_{1,1}^a - P_{0,1}^a)\right)\\
        \rightarrow& \frac{(P_{1,1}^p-P_{0,1}^p)\Delta_1}{1-\beta(P_{1,1}^a - P_{0,1}^a)} \geq \frac{P_{1,1}^a - P_{0,1}^a}{1-\beta} \text{ Line 1}\\
        \rightarrow& (P_{1,1}^p-P_{0,1}^p)\Delta_1\underset{h\rightarrow \infty}{\lim}\sum_{t=0}^{h-2}\beta^t(P_{1,1}^a-P_{0,1}^a)^t \geq \frac{P_{1,1}^a - P_{0,1}^a}{1-\beta} \text{ Line 2}\\
        \rightarrow& (P_{1,1}^p-P_{0,1}^p)\Delta_1\underset{h\rightarrow \infty}{\lim}\sum_{t=0}^{h-2}\beta^t(P_{1,1}^a-P_{0,1}^a)^t  \geq V_{\lambda,h-1}(P_{1,1}^a)- V_{\lambda,h-1}(P_{0,1}^a) \text{ Line 3}\\
        \rightarrow&  (P_{1,1}^p-P_{0,1}^p) \frac{ \partial V_{\lambda,h-1}(\tau^1(\omega))}{\partial \tau^1(\omega)} \geq V_{\lambda,h-1}(P_{1,1}^a)- V_{\lambda,h-1}(P_{0,1}^a) \text{ Line 4}\\      
        \rightarrow&  \frac{\partial \tau^1(\omega)}{\partial \omega} \frac{ \partial V_{\lambda,h-1}(\tau^1(\omega))}{\partial \tau^1(\omega)} \geq \left( V_{\lambda,h-1}(P_{1,1}^a)- V_{\lambda,h-1}(P_{0,1}^a)\right) \text{ Line 5}\\
        \rightarrow&  1 +\beta  \frac{ \partial V_{\lambda,h-1}(\tau^1(\omega))}{\partial \tau^1(\omega)} \frac{\partial \tau^1(\omega)}{\partial \omega}\geq 1+\beta\left( V_{\lambda,h-1}(P_{1,1}^a)- V_{\lambda,h-1}(P_{0,1}^a)\right) \text{ Line 6}\\
        \rightarrow &\frac{\partial V_{\lambda,h}(\omega,a=0)}{\partial \omega}\geq \frac{\partial V_{\lambda,h}(\omega,a=1)}{\partial \omega} \text{ Line 7}
    \end{aligned}
    $}
\end{equation}

Line 1 is obtained via  mathematical transformation. Line 2 is obtained from the formula for the geometric series as $\beta(P_{1,1}^a-P_{0,1}^a)<1$. Line 3 is obtained from the upper bound in Lemma~\ref{lem:boundary}. Line 4 is obtained from the Lemma~\ref{lem:partial_value_bound} when assuming $h\rightarrow \infty$. Line 5 is obtained from Eq.~\ref{eq:partial_omega_1}. Line 6 is obtained from the mathematical transformation. Line 7 is obtained from the Eq.~\ref{eq:deviation_finite}.

Similarly, we can derive the technical condition for the finite residual time horizon, which is 
\begin{equation}
\resizebox{1\linewidth}{!}{$
    (P_{1,1}^p-P_{0,1}^p)\left(\Delta_2\beta\sum_{t=0}^{h-2}[\beta^t(P_{1,1}^a-P_{0,1}^a)^t]+1\right) \geq (P_{1,1}^a - P_{0,1}^a)\sum_{t=0}^{h-2}\beta^t
    $}
\end{equation}
where $\Delta_2= \min \{ (P_{1,1}^p-P_{0,1}^p), (P_{1,1}^a-P_{0,1}^a)\}$.

When the belief state is in the white area of the passive set. Then we need to consider arm A and other arms in the active set. We give detailed discussion in Appendix~\ref{ap_subsec:general_condition}.

\section{Other Proofs}
\subsection{Derivation of Equation}
The belief state can be calculated in closed form with the given transition probabilities. Let $\tau^u_i(\omega_{i,s}(t))=\omega_{i,s}(t+u)$ denote the $u$-step belief state update of $\omega_{i,s}(t)$ when the unobserved arm $i$ is updated for $u$ consecutive slots without being selected. Formally,
\begin{equation}\label{eq:closed}
\resizebox{1\linewidth}{!}{$
\begin{aligned}
        &\omega_{i,s}(u)= \tau^{u-1}(P_{s,1}^{a,i})  \text{ where }\\ &\tau^{u}_i(\omega) =\frac{P_{0,1}^{p,i}-(P_{1,1}^{p,i}-P_{0,1}^{p,i})^u(P_{0,1}^{p,i}-\omega(1+P_{0,1}^{p,i}-P_{1,1}^{p,i}))}{(1+P_{0,1}^{p,i}-P_{1,1}^{p,i})}
\end{aligned}
$}
\end{equation}

This is because $\tau^u(\omega)=\omega \tau^u(1)+(1-\omega) \tau^{u}(0)$, where $\tau^u(1)$ is the $u$-step transition probability from $1$ to $1$ when arm is unobserved, and $\tau^u(0)$ is the $u$-step transition probability from $0$ to $1$ if the transition matrix $P_{s,s^\prime}^{a} = P_{s,s^\prime}^{p} = P_{s,s^\prime}$. From the eigen-decomposition of the transition matrix $\mathcal{P}$, we can have $$\tau^u(1) = \frac{P_{0,1}+(1-P_{1,1})(P_{1,1}-P_{0,1})^u}{1+P_{0,1}-P_{1,1}}$$ 
and 
$$\tau^u(0)=\frac{P_{0,1}(1-(P_{1,1}-P_{0,1})^u)}{1+P_{0,1}-P_{1,1}}$$ 
and solve it to get
\begin{equation*}
\resizebox{1\linewidth}{!}{$
    \tau^u(\omega) = \frac{P_{0,1}-(P_{1,1}-P_{0,1})^u(P_{0,1}-\omega(1+P_{0,1}-P_{1,1}))}{(1+P_{0,1}-P_{1,1})}
    $}
\end{equation*}
However, as the $P_{s,s^\prime}^{a}\neq P_{s,s^\prime}^{p}$, We have $\omega_{s}^i(1)=P_{o_i,1}^{a,i}$, and $\tau^u(\omega)$ as shown in Eq.~\ref{eq:closed}, which leads to $\omega_{s}^i(u)$.

\subsection{Proof of Whittle index decay under partially observable setting (Theorem~\ref{thm:decay})}
\begin{proof}
We can use the induction to prove the index decay. 
Again, $\lambda_0$ satisfies: $V_{\lambda_0,0}(\omega,a=0)=V_{\lambda_0,0}(\omega,a=1)$ for any belief state $\omega\in [0,1 ]$. We can have $\lambda_0+\omega =\omega$, thus $\lambda_0 = 0$. Similarly, $\lambda_1$ can be solved by assuming equation  $V_{\lambda_1,1}(\omega,a=0)=V_{\lambda_1,1}(\omega,a=1)$:
\begin{equation}
\resizebox{\linewidth}{!}{$
\begin{aligned}
        &\omega + \lambda +\beta (\omega P_{1,1}^p + (1-\omega)P_{0,1}^p)= \omega+\beta (\omega P_{1,1}^a+(1-\omega)P_{0,1}^a)\\
        \rightarrow &\lambda =\beta(\omega(P_{1,1}^a-P_{1,1}^p)+(1-\omega)(P_{0,1}^a-P_{0,1}^p)) 
\end{aligned}
$}
\end{equation}
As it is true in the real-world that $P_{s,1}^a>P_{s,1}^p$, thus we have $\lambda_1>0=\lambda_0$. Now we assume the hypothesis that $\lambda_{u}>\lambda_{u-1}$ for $u \in \{1,\dots, T\}$ holds, we must show: $\lambda_{T+1}>\lambda_T$.
There is a similar work done by~\citet{mate2020collapsing}, however, they only show that $\lambda_T>\lambda_1$ for $\forall T > 1$. Their conclusion is built on the fact that for two non-decreasing, linear functions $f_1(\lambda)$ and $f_2(\lambda)$ of $lambda$ and two points $x_1,x_2$. Whenever $f_1(x_1)\leq f_2(x_1)$ and $f_1(x_2)\geq f_2(x_2)$, and if $\frac{\partial f_1}{\partial x}\geq \frac{\partial f_2}{\partial x}$, then $x_1\leq x_2$ is true. 
We here take advantage of this fact and show a different way to demonstrate the index decay in the partially observable setting such that $\lambda_{T+1} > \lambda_{T}$.

We first use $V_{\lambda,T+1}(\omega,a=0)$ to denote function $f_1^p$, and set $f_2^a=V_{\lambda,T+1}(\omega,a=1)$. Then it is obvious that the value of $f_1^p(\omega,\lambda,t)$ will increase as the exogenous reward $\lambda$ increase, whereas the value of $f_2^a(\omega,\lambda,t)$ will increase slower according to the expression. Thus we have the following:
\begin{equation}
\label{eq:decay_1}
   \frac{\partial f_1^p(\omega,\lambda,t)}{\partial \lambda}>\frac{\partial f_2^a(\omega,\lambda,t)}{\partial \lambda} 
\end{equation}
We can prove $\lambda_{T+1} > \lambda_{T}$ through contradiction. We first assume that $\lambda_{T} \geq \lambda_{T-1}$ holds. Then we could have :
\begin{equation}
    f_1^p(\omega,\lambda_{T-1},T-1)=f_2^a(\omega,\lambda_{T-1},T-1);
\end{equation}
and 
\begin{equation}\label{eq:decay_T}
\resizebox{1\linewidth}{!}{$
    f_1^p(\omega,\lambda_{T},T-1)>f_1^p(\omega,\lambda_{T-1},T-1)=f_2^a(\omega,\lambda_{T-1},T-1)>f_2^a(\omega,\lambda_{T},T-1).
    $}
\end{equation}
Similarly, we can also have
\begin{equation}\label{ineq:decay_T}
    f_1^p(\omega,\lambda_{T},T)=f_2^a(\omega,\lambda_{T},T).
\end{equation}. We can get 
\begin{equation}\label{eq:partial_for1}
    \frac{\partial f_1^p(\omega,\lambda_T,t)}{\partial t}<\frac{\partial f_2^a(\omega,\lambda_T,t)}{\partial t}
\end{equation}

However, we assume that $\lambda_{T+1} \leq \lambda_{T}$. 
According to the definition of $\lambda_{T+1}$, we have
\begin{equation}\label{eq:deay_T+1}
    f_1^p(\omega,\lambda_{T+1},T+1)=f_2^a(\omega,\lambda_{T+1},T+1);
\end{equation} 
It is obvious that $\frac{\partial f_1^p(\omega,\lambda,t)}{\partial t}>0$, and $\frac{\partial f_2^a(\omega,\lambda,t)}{\partial t}>0$ (from Lemma~\ref{lem:inc}). According to Eq.~\ref{eq:partial_for1} and Eq.~\ref{ineq:decay_T}, we can have
\begin{equation}\label{ineq:decay_T+1}
    f_1^p(\omega,\lambda_T,T+1)<f_2^a(\omega,\lambda_T,T+1)
\end{equation}
This requires that from $T$ to $T+1$,the Eq.~\ref{eq:partial_for1} also is satisfied. This is also equivalent to show
\begin{equation}\label{eq:cond_decay}
    f_1^p(\omega,\lambda_T,T+1) < f_2^a(\omega,\lambda_T,T)
\end{equation}
Because 
\begin{equation}
\begin{aligned}
    f_1^p(\omega,\lambda_T,T+1) &= \omega + \lambda_T + \beta V_{\lambda_T, T}(\omega_1)\\
    f_2^a(\omega,\lambda_T,T+1) &= \omega + \beta V_{\lambda_T, T}(\omega_2)
\end{aligned}
\end{equation}
where $\omega_1 = P_{1,1}^p \omega + P_{0,1}^p (1-\omega)$ and $\omega_2 = P_{1,1}^a \omega + P_{0,1}^a (1-\omega)$. Because we have $\omega_2>\omega_1$ and $\omega_2>\omega$, thus we can get
\begin{equation}
    V_{\lambda_T, T}(\omega_2) > V_{\lambda_T, T}(\omega_1) \text{ and } V_{\lambda_T, T}(\omega_2)>V_{\lambda_T, T}(\omega)
\end{equation}

According to the definition of $\lambda_T$, we have:
\begin{equation}
    \begin{aligned}
             f_1^p(\omega,\lambda_T,T)&=f_2^a(\omega,\lambda_T,T)\\
             \omega + \lambda_T + \beta V_{\lambda_{T-1}, T-1}(\omega_1) &= \omega + \beta V_{\lambda_{T-1}, T-1}(\omega_2)
    \end{aligned}
\end{equation}
Replace with Eq.~\ref{eq:cond_decay}, we have:
\begin{equation}
\resizebox{\linewidth}{!}{$
\begin{aligned}
    &\beta V_{\lambda_{T-1}, T-1}(\omega_2) -\beta V_{\lambda_{T-1}, T-1}(\omega_1) + \beta V_{\lambda_{T}, T}(\omega_1) < \beta V_{\lambda_{T}, T}(\omega_2) \\
    \rightarrow & V_{\lambda_{T-1}, T-1}(\omega_2) - V_{\lambda_{T}, T}(\omega_2) <  V_{\lambda_{T-1}, T-1}(\omega_1) -  V_{\lambda_{T}, T}(\omega_1)
\end{aligned}
$}
\end{equation}
It is equivalent to show 
\begin{equation}
\begin{aligned}
    &\frac{\partial (V_{\lambda_{T-1}, T-1}(\tau(\omega)) - V_{\lambda_{T}, T}(\tau(\omega)))}{\partial \tau(\omega)}\frac{\partial \tau(\omega)}{\partial \omega} < 0
\end{aligned}
\end{equation}
This is intuitively correct but we leave this as future work to show whether a condition is required to make it always true, as this is not relevant to the fairness constraint in this paper, we just want to show that it is difficult to derive the whittle index value in the finite horizon case.

As $\lambda_{T+1} \leq \lambda_{T}$, according to  Eq.~\ref{eq:deay_T+1} and Eq.~\ref{eq:decay_1}, we can have
\begin{equation}\label{eq:decay_4}
   f_1^p(\omega,\lambda_{T},T+1)\geq f_2^a(\omega,\lambda_{T},T+1) 
\end{equation}

We have Eq.~\ref{eq:decay_4} and Eq.~\ref{ineq:decay_T+1} conflict with each other. Henceforth $\lambda_{T+1}>\lambda_T$.
\end{proof}

\subsection{Proof of Theorem~\ref{thm:q_learning}}\label{ap_subsec:proof_q_learning}
\begin{proof}
We provide our proof which is based on the work by~\citet{biswas2021learn}. Let set $\phi^{\ast}$ to be the set of actions containing the $k$ arms with the highest-ranking values of $Q_i(s,a=1,l)-Q_i(s,a=0,l)$, and any $k$ arms that aren't among the top k are included in the set $\phi^{\prime}$.
Let $\phi^{-,\ast}$ and $\phi^{-,\prime}$ denote the set that includes all of the arms except those in set $\phi^{\ast}$ and $\phi^{\prime}$, respectively. We add the subscript $i$ here in order to avoid ambiguity in the Q-values of distinct arms $i$ at a given state. We could have:
\begin{equation}
\begin{aligned}
    \sum_{i^{\ast}\in \phi^{\ast}}\left[  Q_{i^{\ast}}^{\ast}(s_{i^{\ast}},a_{i^{\ast}}=1,l_{i^{\ast}})-Q_{i^{\ast}}^{\ast}(s_{i^{\ast}},a_{i^{\ast}}=0,l_{i^{\ast}}) \right] \geq \\
    \sum_{j\in \phi^{\prime}}\left[  Q_{j}^{\ast}(s_j,a_j=1,l_j)-Q_{j}^{\ast}(s_j,a_j=0,l_j)\right]
\end{aligned}
\end{equation}
\begin{equation}
    \begin{aligned}
    \sum_{i^{\ast}\in \phi^{\ast}} Q_{i^{\ast}}^{\ast}(s_{i^{\ast}},a_{i^{\ast}}=1,l_{i^{\ast}})+\sum_{j\in \phi^{\prime}}Q_{j}^{\ast}(s_j,a_j=0,l_j) \geq \\
    \sum_{j\in \phi^{\prime}}  Q_{j}^{\ast}(s_j,a_j=1,l_j)+\sum_{i^{\ast}\in \phi^{\ast}} Q_{i^{\ast}}^{\ast}(s_{i^{\ast}},a_{i^{\ast}}=0,l_{i^{\ast}}) 
    \end{aligned}
\end{equation}
Adding $\underset{i\notin \phi^{\ast} \& i\notin \phi^{\prime}}{\sum}  Q_{i}^{\ast}(s_i,a_i=0,l_i)$ on both sides,
\begin{equation}\label{eq:equivalence}
    \begin{aligned}
        \sum_{i^{\ast}\in \phi^{\ast}} Q_{i^{\ast}}^{\ast}(s_{i^{\ast}},a_{i^{\ast}}=1,l_{i^{\ast}})+\sum_{j\in \phi^{-,\ast}}Q_{j}^{\ast}(s_j,a_j=0,l_j) \geq \\
        \sum_{i\in \phi^{\prime}}  Q_{i}^{\ast}(s_i,a_i=1,l_i)+\sum_{j\in \phi^{-,\prime}} Q_{j}^{\ast}(s_{j},a_{j}=0,l_{j})
    \end{aligned}
\end{equation}
Thus from Equation~\ref{eq:equivalence}, we can see that taking intervention action in the action set
As can be seen from Equation~\ref{eq:equivalence}, adopting intervention action for the arms in the set $\phi^{\ast}$ would maximizes $\left\{ \sum_{i=1}^N Q_i^\ast(s,a,l)\right\}$.
\end{proof}

\subsection{Proof of Theorem~\ref{thm:Q_2}}
\label{subsec:thm:Q_2}
\begin{proof}
The key to the convergence is contingent on a particular sequence of episodes observed in the real process~\cite{watkins1992q}. The first condition is easy to be satisfied as to the presence of the fairness constraint. It is a reasonable assumption under the $\epsilon$-greedy action selection mechanism, that any state-action pair can be visited an unlimited number of times as $T\rightarrow \infty$. The second condition has been well-studied in~\cite{hirsch1989convergent,watkins1992q,jaakkola1994convergence}, and it guarantees that when the condition is met, the Q-value converges to the optimal $Q^{\ast}(s,a,l)$. As a result, $Q_i(s,a=1,l)-Q_i(s,a=0,l)$ converges to $Q_i^{\ast}(s,a=1,l)-Q_i^{\ast}(s,a=0,l)$. Also, $Q_i^{\ast}(s,a=1,l)-Q_i^{\ast}(s,a=0,l)$ is the calculated Q-Learning based Whittle index, and choosing top-ranked arms based on these values would lead to an optimal solution.

\end{proof}

\section{General Condition}\label{ap_sec:general_condition}
\subsection{A general condition for Theorem~\ref{thm:FC_infinite_fully}}\label{proof:fully_obs}\label{ap_subsec:general_condition_fully}
During a time interval, we consider the two arms A and B as shown in Figure~\ref{fig:fully_ob_fairness}, B is the k-th ranked arm as shown in Algorithm~\ref{al:alg1}. We assume that the optimal policy is $\pi^\ast$, which only considers the fairness constraint when it is violated at the end of interval, implying that arm $A$ has not been activated in the past $(L-1)$ slots as shown in left. We move the action $a_L=1$ to one slot earlier, where action $a_L$ is to ensure the fairness constraint. We denote this policy as $\pi$ shown in the right. We first consider the fully observable setting here. 
Assume that the state of arm A in time step $L-1$ is $s_A$, and the state of arm B is $s_B$, and $s_A,s_B\in \{0,1 \}$. We let the corresponding Whittle index for arm A is $\lambda_A$, and $\lambda_B$ for arm B. We can calculate the value function $V_{\lambda_i,\infty}(\omega_i)$ for arm A from the time slot $L-1$ under the policy $\pi^\ast$, we have
\begin{equation}\label{eq:fully_1}
\resizebox{\linewidth}{!}{$
    \begin{aligned}
        V_{\lambda_A,\infty}(s_A)=s_A+\lambda_A+\beta\{P_{1,1}^a s_A+P_{0,1}^p(1-s_A)+
        \beta[(P_{1,1}^a s_A\\
        +P_{0,1}^p(1-s_A))V_{\lambda_A,\infty}(P_{1,1}^a) +(1-P_{1,1}^a s_A-P_{0,1}^p(1-s_A))V_{\lambda_A,\infty}(P_{0,1}^{a})  ]   \}
    \end{aligned}
    $}
\end{equation}
Note that $V_{\lambda,\infty}(\omega)=\omega V_{\lambda,\infty}(1)+(1-\omega)V_{\lambda,\infty}(0)$. Similarly, we can get the value function for the arm A under the policy $\pi$:
\begin{equation}\label{eq:fully_2}
\resizebox{\linewidth}{!}{$
    \begin{aligned}
        V_{\lambda_A,\infty}(s_A)=s_A+\beta(s_A V_{\lambda_A,\infty}(P_{1,1}^a)+(1-s_A)V_{\lambda_A,\infty}(P_{0,1}^a))
    \end{aligned}
    $}
\end{equation}
Similarly, the value function for the arm B under policy $\pi^\ast$:
\begin{equation}\label{eq:fully_3}
\resizebox{1\linewidth}{!}{$
    \begin{aligned}
        V_{\lambda_B,\infty}(s_B)=s_B+\beta(s_B V_{\lambda_B,\infty}(P_{1,1}^a)+(1-s_B)V_{\lambda_B,\infty}(P_{0,1}^p)),
    \end{aligned}
    $}
\end{equation}
and the value function for the arm B under policy $\pi$:
\begin{equation}\label{eq:fully_4}
\resizebox{1\linewidth}{!}{$
    \begin{aligned}
        V_{\lambda_B,\infty}(s_B)=\lambda_B+s_B+\beta (V_{\lambda_B,\infty}( s_B P_{1,1}^p+(1-s_B)P_{0,1}^p)),
    \end{aligned}
    $}
\end{equation}
Let the Eq.~\ref{eq:fully_1} minus Eq.~\ref{eq:fully_2} and Eq.~\ref{eq:fully_3} minus Eq.~\ref{eq:fully_4}, and then sum these two values to let it greater than 0, we then get the general condition for theorem~\ref{thm:FC_infinite_fully}. The explicit calculation for the value function can be found in~\citep{liu2010indexability}.

\subsection{General Condition for Theorem~\ref{thm:FC_infinite} and Theorem~\ref{thm:FC_finite}}
\label{ap_subsec:general_condition}
This is similar to the fully observable case~\ref{proof:fully_obs}.
Assume the optimal action set that calculated by Whittle index policy is $A$, and the last several action set that is used to satisfy the fairness constraints is $B$, where we have action $b\in B$ for arm $i$. As shown in Figure~\ref{fig:NIB}.  
\begin{figure}[th]
    \centering
    \includegraphics[width=0.85\linewidth]{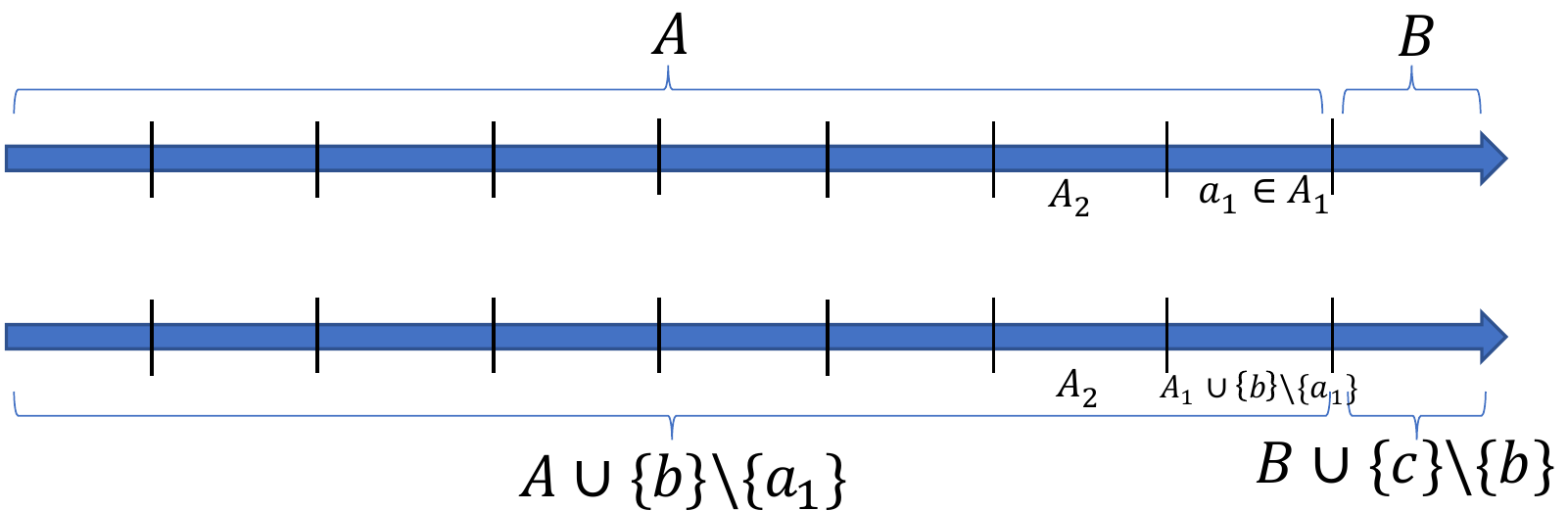}
    \caption{Proof of Theorem~\ref{thm:FC_infinite}}
    \label{fig:NIB}
\end{figure}
If we shift the action $b\in B$ at the end of time slot to one slot earlier, and add another action $c$ for arm $j$ into the action set $B$, then according to the belief state update process, the other following action set except for arm $i$ and $j$ will not be influenced. As a result, we only need to consider arm $i$ and arm $j$. If we showed that the modified policy will not have greater value function. again, then we can repeatedly move the action ${b}$ to one slot earlier and show that  the value function will never have a higher value than the original policy. Hence, we can conclude that the modified policy is smaller than the optimal policy. The Whittle index approach is still applicable under the fairness constraint unless the last few slots are used to ensure that the fairness criteria are met. The numerical condition is as follows,

We consider the partially observable setting. Assume that the probability that the arm A is in good state at time slot $L-1$ is $\omega_i$, and arm B is $\omega_j$, and we assume the corresponding Whittle index for arm A is $\lambda_i$, and $\lambda_j$ for arm B. We can calculate the value function $V_{\lambda_i,\infty}(\omega_i)$ from the time slot $L-1$ for the policy $\pi^\ast$, we have
\begin{equation}\label{eq:eq_pi^ast}
\begin{aligned}
      V_{\lambda_i,\infty}(\omega_i)=\lambda_i+\omega_i+\beta(\omega_i P_{1,1}^p+(1-\omega_i)P_{0,1}^p) \\
    +\beta^2\left\{ (A+B)V_{\lambda_i,\infty}(1) + (C+D)V_{\lambda_i,\infty}(0) \right\}      
\end{aligned}
\end{equation}
where $A=P_{1,1}^p(\omega_i P_{,1}^p+(1-\omega_i)P_{0,1}^p)$, $B=P_{0,1}^a(1-\omega_i P_{1,1}^p-(1-\omega_i)P_{0,1}^p)$, $C=(1-P_{1,1}^a)(\omega_i P_{1,1}^p+(1-\omega_i)P_{0,1}^p)$ and $D=(1-P_{0,1}^a)(1-\omega_i P_{1,1}^p-(1-\omega_i)P_{0,1}^p) $.

Similarly, we can get the value function for the arm A under the policy $\pi$:
\begin{equation}\label{eq:eq_pi}
\begin{aligned}
      V_{\lambda_i,\infty}(\omega_i)=\omega_i+\beta(\omega_i P_{1,1}^a+(1-\omega_i)P_{0,1}^a+\lambda_i) \\
    +\beta^2\left\{ (A+B^\prime)V_{\lambda_i,\infty}(1) + (C^\prime+D^\prime)V_{\infty}(\lambda_i,0) \right\}, 
\end{aligned}
\end{equation}
where $B^\prime=B-P_{0,1}^a+P_{0,1}^p$, $C^\prime=C+\omega_i(P_{1,1}^a-P_{1,1}^p)+(1-\omega_i)(P_{0,1}^a-P_{0,1}^p)$ and $D^\prime=D+\omega_i(P_{1,1}^p-P_{1,1}^a)+(1-\omega_i)(P_{0,1}^p-P_{0,1}^a)+(P_{0,1}^a-P_{0,1}^p)$.
Let Eq.~\ref{eq:eq_pi^ast} minus Eq.~\ref{eq:eq_pi}, we can have
\begin{equation}\label{eq:change_A}
\resizebox{\linewidth}{!}{$
\begin{aligned}
            \lambda_i(1-\beta) + \beta(\omega_i(P_{1,1}^p-P_{1,1}^a)+(1-\omega_i)(P_{0,1}^p-P_{0,1}^a))+\\ \beta^2((P_{0,1}^a-P_{0,1}^p)V_{\lambda_i,\infty}(1)+(P_{0,1}^p-P_{0,1}^a)V_{\lambda_i,\infty}(0))
\end{aligned}
$}
\end{equation}
Similarly, for the arm B, we can get the change in the value function from the policy $\pi^\ast$ to $\pi$:
\begin{equation}\label{eq:change_B}
\resizebox{\linewidth}{!}{$
\begin{aligned}
         \lambda_j(\beta-1) - \beta(\omega_j(P_{1,1}^p-P_{1,1}^a)-(1-\omega_j)(P_{0,1}^p-P_{0,1}^a))+\\ \beta^2((P_{0,1}^a-P_{0,1}^p)V_{\lambda_j,\infty}(1)+(P_{0,1}^p-P_{0,1}^a)V_{\lambda_j,\infty}(0))   
\end{aligned}
$}
\end{equation}
We sum Eq.~\ref{eq:change_A} and Eq.~\ref{eq:change_B} and let it greater than $0$, then this is the condition that the optimal policy of theorem~\ref{thm:FC_infinite} for RMAB with the fairness constraint is to select the arm to play when the fairness constraint is violated at the time interval under the partially observable setting. Explicit computation of the value function can be found in~\citep{liu2010indexability}.

\begin{figure*}[ht]
  \centering
  \begin{minipage}[b]{\linewidth}
  \includegraphics[width=0.33\linewidth]{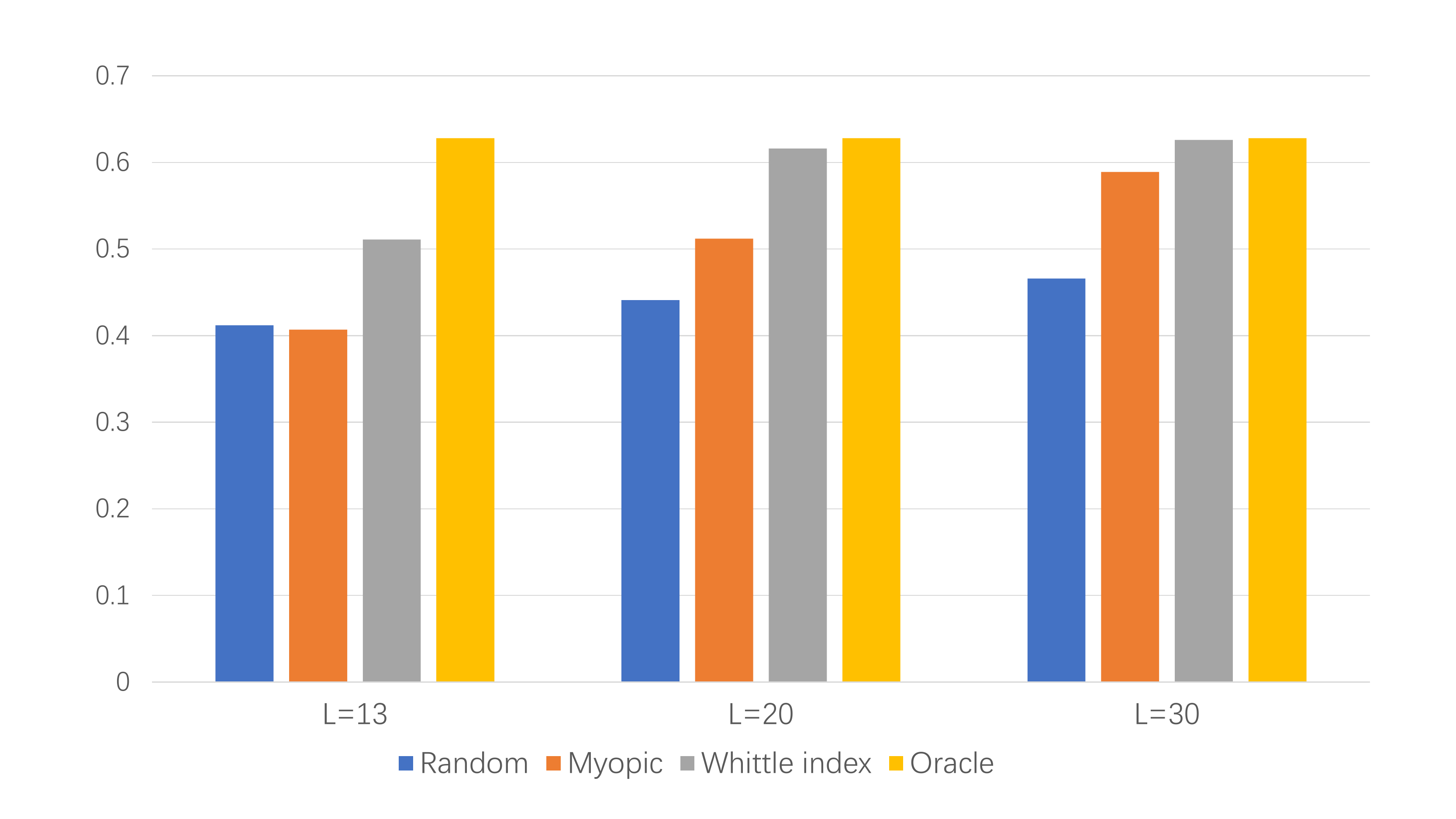} 
  \includegraphics[width=0.33\linewidth]{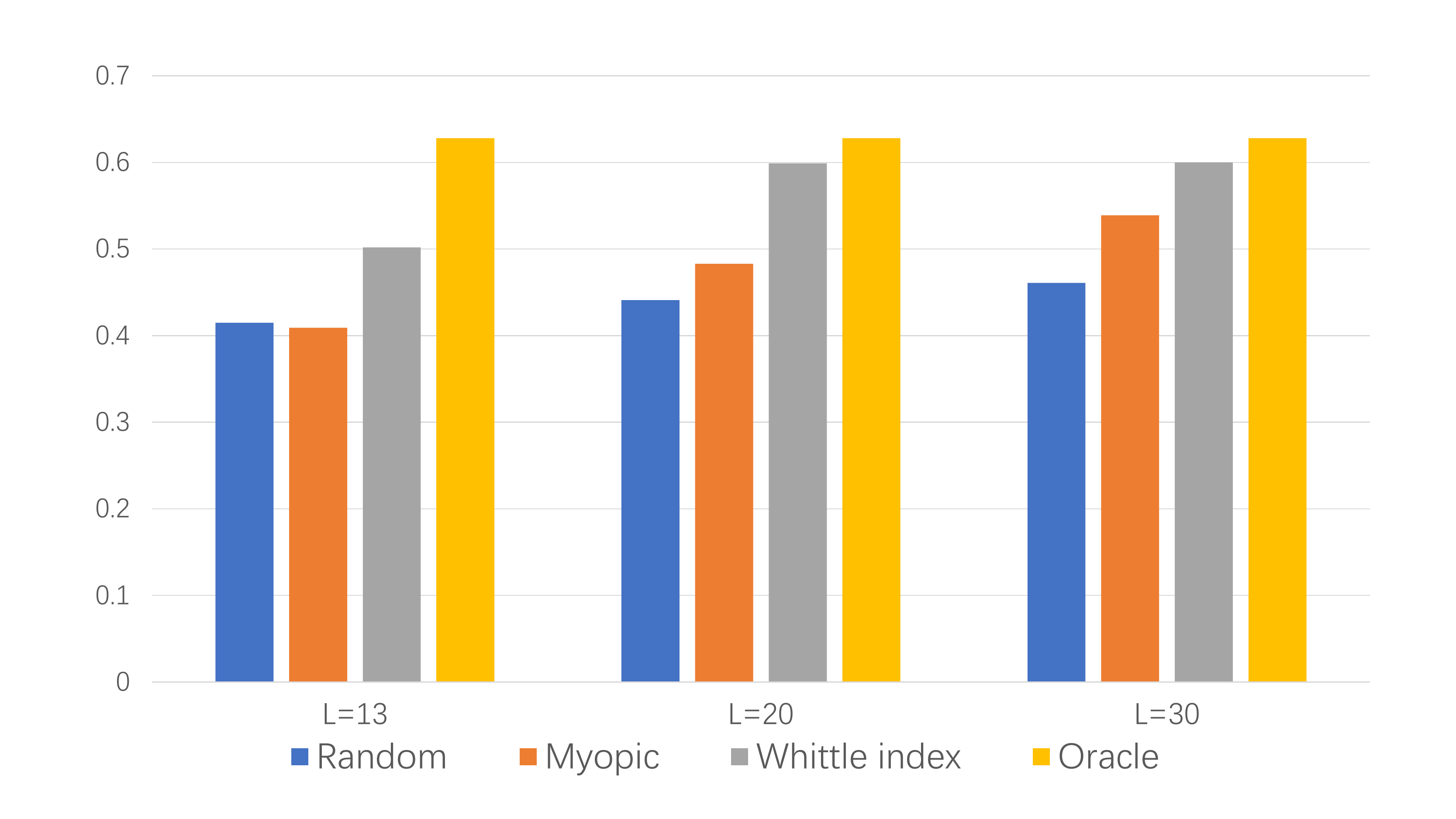} 
  \includegraphics[width=0.33\linewidth]{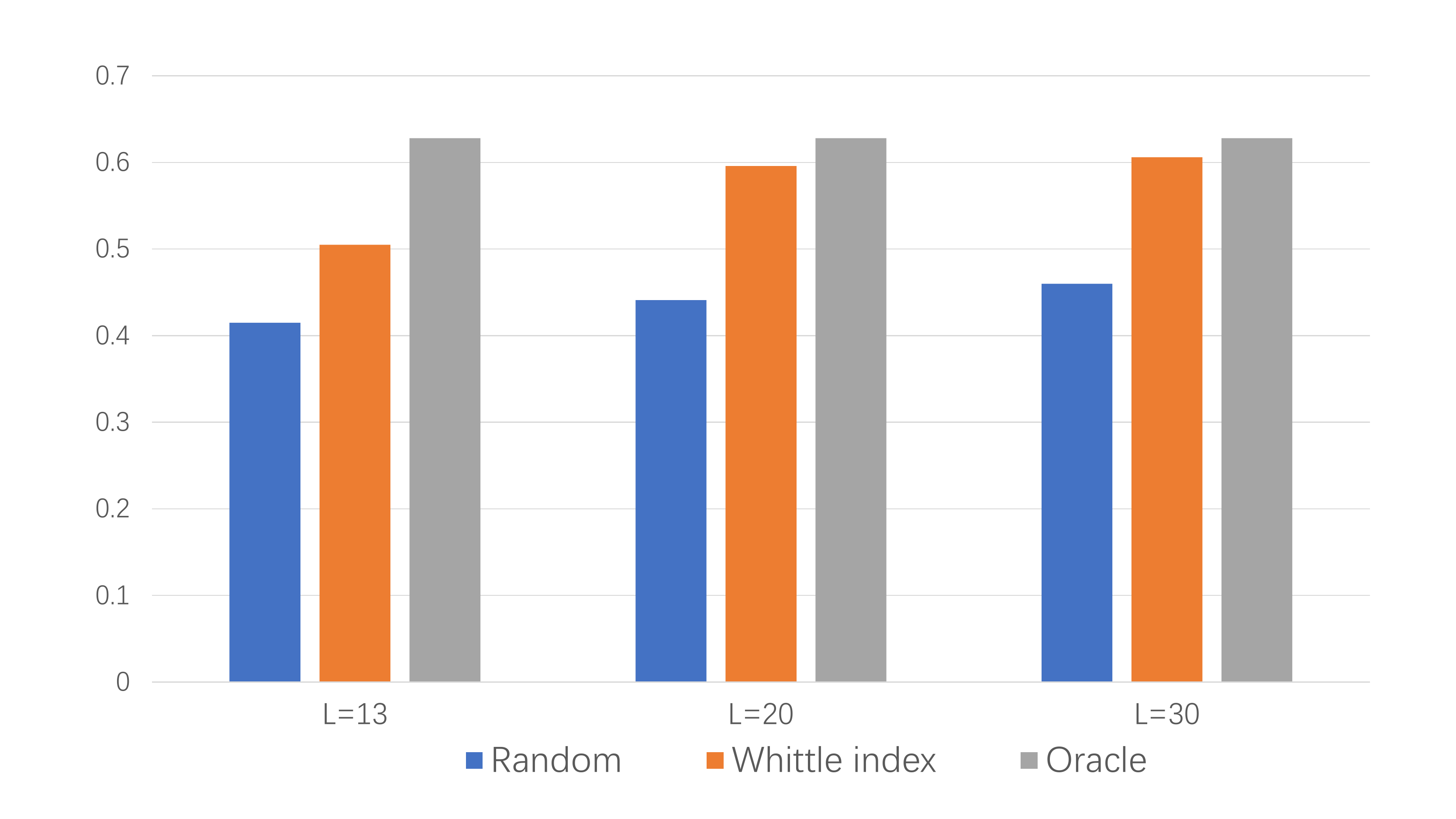}
  \end{minipage}
  \caption{The average reward of each arm over the time length $T=1000$ with small penalty for the violation of the fairness constraint.}
  \label{fig:sensity}
\end{figure*}

\begin{figure*}[ht]
  \centering
  \begin{minipage}[b]{\linewidth}
  \includegraphics[width=0.33\linewidth]{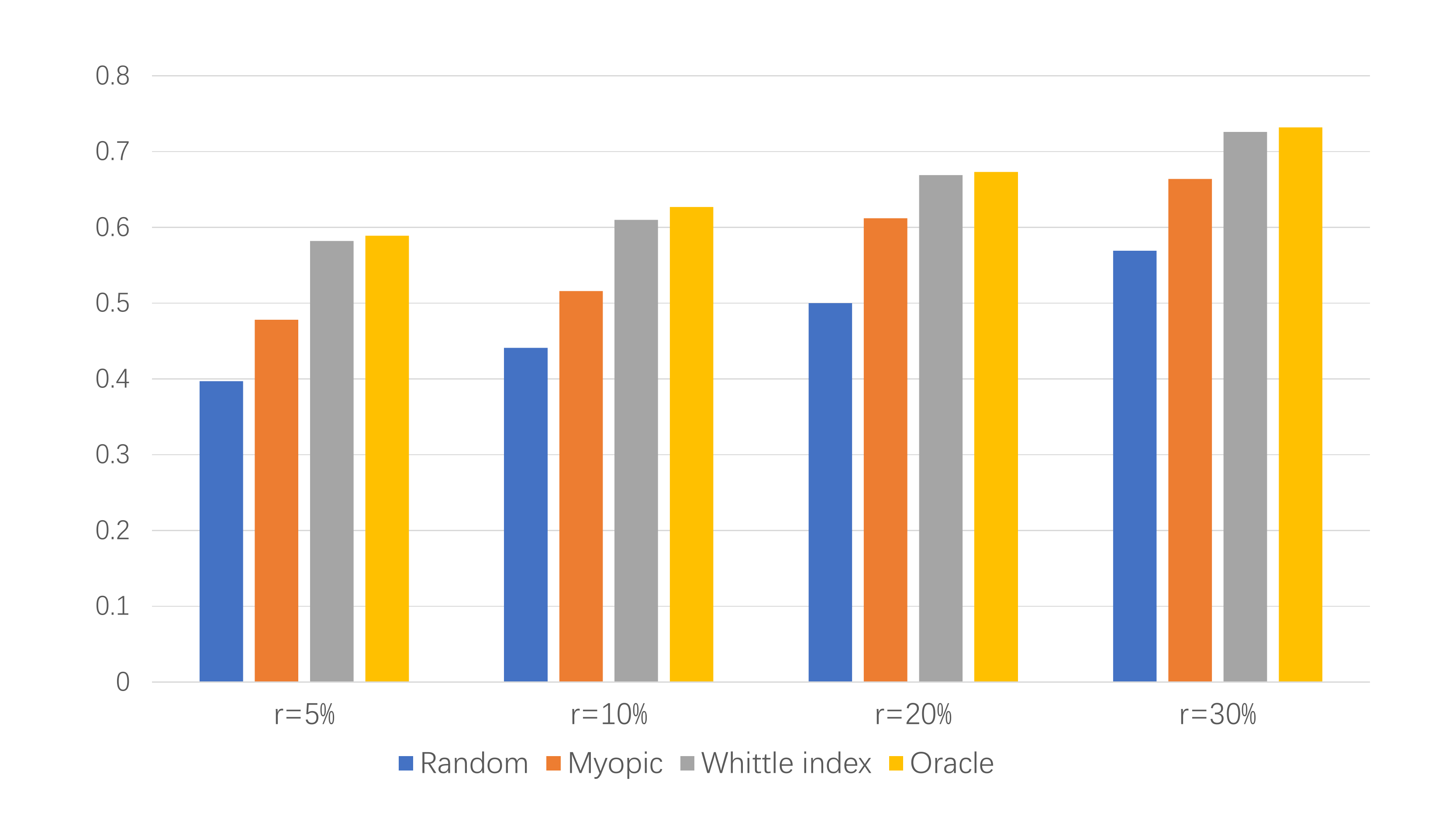} 
  \includegraphics[width=0.33\linewidth]{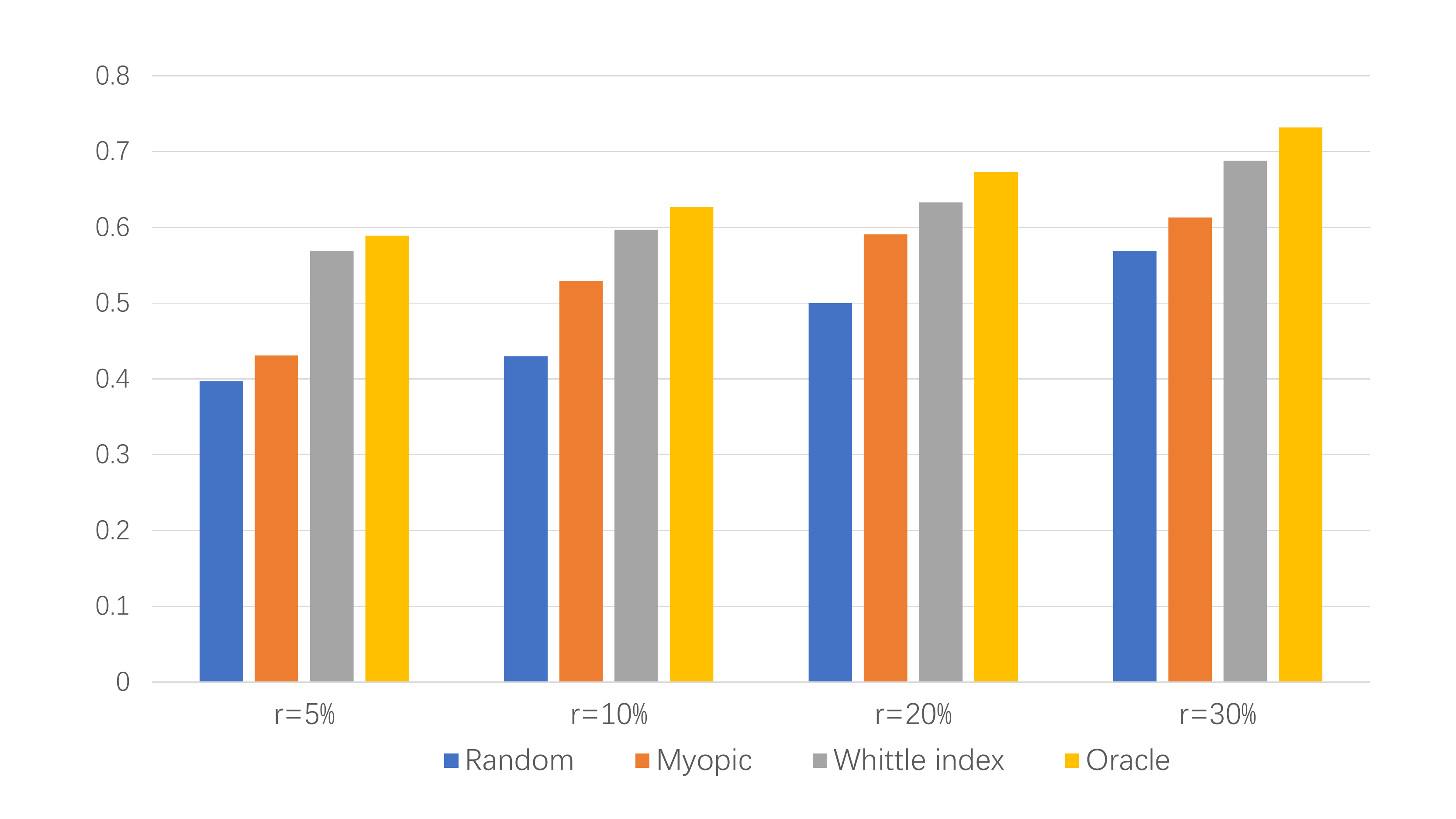} 
  \includegraphics[width=0.33\linewidth]{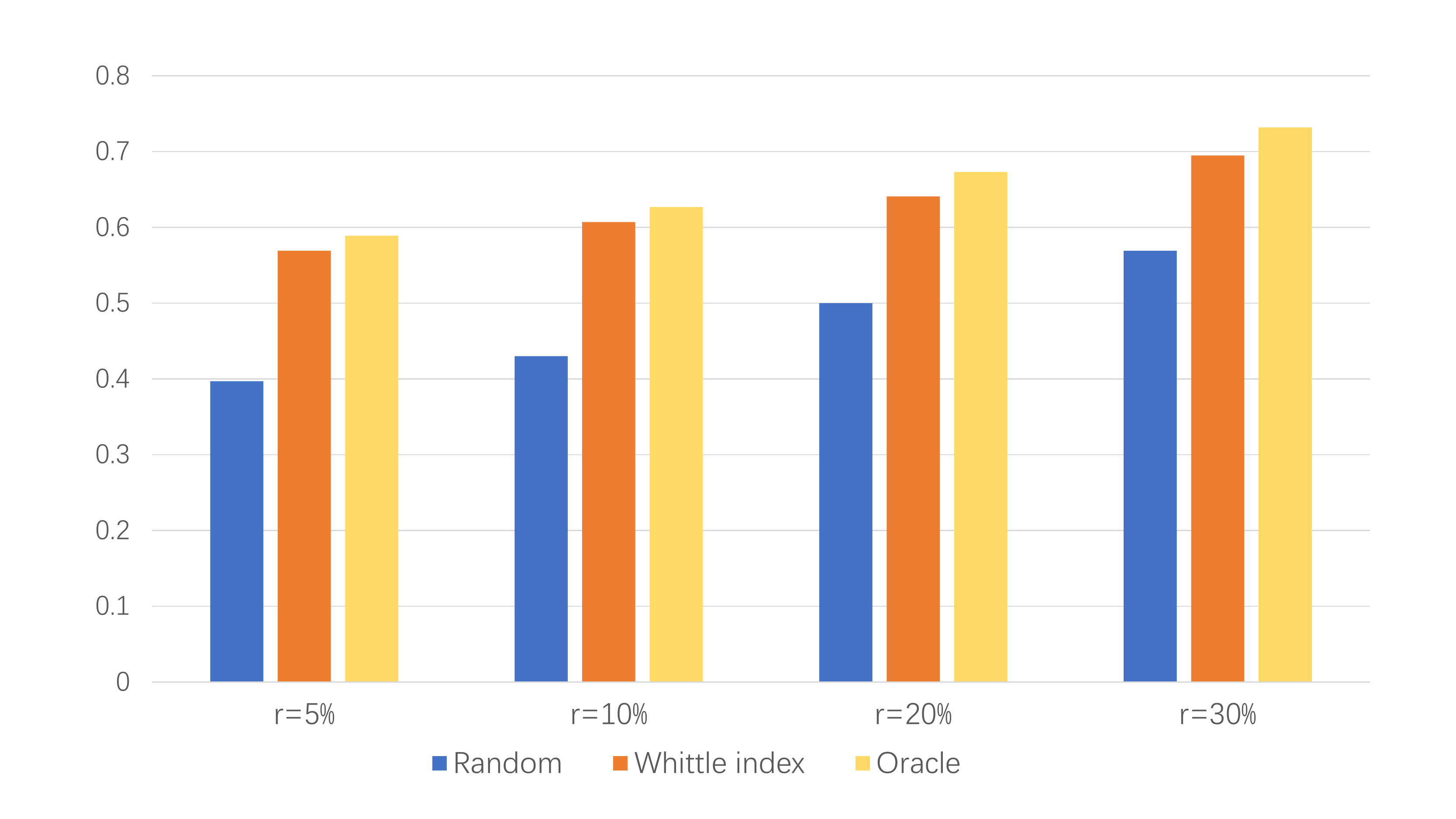}
  \end{minipage}
  \caption{The average reward of each arm over the time length $T=1000$ with small penalty for the violation of the fairness constraint.}
  \label{fig:k_level}
\end{figure*}

\section{Visualization of algorithm}\label{app:vis}
We give a visualization of our proposed Whittle index based approach to solve the fairness constraint in Figure~\ref{fig:Belief_vis}.
\begin{figure}[ht]
    \centering
    \includegraphics[width=0.98\linewidth]{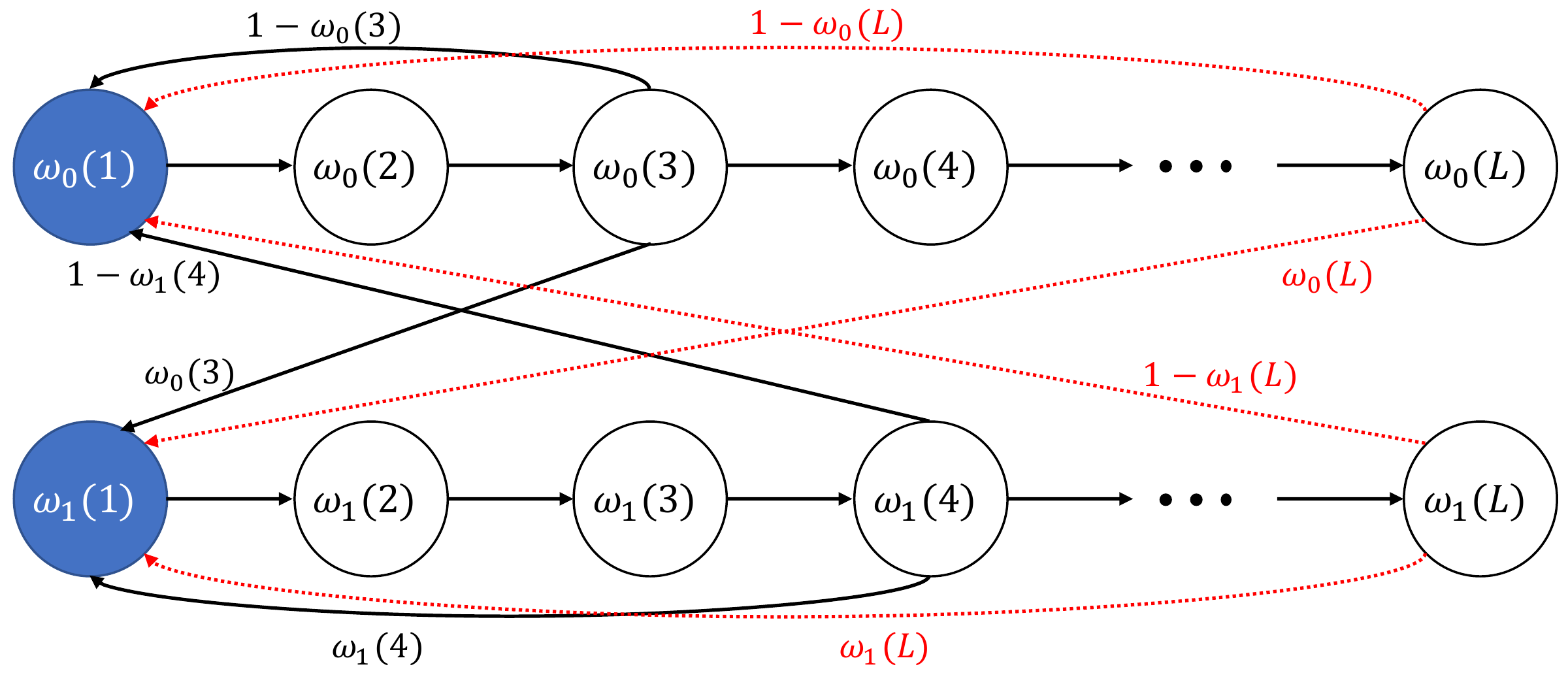}
    \caption{Visualization of Whittle index approach with fairness constraints.}
    \label{fig:Belief_vis}
\end{figure}

The belief state MDP works as follows: initially, after an action, the state $s\in\{ 0,1\}$ of the selected arm is observed. Then the belief state then changes to $P_{s,1}^a$ one slot later, which is represented as the blue node at the head of the chain. Subsequent passive actions cause the belief state evolves according to the initial observation in the same chain. Then if the arm is activated again under the proposed algorithm, it will transit to the head of one of the chains with the probability according to its belief state as shown in the black arrow. If the arm's fairness constraint is not met, i.e., it has not been chosen in the last $L-1$ time slots, it will be activated at the time slot $L$, and go to the head of one of the chains (as shown by the red dashed arrow).

\section{Additional Results}
\label{app:additional}
\subsection{Fairness Constraints Strength}
In this section, we provide the average reward results for different fairness constraints, and see how they influence the overall performance. 
The strength of fairness restrictions is represented by the combination of $L$ and $\eta$.
For instance, $\eta$ is a parameter to determine the lowest bound of the number of times an arm should be activated in a decision period of length $L$. Smaller $L$, on the other hand, indicates that a strict fairness constraint should be addressed in a shorter time length. For ease of explanation, we fix the value of $\eta=2$, which means that an arm will be activated twice in any given time steps of length $L$. We can change the value of $L$ to measure the fairness constraint level.
We investigate three different categories of fairness constraint strength as follows,
\begin{itemize}
    \item \textbf{Strong level}: The strong fairness constraints impose a strict restrictions on the action. Here we assume that the strong fairness constraints $L$ satisfy $\frac{k\times L}{N}=1.3$, this can translate to at most $30\%$ arms can be engaged twice when before all arms have been pulled previously.
    \item \textbf{Medium level}: We define the medium fairness constraints by solving: $\frac{k\times L}{N}=2$.
    \item \textbf{Low level}: The low strength of fairness constraints can be interpreted as a low fairness restriction on the distribution of the resources, i.e., we have $\frac{k\times L}{N}=3$, which means all arms will receive the health intervention before each arm has been activated three times on average.
\end{itemize}
We provide the average reward results in Figure~\ref{fig:sensity}. Again, the left graph shows the performance of Whittle index approach with fairness constraint when the transition model is known, the middle graph presents the result of the Thompson sampling-based approach for Whittle index calculation, the the right graph shows the result for the Q-Learning based Whittle index approach. Our proposed approach can handle fairness constraints at different strength level without sacrificing significantly on the solution quality.

\subsection{Intervention Level}
In this part, we present the performance results for various resource levels where the fairness constraint $L$ is fixed, and we ensure that $k \times L < N$. Here, we let $L=30$, and $N=100$, and we're looking at the performance of the intervention ratio where $\frac{k}{N}=5\%, 10\%, 20\%, 30\%$ respectively. We can see that our proposed approach to solve the fairness constraint can consistently outperform the Random and Myopic baselines regardless of the intervention strength while does not have significant differences when compared to the optimal value without taking fairness constraints into account.



\end{document}